\newcounter{module}
\newenvironment{module}[1][htb]{%
  \let\c@algorithm\c@module
  \renewcommand{\ALG@name}{Module}
  \begin{algorithm}[#1]%
  }{\end{algorithm}
}
\newcounter{pipeline}
\newenvironment{pipeline}[1][htb]{%
  \let\c@algorithm\c@pipeline
  \renewcommand{\ALG@name}{Pipeline}
  \begin{algorithm}[#1]%
  }{\end{algorithm}
}
\title{Bandit Theory and Thompson Sampling-Guided Directed Evolution for Sequence Optimization}
\author[1]{Hui Yuan}
\author[2]{Chengzhuo Ni}
\author[3]{Huazheng Wang}
\author[4]{Xuezhou Zhang}
\author[5]{Le Cong}
\author[6]{Csaba Szepesv\'ari}
\author[7]{Mengdi Wang}
\affil[1,2,3,4,7]{Department of Electrical and Computer Engineering\\ Princeton University}
\affil[5]{Department of Pathology and Department of Genetics\\ Stanford University}
\affil[6]{Department of Computing Science\\ University of Alberta \thanks{Authors' emails are: {\texttt {\{huiyuan, cn10, huazheng.wang, xz7392, mengdiw\}@princeton.edu}}, {\texttt {congle@stanford.edu}}, {\texttt {szepesva@ualberta.ca}}.}}
\begin{document}

\def\mw#1{\textcolor{red}{#1}}
\maketitle

\begin{abstract}
Directed Evolution (DE), a landmark wet-lab method originated in 1960s, enables discovery of novel protein designs via evolving a population of candidate sequences. Recent advances in biotechnology has made it possible to collect high-throughput data, allowing the use of machine learning to map out a protein's sequence-to-function relation. There is a growing interest in machine learning-assisted DE for accelerating protein optimization. Yet the theoretical understanding of DE, as well as the use of machine learning in DE, remains limited.
In this paper, we connect DE with the bandit learning theory and make a first attempt to study regret minimization in DE. We propose a Thompson Sampling-guided Directed Evolution (TS-DE) framework for sequence optimization, where the sequence-to-function mapping is unknown and querying a single value is subject to costly and noisy measurements. TS-DE updates a posterior of the function based on collected measurements. It uses a posterior-sampled function estimate to guide the crossover recombination and mutation steps in DE. In the case of a linear model, we show that TS-DE enjoys a Bayesian regret of order $\tilde O(d^{2}\sqrt{MT})$\footnote{$\tilde O(\cdot)$ ignores the logarithmic terms.}, where $d$ is feature dimension, $M$ is population size and $T$ is number of rounds. This regret bound is nearly optimal, confirming that bandit learning can provably accelerate DE. It may have implications for more general sequence optimization and evolutionary algorithms. 
\end{abstract}

\section{Introduction}

Protein engineering means to design a nucleic acids sequence for maximizing a utility function that measures certain fitness or biochemical/enzymatic properties, i.e., stability, binding affinity, or catalytic activity. Due to the combinatorial sequence space and lack of knowledge about the sequence-to-function map, engineering and identifying optimal protein designs were a quite daunting task. It is only until recently that synthesis of nucleic acid sequences and measurement of protein function became reasonably scalable \citep{packer2015methods, yang2019machine}, allowing rational optimization or directed evolution of protein designs. Nonetheless, because of the complex landscape of protein functions and the bottleneck of wet-lab experimentation, this remains a very difficult problem.

Directed evolution (DE), one of the top molecular technology breakthrough in the past century, demonstrate human's ability to engineer proteins at will. 
DE is a method for exploring new protein designs with properties of interest and maximal utility, by mimicking the natural evolution. It works by artificially evolving a population of variants, via mutation and recombination, while constantly selecting high-potential variants \citep{chen1991enzyme, chen1993tuning, kuchner1997directed, hibbert2005directed, turner2009directed, packer2015methods}. The development of directed evolution methods was honored in 2018 with the awarding of the Nobel Prize in Chemistry to Frances Arnold for evolution of enzymes, and George Smith and Gregory Winter for phage display \citep{arnold1998design, smith1997phage, winter1994making}. See Figure \ref{fig:R_M} for illustrations of mutation and crossover recombination.  

\begin{wrapfigure}{r}{0.5\textwidth}
  \begin{center}
    \includegraphics[width=0.45\textwidth]{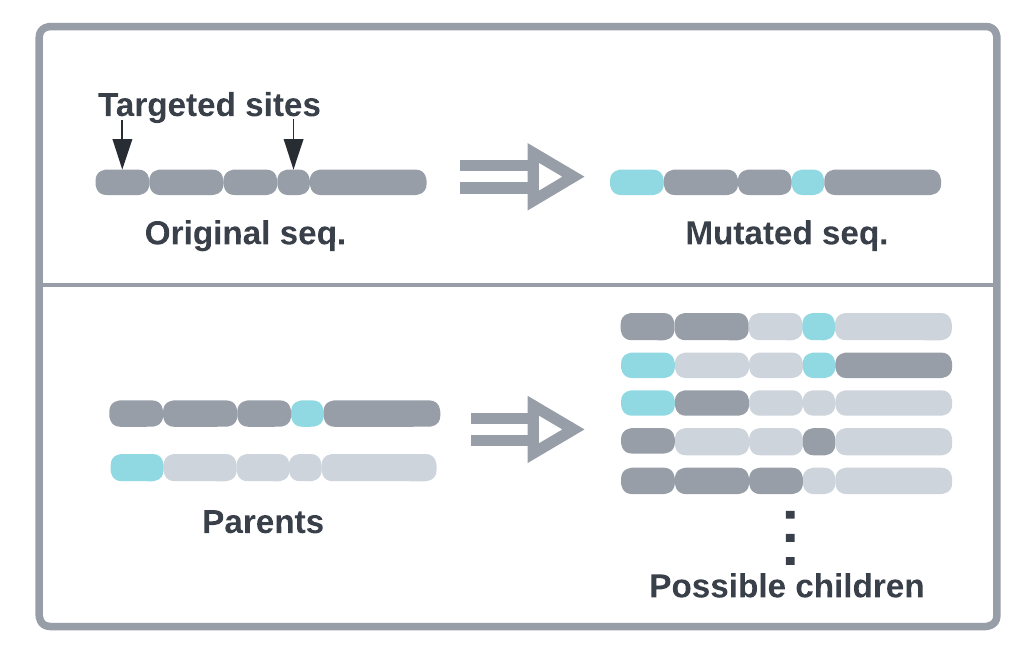}
  \end{center}\vspace{-5pt}
  \caption{Illustration of mutation and crossover recombination.
Mutating a sequence means to replace a targeted or random entry (site) by a random or designated value.
Recombination involves two or multiple sequences. For example, parent sequences can crossover, exchange subsequences and generate children. 
}
  \label{fig:R_M}
\end{wrapfigure}

However, DE often remains expensive and time-consuming. The major considerations  center on cost and data quality. First, the ability to synthesize and mutate new biological sequences have been exponentially improved thanks to synthetic chemistry advances. Second, given a population of sequences $S$, selecting and identifying the set of optimal sequences is straightforward, using low-cost parallel sequencing which works well with pooled selection assays. Third, using pooled measurement to evaluate the average value of protein function (mean fitness) over a population $S$ is generally easy, as such bulk measurements is low-cost and high-quality. Finally, querying $f(x)$ for a given $x$ is often expensive, and the cost adds up quickly if many queries are needed. It can be desirable to perform this procedure in small-scale batches to optimize time and resource consumption.


Such difficulties have motivated scientists to apply machine learning approaches to accelerate DE, beginning with \citet{fox2003optimizing} and followed by many. Recent development of directed evolution have increasingly utilized \textit{in silico} exploration and machine learning beyond experimental approaches \citep{yang2019machine, fannjiang2020autofocused, doppa2021adaptive, shin2021protein, freschlin2022machine}.  While these attempts have proved to be successful in simulation and sometimes in real experiments, little is known about the statistical theory of DE. 

In this paper, a primary objective is to bridge the directed evolution process with bandit learning theory. In particular, we want to express machine learning-assisted DE as a bandit optimization process, with a theoretical justification. Further, we aim to understand how a machine learning model, as simple as linear, can accelerate DE and reduce the overall cost of evaluation. Specifically, we propose a Bayesian bandit model for DE, namely the Thompson Sampling-guided Directed Evolution framework, which combines posterior model sampling with directed mutation and recombination. The theoretical analysis shows that the crossover selection mimics an optimization iteration, and the optimization progress is proportional to a level of population diversity. In the case of the linear model, we establish a Bayesian regret bound $\tilde O(d^{2}\sqrt{MT})$ that depends polynomially on feature dimension $d$ and optimally in batch size $M$ and time steps $T$. We finally harmonize our theoretical analysis with a set of simulation and real-world experimental data.

\section{Related work}

Our analysis is related to the theoretical literature on evolutionary algorithms and linear bandits.
\paragraph{Evolutionary algorithm.}

The success of DE motivated a large body of works on evolutionary algorithms for optimization.  Evolutionary algorithm (EA) \citep{back1996evolutionary} is a large class of randomized optimization algorithms, based on the heuristic of mimicking natural evolution.  Despite many variants, a typical EA usually maintains a population of solutions and improves the solutions by alternating between reproduction step which produces new offspring solutions, and selection step where solutions are evaluated by the objective function and only the good ones are saved to the next round. 
Theoretical understandings of EA are focusing on specific EAs, among which the most well-studied setting is $(1+1)$-EA, with parent population size and offspring population size are both 1 to optimize linear objective function on the Boolean space $\{0,1\}^d$, see \citep{droste2002analysis, he2004study, jagerskupper2008blend, jagerskupper2011combining, lehre2012black, witt2013tight}. EA analysis focuses on optimization and reducing the running time instead of minimizing total regret as in bandit theory. There are other results on population based EAs, such as $(1+\lambda)$-EA \citep{doerr2015optimizing, giessen2016optimal}, $(\mu+1)$-EA \citep{witt2006runtime} and the most general $(\mu+\lambda)$-EA, where $\mu$ and $\lambda$ represent the parent population size and the offspring population size respectively. 
However, this group of works only adopted mutation. The understanding of the role played by recombination in evolutionary algorithms was left as blank in the $(\mu+\lambda)$-EA framework, while our paper provides a population-based regret minimization analysis with both mutation and recombination. 

There are a few works \citep{jansen2002analysis, jansen2005real,  watson2007building, kotzing2011crossover} studying EAs with recombination (which are also called genetic algorithms (GAs)). 
However, their algorithms and analysis are tailored to artificial test objectives and the results are not able to generalize even to linear objectives. Recently, the running time analysis of some natural EAs with recombination has been conducted \citep{oliveto2015improved, oliveto2020tight}, but still their results are constrained under specific objectives such as $\operatorname{ONEMAX}$ and $\operatorname{JUMP}$. We refer readers to the book by \citep{zhou2019evolutionary} for a more comprehensive review of EA.

\paragraph{Linear bandits.}
Bandit is a powerful framework formulating the sequential decision making process under uncertainties. Under this framework, linear bandits is a central and fruitful branch where in each round a learner makes her decision and receives a noisy reward with its mean value modelled by a linear function of the decision, aiming to maximize her total reward (or minimize total regret equivalently) over multiple rounds  \citep{Auer02,li2010contextual,abbasi2011improved}. In the same spirit, the process evolving a population of genetic sequences to maximize a linear utility over the evolution trajectory, while getting access to noisy utility values through evaluating sequences along the way, can be mathematically formulated from the perspective of linear bandits.
One of the main solution in linear bandits is the upper confidence bound-based (UCB) strategy represented by LinUCB \citep{li2010contextual}, where the learner makes decision according to upper confidence bounds of the estimated reward and the accumulated regret is proven to be $\tilde O \left( d \sqrt{T} \right)$. A similar strategy is optimism in the face of uncertainty (OFU) principle in \citet{abbasi2011improved}. The other approach is the Thompson Sampling (TS) strategy, which randomizes actions on the basis of their probabilities to be optimal. \citet{russo2014learning} proved the Bayesian regret of TS algorithm is also of order $\tilde O \left( d \sqrt{T} \right)$. And there are more results on the regret of TS(-like) algorithms solving linear bandits in the frequentist view \citep{agrawal2013thompson, abeille2017linear, hamidi2020worst}. TS is also powerful beyond the scope of linear bandits, such as contextual bandits \citep{agrawal2013thompson}, reinforcement learning \citep{zhang2021feel}.
We also refer readers to the book by \citep{lattimore2020bandit} for a delicate review of bandit theory.

\paragraph{Remark.} It is important to note that our problem is {\it not} a multi-armed bandit problem. In bandits, one can choose actions freely from the full action set. However, in biological experiments, it is expensive to synthesize a new protein design sequence out of thin air. Instead, mutation and recombination are used to generate new designs easily at a low cost. Thus our algorithm can only guide the selection step in the DE process. Its regret is not directly comparable with the regret of multi-arm bandits. To the best of our knowledge, this is the first work that studies the bandit theory and regret bound of mutation and recombination-enabled DE.

\section{Bandit model for directed evolution}

\subsection{Process overview}

We illustrate the Thompson Sampling-guided Directed Evolution (TS-DE) process in Figure \ref{fig:Bandit_DE}. A population $S_t$ at time $t$ consists of $M$ candidate sequences. It evolves via mutation, crossover recombination, selection, and function evaluation to the next generation $S_{t+1}$. The mutation and crossover selection are guided using a learnt function $f_{\tilde \theta_t}$, in order to filter out unwanted candidates and keep only a small batch for costly evaluation. Collected data are fed into a Thompson Sampling module for posterior update of $f_{\tilde \theta_t}$. 
Full details of the mutation, crossover selection, and Thompson Sampling modules will be given in Section 4.

\begin{figure}[h]
    \centering
    \vspace{-2mm}
    \includegraphics[scale=0.6]{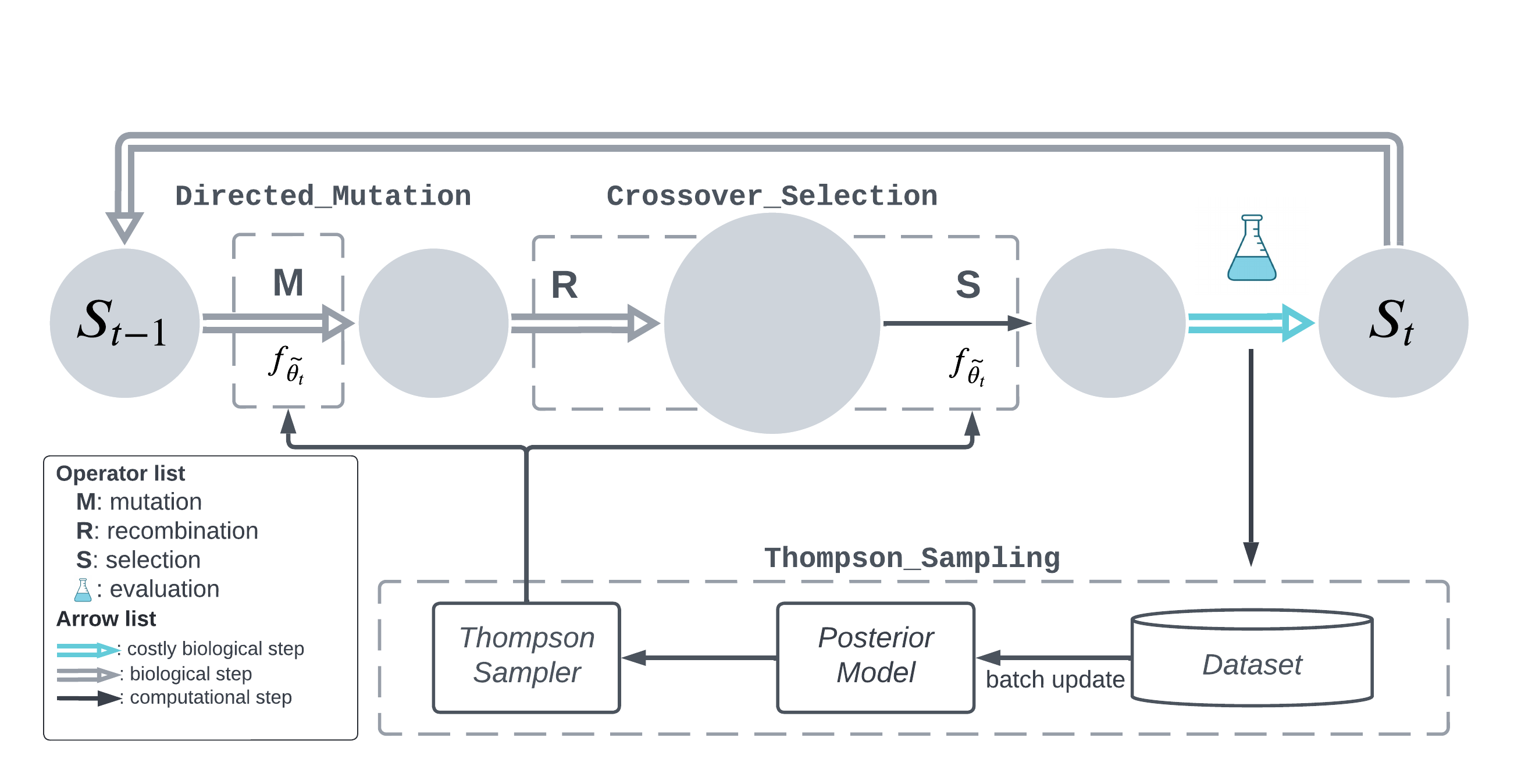}
    \vspace{-2mm}
    \caption{\textbf{Thompson sampling-guided directed evolution}}
    \label{fig:Bandit_DE}
    \vspace{-2mm}
\end{figure}


\subsection{Motif feature, utility model, recombination and mutation operators}

A genetic sequence comprises of functional motifs, i.e., functional subsequences that may encode particular features of protein, also known as protein motifs \citep{ofer2015profet, tubiana2019learning, cunningham2020biophysical}. Such genetic motifs are known to be ``evolutionarily conserved", in the sense that they tend to evolve as units, under mutation and recombination.


Suppose a genetic sequence $seq$ is made up of $d$ genetic motifs, given by 
$
    seq = (seq_{(1)},seq_{(2)}, \cdots, seq_{(d)}).
$
Machine learning models for protein utility prediction are often based on motif features \citep{wittmann2021advances, cunningham2020biophysical, ofer2021language}. 
Let $\mathcal{X}$ be the space of genetic sequences of interest.
We assume that a binary motif feature map is given, defined as follows.

\begin{definition}[Binary Motif Feature Embedding]
 Let $\phi$ be the genetic motif feature map given by:
\begin{equation}
    \phi: \mathcal{X} \to \{0,1\}^{d}, \quad \phi(seq): = (\phi_1(seq_{(1)}), \cdots, \phi_d(seq_{(d)}))
\end{equation}
such that at each dimension $i$, $\phi_i(x_{(i)})$ is a binary feature of motif $x_{(i)}$. 
\end{definition}

The binary motif feature provides a minimalist abstraction for evolutionary processes where $0,1$ correspond to favorable and nonfavorable directions, respectively, for each motif. 
Theoretical analysis for evolutionary optimization algorithms made the same assumption and viewed binary sequence optimization as a fundamental problem  \citep{droste2002analysis, he2004study, jagerskupper2008blend, jagerskupper2011combining, lehre2012black, witt2013tight}.

Since a protein function is largely determined by its motif, it is common to model the protein utility $f: \mathcal{X} \to \mathbb{R}$ as a function of motif features, i.e., 
$
    f(seq) := f_{\theta^{\star}}(x),   x=\phi(seq), \forall seq \in \mathcal{X},
$
under a parameterization by $\theta^{\star}$ \citep{fox2003optimizing, yang2019machine, shin2021protein, freschlin2022machine}.

In this work, we study the most elementary Bayesian linear model, where $f$ is a linear model parameterized by $\theta^*$ with a Gaussian prior, given as follows.

\begin{assumption}
\label{asmp_linearf}
(Linear Bayesian Utility Model) Assume the utility $f_{\theta^{\star}}$ is a linear function parameterized by $\theta^{\star} \in \mathbb{R}^{d}$, which is sampled from a Gaussian prior, i.e.
\begin{equation}
    f_{\theta^{\star}} (x) = \langle \theta^{\star}, x\rangle, \qquad
    \theta^{\star} \sim \mathcal{N}(\mathbf{0}, \lambda^{-1} \mathbf{I} ), \quad \lambda > 0.
\end{equation}
\end{assumption}

Since motifs tend to mutate and recombine with one another in units, it is often sufficient to focus on recombination and mutation on the motif level, rather than on the entry level.
Further, recombination that breaks a motif often result in insignificant low-fitness descendants. 
Therefore, it suffices to focus on motif-level directed evolution for simplicity of presentation and theory. For theoretical simplicity, we define recombination and mutation operators \textbf{on the motif level}:

\begin{definition}[Directed Mutation Operator]
\label{opt:mut}
Let $x$ be the motif feature sequence, $\mathcal{I} 
\subset [d]$ be a collection of targeted sites and $\mu \in (0,1)$ be a mutation rate. The mutation operator $\texttt{Mut}(x, \mathcal{I}, \mu)$ generates a sequence $x^{\prime}$ such that while for $\forall j \not \in \mathcal{I}, x_j^{\prime} = x_j$, for $\forall i \in \mathcal{I}, x_i^{\prime}$ is independently induced to be
        \begin{equation}
        \label{equ:d_mutation}
        \left\{
            \begin{array}{cc}
            x^{\prime}_i \sim \operatorname{unif}(\{0,1\}), &\text{w.p. } \mu,\\
            x^{\prime}_i = x_i, &\text{otherwise.}
            \end{array}
        \right.
       \end{equation}
\end{definition}


\begin{definition}[Recombination Operator]
\label{opt:rcb}
Let $x,y$ be the motif features associated with two parental genetic sequences. The recombination operator $\texttt{Rcb}(x, y)$ generates a child sequence $z$ such that $z_i$'s are independent and 
\begin{equation}
\label{def:rcb_dist} 
    z_i = \left\{ 
    \begin{array}{cc}
         x_i & \text{w.p. } \frac{1}{2}\\
         y_i & \text{w.p. } \frac{1}{2}
    \end{array},
    \right. \forall i \in [d].
\end{equation}
\end{definition}

We remark that Definitions $\ref{opt:mut},\ref{opt:rcb}$ are {\it mathematical simplifications} of their real-world counterparts. In real world, mutation and recombination can take various forms depending on the context. In our analysis, we define them in a minimalist-style to keep theory generalizable and interpretable.

\subsection{Regret minimization problem formulation}

Evaluating the protein function for a design sequence $x$ is a most costly and time-consuming step in protein engineering. In the DE process, we consider that regret is incurred only when sequences are evaluated. We also assume that each evaluation is subject to a Gaussian noise with known variance.


\begin{assumption}
\label{asmp_noise}
(Noisy Feedback) Upon querying the utility of $x$, we get an independent noisy evaluation given by
\begin{equation}
\label{equ:noise}
    u(x) \sim \mathcal{N}( f_{\theta^{\star}} (x), \sigma^2).
\end{equation}
\end{assumption}

Our goal is to maximize the Bayesian regret, i.e., the  cumulative sum of optimality gaps between evaluated sequences and the optimal.
\begin{definition}[Bayesian Regret]
\label{def:bayes_rgt}
Denote by $f_{\theta^{\star}}(x^{\star})$ the optimal utility value over $\mathcal{X}$, $\{x_{t,i}\}_{i=1}^{M}$ are the evaluated individuals in each iteration. Throughout $T$ iteration, the accumulated regret is defined as
\begin{equation*}
    \operatorname{BayesRGT}(T, M) = \mathbb{E}\left[\sum_{t=1}^T \sum_{i=1}^M (f_{\theta^{\star}}(x^{\star}) - f_{\theta^{\star}}(x_{t,i}))\right],
\end{equation*}
where $M$ is number of sequences selected for evaluation per timestep, and $\mathbb{E}$ is taken over the prior of $\theta^{\star}$ and all randomness in the DE process.
\end{definition}

\section{Thompson Sampling-guided directed evolution (TS-DE) }

We restate our goal as to direct a population of genetic sequence to evolve towards higher utility value, until its population-average converges to the optimum $f_{\theta^{\star}}(x^{\star})$. Our knowledge of $f$ is to be learned from noisy evaluations of selected sequences along the way. In this section, by integrating the biological technique -  directed evolution - with Thompson Sampling, a Bayesian bandit method, we propose the Thompson Sampling-guided Directed Evolution algorithm (TS-DE) as shown in Alg.\ref{alg:one-level(PS)}, where in each round Thompson sampling gives an estimate of $\theta^{\star}$, based on which key operators of DE: mutation, recombination and selection are implemented.

\subsection{Crossover-then-selection and directed mutation}

Pairwise crossover is a most common type of recombination in natural evolution. Let $x,y$ be a random pair of parents, and let $z=\texttt{Rcb}(x,y)$ be a child. If given a utility function $f$, we select $z$ only if the child performs better than the parents' average. Module \ref{mod:rcb&slc} formulates this procedure.

\begin{module}
\caption{ $\texttt{Crossover\_Selection}(f,S)$}
\label{mod:rcb&slc}
\begin{algorithmic}[1]
    \STATE \textbf{Inputs:} utility function $f(x) = \langle \theta, x \rangle$, a population of sequences $S$
    \STATE \textbf{Initialization:} $S' \leftarrow \emptyset$ 
    \WHILE{$|S^{\prime}| < |S|$}
        \STATE Sample $x$ and $y$ from $S$ uniformly with replacement.\\
        \STATE \textbf{Recombination: }
         $z \leftarrow \texttt{Rcb}(x, y)$ (Definition \ref{opt:rcb}).\\
        \STATE \textbf{Selection}: $S^{\prime}\leftarrow S^{\prime}\cup\{x'\}$ if $f(z) \geq \frac{f(x)+ f(y)}{2}$. 
    \ENDWHILE
    \STATE \textbf{Output:} $S^{\prime}$ 
\end{algorithmic}
\end{module}


Next we turn to designing the strategy for adding directed mutation under a given $f$ as guidance and propose Module \ref{mod:dire_mutation}. An ideal mutation will diversify the population while preserving its fitness level as much as possible. So we add directed mutation to sites where the single site fitness over the population is less than of a uniformly distributed sequence. Formally, we only add mutation to site $i$ if  $\frac{1}{M} \sum_{x\in S} \theta_i \cdot x_i \leq  \theta_i \cdot \bar x_i $,
where $\bar x_i$ is the mean of uniformly random $x_i$.

\begin{module}
\caption{$\texttt{Directed\_Mutation}(f, S, \mu)$}
\label{mod:dire_mutation}
\begin{algorithmic}[1]
    \STATE \textbf{Inputs:} utility function $f(x)= \langle \theta, x \rangle$, a population of sequences $S$, mutation rate $\mu$
    
    \STATE \textbf{Initialization:} $\mathcal{I}\leftarrow\emptyset,\mathcal{S'}\leftarrow\emptyset$
        \FOR{$ i \in [d]$}
             \IF{$\frac{1}{M} \sum_{x\in S} \theta_i  \cdot x_i \leq  \theta_i \cdot \bar x_i$}
             \STATE $\mathcal{I} \leftarrow \mathcal{I}\cup\{i\}$.
             \ENDIF
        \ENDFOR
    \STATE \textbf{Directed Mutation:}    $x^{\prime} = \texttt{Mut}(x, \mathcal{I}, \mu)$ (Definition \ref{opt:mut}) and $S^{\prime}\leftarrow S^{\prime}\cup\{x^{\prime}\}$ for all $x\in S$.  
    \STATE \textbf{Output:} $S^{\prime}$ 
\end{algorithmic}
\end{module}

\subsection{Full algorithm}

Finally, we are ready to combine all modules and state the full algorithm in Algorithm \ref{TSDE}. At each time step $t$, a posterior distribution is first computed using the data collected in history. Then we sample a $\tilde{\theta}_t$ from the posterior and do the corresponding directed mutation and crossover selection using this sampled weight, and augment the dataset for the next iteration with the measurements of resulting new population. The procedure is repeated until the time limit $T$ is reached.

\begin{algorithm}[htb!]
\caption{Thompson Sampling-Guided Directed Evolution (TS-DE)\label{TSDE}}
\label{alg:one-level(PS)}
\begin{algorithmic}[1]
    \STATE \textbf{Inputs:} number of rounds $T$, initial population $S_0 = \{x_{0,i}\}_{i = 1}^{M}$ of size $M$, mutation rate $\mu$, $\sigma$ 
    \STATE \textbf{Initialization:} dataset $D_0 \leftarrow \emptyset$,  $\Phi_{t-1} = 0$, $U_0 = 0$
    \FOR{ $t=1$ to $T$}	
    \STATE \textbf{Posterior update} 
    \begin{align}
    \label{equ:RLS}
        V_{t} &= \frac{1}{\sigma^2} \Phi^{\top}_{t-1} \Phi_{t-1} + \lambda I,\qquad
        \hat \theta_{t} = \frac{1}{\sigma^2} V^{-1}_{t} \Phi^{\top}_{t-1} U_{t-1}. 
    \end{align}
    \STATE \textbf{Thompson Sampling}  $ \quad \tilde\theta_t \sim \mathcal{N}(\hat \theta_{t}, V^{-1}_{t})$. 
    
    \STATE $S_{t-1}^{\prime} = \texttt{Directed\_Mutation}(f_{\tilde \theta_t}, S_{t-1}, \mu)$ (Module \ref{mod:dire_mutation}). 
    
    \STATE $S_t = \texttt{Crossover\_Selection}(f_{\tilde \theta_t}, S_{t-1}^{\prime})$ (Module \ref{mod:rcb&slc}).
   
    \STATE \textbf{Evaluation and data collection} Evaluate the utilities of all individuals in $S_t$ and $D_{t} \leftarrow D_{t-1} \cup \{x_{t,i}, u(x_{t,i})\}_{i = 1}^{M}$. Update $\Phi^{\top}_t \leftarrow \left(\Phi^{\top}_{t-1}, x_{t,1}, \cdots, x_{t,M} \right)$, $U_{t} \leftarrow \left( U_{t-1}^{\top}, u(x_{t,1}), \cdots, u(x_{t,M}) \right)^{\top}$. 
    \STATE $t \leftarrow t+1$.
    \ENDFOR
\end{algorithmic}
\end{algorithm}


\section{Main results}

In this section, we analyze the performance of TS-DE (Algorithm \ref{TSDE}). We will show that the crossover selection module essentially mimics an optimization iteration that strictly improves the population's fitness along the designated direction. By using a Bayesian regret analysis, we show the DE modules, when combined with posterior sampling, can effectively optimize towards the best protein design while learning $\theta^{\star}$. 

\subsection{Crossover selection as an optimization iteration}
Let $f$ by any utility function, and let $F(S):=\operatorname{avg}_{x\in S} f(x)$ denote the population average utility. 
Our first result states an ascent property showing that \texttt{Crossover\_Selection} strictly improves the population average. 

\begin{theorem}[Ascent Property of Recombination-then-Selection]
\label{thm:ascent}Let $f(x) = \langle \theta, x \rangle$ and let $S$ be a set of sequences. 
Let $S^{\prime} = \texttt{Crossover\_Selection}(f,S)$, then it satisfies
\begin{equation}
\label{equ:ascent}
    \mathbb{E}\left[ F(S^{\prime}) \right]  \geq F(S) + \frac{\mathbb{E}_{x,y} \left[ \| \theta \cdot \left( x - y \right)\| \right]}{2 \sqrt{2}} \geq F(S) + \frac{1}{\sqrt{2d}} \sum_i \left| \theta_i \right| \textrm{Var}_i(S),
\end{equation}
where $\textrm{Var}_i(S)$ denotes the variance of $x_i$ when $x$ is uniformly sampled from $S$.
\end{theorem}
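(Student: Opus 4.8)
The plan is to reduce everything to the behaviour of a single sequence that survives \texttt{Crossover\_Selection}, and then bound its expected improvement with a Khintchine (Rademacher‑sum) inequality. Because the \textbf{while} loop fills $S'$ with exactly $|S|=M$ survivors, each obtained by repeatedly sampling a fresh parent pair $x,y$ uniformly from $S$ and a fresh child $z=\texttt{Rcb}(x,y)$ until the test $f(z)\ge\frac{f(x)+f(y)}{2}$ passes, all elements of $S'$ have the same law, so $\mathbb{E}[F(S')]$ is the mean $f$‑value of one survivor. The key device is the \emph{mirrored child} $\bar z$, defined by letting $\bar z_i$ take the parent value that $z_i$ did not; then $z_i+\bar z_i=x_i+y_i$ coordinatewise, hence $f(z)+f(\bar z)=f(x)+f(y)$ and the acceptance test is exactly $\{f(z)\ge f(\bar z)\}$. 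Since $z$ and $\bar z$ are exchangeable given $(x,y)$, a survivor is, in distribution, the $f$‑larger of $\{z,\bar z\}$, so it has value $\frac{f(x)+f(y)}{2}+\frac12|f(z)-f(\bar z)|$; taking expectations and using that the parents are \emph{unconditionally} uniform on $S$ gives $\mathbb{E}[F(S')]=F(S)+\frac12\,\mathbb{E}|f(z)-f(\bar z)|$.

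Next I would make the increment explicit. Coordinatewise $z_i-\bar z_i$ is $0$ when $x_i=y_i$ and is $\pm(x_i-y_i)$ with an independent fair sign otherwise; collecting these signs into i.i.d.\ Rademacher variables $\epsilon=(\epsilon_i)_{i\in[d]}$ (coordinates with $x_i=y_i$ contributing nothing) and using linearity of $f$,
\[
 f(z)-f(\bar z)=\sum_i \theta_i(x_i-y_i)\epsilon_i=\big\langle \theta\cdot(x-y),\,\epsilon\big\rangle ,
\]
where $\theta\cdot(x-y)$ is the entrywise product. By Khintchine's inequality with the sharp $L^1$ constant, $\mathbb{E}_\epsilon|\langle a,\epsilon\rangle|\ge\frac{1}{\sqrt2}\|a\|_2$ for every vector $a$; applying this with $a=\theta\cdot(x-y)$ gives $\frac12\mathbb{E}_\epsilon|f(z)-f(\bar z)|\ge\frac{1}{2\sqrt2}\|\theta\cdot(x-y)\|$, and averaging over $x,y$ yields the first inequality in \eqref{equ:ascent}.

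For the second inequality I would use only Cauchy--Schwarz and a variance identity: $\|a\|_2\ge\frac{1}{\sqrt d}\|a\|_1$, so $\|\theta\cdot(x-y)\|\ge\frac{1}{\sqrt d}\sum_i|\theta_i|\,|x_i-y_i|$; since $x_i,y_i\in\{0,1\}$ we have $|x_i-y_i|=\mathbf 1[x_i\ne y_i]$, and for $x,y$ i.i.d.\ uniform on $S$, $\mathbb{E}_{x,y}\mathbf 1[x_i\ne y_i]=\Pr[x_i\ne y_i]=2\,\mathrm{Var}_i(S)$. Hence $\mathbb{E}_{x,y}\|\theta\cdot(x-y)\|\ge\frac{2}{\sqrt d}\sum_i|\theta_i|\,\mathrm{Var}_i(S)$, and multiplying through by $\frac{1}{2\sqrt2}$ produces the claimed $\frac{1}{\sqrt{2d}}\sum_i|\theta_i|\,\mathrm{Var}_i(S)$.

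The step I expect to be the main obstacle is the distributional claim in the first paragraph --- that keeping only children which beat their parents' average does not bias the \emph{parents'} average downward. The mirror/exchangeability argument is what makes this clean, but it needs two checks: that ties $f(z)=f(\bar z)$ occur with negligible probability (true almost surely under the Gaussian prior of Assumption~\ref{asmp_linearf}, since no nonempty signed subsum of the $\theta_i$ vanishes), and a careful accounting of degenerate parent pairs $x=y$, for which $z=\bar z$ and no progress is made; handling the latter exactly with the rejection‑sampling dynamics costs at most a benign $\frac{M}{M+1}=1-o(1)$ factor on the increment, which does not affect the statement. Everything after that reduction is the two named inequalities plus the elementary variance computation.
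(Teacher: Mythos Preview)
Your approach is essentially the same as the paper's: both represent the child via Rademacher signs (your mirrored $\bar z$ is exactly the paper's $z$ with $e\mapsto -e$, so $f(z)-f(\bar z)$ is the paper's $V=\sum_i\theta_i(x_i-y_i)e_i$), use symmetry to convert the conditional expectation into $\tfrac12\mathbb{E}|V|$, apply Haagerup's sharp Khintchine constant $1/\sqrt{2}$, and finish with Cauchy--Schwarz together with the binary identity $\mathbb{E}|x_i-y_i|=2\,\mathrm{Var}_i(S)$. Your extra care about ties and the $x=y$ degeneracy is well-placed---the paper silently assumes the acceptance probability is $1/2$ independent of $(x,y)$, which needs exactly the fixes you outline---but otherwise the arguments coincide.
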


\begin{wrapfigure}{r}{0.35\textwidth}
  \begin{center}\vspace{-7pt}
    \includegraphics[width=0.33\textwidth]{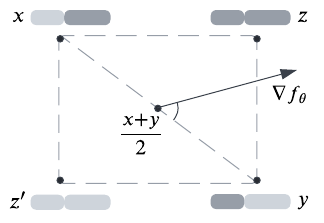}
  \end{center}
  \caption{{Ascent property of crossover recombination}}
  \label{fig:ascent}\vspace{-5pt}
\end{wrapfigure}
\paragraph{Proof sketch.}
See Figure \ref{fig:ascent} for illustration.
Given $x$ and $y$, $z = \texttt{Rcb}(x, y)$ can be represented by
$
    z = \frac{x + y}{2} + \frac{x - y}{2} \cdot e
$,
where the $\cdot$ denotes the entrywise multiplication between two vectors and $e= (e_i, \cdots, e_d)$ with $e_i$'s being independent Rademacher variables. Then $f(z)$ equals
$
    \frac{f(x) + f(y)}{2} + \frac{1}{2} \sum_{i=1}^{d} \theta_i \left( x_i - y_i \right) e_i .
$
After the selection step, the expected amount by which $f(z)$ exceeds its parents' average is at least 
$
    \frac{1}{2} \mathbb{E} \left[  \left|\sum_{i=1}^{d} \theta_i \left( x_i - y_i \right) e_i  \right| \right],
$
which has a tight lower bound of $\frac{1}{2 \sqrt{2}} \| \theta \cdot \left( x - y \right)\|$ according to \citet{haagerup1981best}. The full proof is given in Appendix \ref{app-ascent-general}. \hfill $\blacksquare$


\paragraph{Remark on diversity.} Analysis above reveals an intriguing observation: the optimization progress of \texttt{Crossover\_Selection} scales linearly with $\sum_i \theta_i \textrm{Var}_i(S)$, i.e., sum of per-motif variances across population $S$. It measures a level of ``diversity" of $S$ with respect to direction $\theta$. More diverse population would enjoy larger progress from crossover selection. This observation is consistent with the natural evolution theory that diversity is key to the adaptability of a population to cope with evolving environment where fitness traits are essential \citep{whittaker1972evolution}.

\subsection{Regret bound of TS-DE}

Our main result is a Bayesian regret bound for TS-DE. Recall from Definition \ref{def:bayes_rgt} that $\operatorname{BayesRGT}(T, M) = \mathbb{E}[\sum_{t=1}^T \sum_{i=1}^M (f_{\theta^{\star}}(x^{\star}) - f_{\theta^{\star}}(x_{t,i}))]$. 

\begin{theorem}
\label{main_thm}
Under Assumption \ref{asmp_linearf} and  \ref{asmp_noise}, when the population size is sufficient s.t. $M = \Omega \left(  \frac{\log(d T)}{\mu^2} \right)$, Alg.$\ref{alg:one-level(PS)}$ admits its Bayesian regret s.t.
\begin{equation}
    \operatorname{BayesRGT}(T, M) = \tilde O \left( \frac{d}{\mu \sqrt{\lambda}} \cdot d \sqrt{MT} \right).
\end{equation}
If we let $\lambda=1,\mu =1/2, \sigma^2=1$, the Bayesian regret simplifies to $\tilde O(d^2 \sqrt{MT})$.
\end{theorem}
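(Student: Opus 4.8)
The plan is to decompose the Bayesian regret into two pieces: a \emph{convergence} term that tracks how fast the population average $F(S_t)$ approaches the optimal value $f_{\theta^\star}(x^\star)$ under the \emph{sampled} parameter $\tilde\theta_t$, and an \emph{estimation} term that accounts for the mismatch between $\tilde\theta_t$ and $\theta^\star$. The standard Thompson Sampling trick will be essential here: conditioned on the history $\mathcal{F}_{t-1}$, the posterior-sampled $\tilde\theta_t$ and the true $\theta^\star$ are identically distributed, so any function of $(\theta^\star, x^\star)$ has the same conditional expectation as the corresponding function of $(\tilde\theta_t, \tilde x_t^\star)$, where $\tilde x_t^\star$ is the maximizer under $\tilde\theta_t$. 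This lets me replace the unobservable $f_{\theta^\star}(x^\star)$ by the ``imagined'' optimum $f_{\tilde\theta_t}(\tilde x_t^\star)$ inside the regret sum, up to information-gain terms controlled by elliptical-potential arguments.

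First I would analyze a single round in the ``optimistic'' world defined by $\tilde\theta_t$. Here Theorem~\ref{thm:ascent} does the heavy lifting: \texttt{Crossover\_Selection} improves $F_{\tilde\theta_t}(S)$ by at least $\tfrac{1}{\sqrt{2d}}\sum_i|\tilde\theta_{t,i}|\,\mathrm{Var}_i(S)$ per round, while \texttt{Directed\_Mutation} is designed to \emph{reinject diversity} at exactly the sites where the population has collapsed to the unfavorable value, so that $\sum_i|\tilde\theta_{t,i}|\,\mathrm{Var}_i(S_{t-1}')$ stays bounded below in terms of the current suboptimality gap $f_{\tilde\theta_t}(\tilde x_t^\star) - F_{\tilde\theta_t}(S_{t-1})$. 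Combining these two facts should yield a contraction of the form $\mathbb{E}[f_{\tilde\theta_t}(\tilde x_t^\star) - F_{\tilde\theta_t}(S_t)\mid \mathcal{F}_{t-1}] \le (1 - c/\sqrt d)\,(f_{\tilde\theta_t}(\tilde x_t^\star) - F_{\tilde\theta_t}(S_{t-1})) + (\text{small drift})$, where the mutation rate $\mu$ enters the constant $c$ and the requirement $M = \Omega(\log(dT)/\mu^2)$ is what makes the per-site variance concentrate (via a Chernoff bound over $d T$ events) so that the population-level statement matches the idealized expectation. Iterating the contraction geometrically bounds the cumulative per-round optimization error by $O(\sqrt d \cdot d / (\mu\sqrt\lambda))$ times a logarithmic factor — the $\sqrt d$ from the contraction rate, another factor $d/(\sqrt\lambda)$ from bounding $\|\tilde\theta_t\|$ and the diameter of the relevant value range under the Gaussian prior.

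Next I would handle the estimation term $\sum_t \sum_i \big( f_{\tilde\theta_t}(\tilde x_t^\star) - f_{\theta^\star}(\tilde x_t^\star) \big) + \big(f_{\theta^\star}(x_{t,i}) \text{ vs. } f_{\tilde\theta_t}(x_{t,i})\big)$, which is precisely where the linear-bandit machinery of \citet{russo2014learning,abbasi2011improved} applies: each such term is of the form $\langle \tilde\theta_t - \hat\theta_t, x\rangle$ or $\langle \hat\theta_t - \theta^\star, x\rangle$, bounded by $\|x\|_{V_t^{-1}}$ times a confidence width $\beta_t = \tilde O(d\sqrt{\lambda}^{-1})$ from Gaussian posterior concentration, and then $\sum_{t,i}\|x_{t,i}\|_{V_t^{-1}} \le \sqrt{MT \sum_{t,i}\|x_{t,i}\|^2_{V_t^{-1}}} = \tilde O(\sqrt{d\, MT})$ by Cauchy–Schwarz and the elliptical potential lemma (note $V_t$ aggregates all $M$ samples per round, so the potential sum telescopes over $MT$ vectors but with the same determinant bound scaled by $\log$). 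Multiplying the confidence width $\tilde O(d)$ by $\tilde O(\sqrt{d\,MT})$ gives the $\tilde O(d^{1.5}\sqrt{MT})$ contribution; combining with the optimization term and being slightly loose on constants yields the claimed $\tilde O\big(\tfrac{d}{\mu\sqrt\lambda}\cdot d\sqrt{MT}\big)$, which collapses to $\tilde O(d^2\sqrt{MT})$ at $\lambda=\mu^{-1}/2^{-1}=\sigma^2=1$ — wait, more precisely at $\lambda=1,\mu=1/2,\sigma^2=1$.

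The main obstacle I anticipate is \emph{making the contraction argument rigorous at the population level rather than in expectation}: Theorem~\ref{thm:ascent} is an expectation statement, but the regret is a sum of realized values, so I need high-probability control that $\mathrm{Var}_i(S_t)$ — and hence the realized improvement — does not deviate badly from its mean across all $d$ coordinates and all $T$ rounds simultaneously. This is exactly what the population-size condition $M=\Omega(\log(dT)/\mu^2)$ buys, but coupling it cleanly with the geometric iteration (so that a bad concentration event in one round does not destroy the contraction in all later rounds) requires care — likely a union bound over the $O(dT)$ relevant events followed by a deterministic recursion on the good event, plus a crude $O(T)$-type bound on the negligible bad event. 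A secondary subtlety is ensuring the directed mutation genuinely lower-bounds the diversity term by the suboptimality gap: one must check that after mutation, every coordinate $i$ with $\tilde\theta_{t,i}\neq 0$ contributing to the gap has $\mathrm{Var}_i(S_{t-1}')$ bounded below by something like $\mu(1-\mu)$, which relies on Module~\ref{mod:dire_mutation} targeting precisely the collapsed favorable-direction sites.
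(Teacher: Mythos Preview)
Your overall architecture matches the paper's: Thompson-sampling identity to swap $f_{\theta^\star}(x^\star)$ for $f_{\tilde\theta_t}(x_t^\star)$, a decomposition into an optimization term $H_1=\mathbb{E}\sum_t(F_t^\star-F_t(S_t))$ and an estimation term $H_2=\mathbb{E}\sum_{t,i}\langle\tilde\theta_t-\theta^\star,x_{t,i}\rangle$, a one-step contraction from Theorem~\ref{thm:ascent} combined with the directed-mutation diversity floor $\min_i p_i(S_t')\ge\mu/4$ (guaranteed w.h.p.\ by the population-size assumption), and the batched elliptical-potential bound for $H_2$.

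The gap is in how you iterate the contraction. Your recursion
\[
\mathbb{E}\bigl[F_t^\star-F_t(S_t)\mid\mathcal{F}_{t-1}\bigr]\;\le\;(1-c/\sqrt d)\bigl(F_t^\star-F_t(S_{t-1})\bigr)+(\text{small drift})
\]
is stated entirely under the \emph{same} sampled parameter $\tilde\theta_t$. But to chain it across rounds you must relate $F_{t+1}^\star-F_{t+1}(S_t)$ to $F_t^\star-F_t(S_t)$, and this introduces two cross-round terms, $F_{t+1}^\star-F_t^\star$ and $F_t(S_t)-F_{t+1}(S_t)=\frac{1}{M}\sum_i\langle\tilde\theta_t-\tilde\theta_{t+1},x_{t,i}\rangle$, neither of which is a priori small: $\tilde\theta_t$ can swing by $\Theta(1)$ between rounds. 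Treating these as negligible and concluding $H_1=O(d^{1.5}/(\mu\sqrt\lambda))$ independent of $T$ is unjustified. In the paper's analysis this drift is precisely the term that \emph{dominates} $H_1$: summing the recursion gives a contribution
\[
\sum_{t}\sum_{k<t}\gamma^{t-k}\bigl(F_k(S_k)-F_{k+1}(S_k)\bigr)\;\le\;\frac{1}{1-\gamma}\cdot\frac{1}{M}\sum_{t,i}\bigl|\langle\tilde\theta_t-\tilde\theta_{t+1},x_{t,i}\rangle\bigr|,
\]
and the right-hand side is then bounded by the \emph{same} confidence-ellipsoid and elliptical-potential machinery you use for $H_2$ (after observing $\|\tilde\theta_t-\tilde\theta_{t+1}\|_{V_t}\le 2(\beta_t+\alpha_t)$ on the good event). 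This is where the factor $\frac{1}{1-\gamma}=O(\sqrt d/\mu)$ multiplying $d\sqrt{MT}$ in the final bound actually originates---not from combining a constant $H_1$ with $H_2$ ``loosely,'' which as written does not produce the stated rate. Your plan becomes correct once you recognize that the drift is not small but is instead controlled by reusing the $H_2$-type bound, picking up the extra $\sqrt d/\mu$ factor in the process.
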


\paragraph{Remark on regret bound.}
Regret bound of Theorem \ref{main_thm} is optimal in $M,T$. For comparison, the Bayesian regret of Gaussian linear model is $\tilde O(d\sqrt{T})$ \citep{kalkanli2020improved}, also in contextual linear bandit with batch update, the optimal regret is $\tilde O(d \sqrt{MT})$ \cite{han2020sequential}. Our TS-DS regret has two extra factors of $\sqrt{d}$. One $\sqrt{d}$ is due to that the $l_2$ norm of our feature vectors are $\sqrt{d}$, while linear bandit theory often assumes feature to have norm $1$. Another factor of $\sqrt{d}$ is due to the evolutionary nature of DE, i.e., TS-DE is not allowed to any possible action but have to select those from the evolving population.

\subsection{Proof sketch}
Denote by $x^{\star}$ and $x_t^{\star}$ the maximums of $f_{\theta^{\star}}$ and $f_{\tilde \theta_t}$. Denote by $F^{\star}_t: = f_{\tilde \theta_t}(x_t^{\star})$  the maximum value of $f_{\tilde \theta_t}$ and denote by $F_t(S)$ the average value of $f_{\tilde \theta_t}$ over set $S$.

\textbf{Step 1: Regret decomposition.}
With expectation taken over all 
stochasticity, posterior sampling guarantees 
$
\operatorname{BayesRGT}(T, M) = \sum_{t=1}^T \sum_{i=1}^M \mathbb{E} \left[ f_{\tilde \theta_t}(x_t^{\star}) - f_{\theta^{\star}}(x_{t,i}) \right]
$
since conditioned on data $D_{t-1}$, $f_{\theta^{\star}}(x^{\star})$ and $f_{\tilde \theta_t}(x_t^{\star})$ are identically distributed.
Then by breaking $f_{\tilde \theta_t}(x_t^{\star}) - f_{\theta^{\star}}(x_{t,i})$ down to the sum of $ f_{\tilde \theta_t}(x_t^{\star}) - f_{\tilde \theta_t}(x_{t,i})$ and $ f_{\tilde \theta_t}(x_{t,i}) - f_{\theta^{\star}}(x_{t,i})$, we decompose the total regret into 
\begin{equation}
    \operatorname{BayesRGT}(T, M) = M \cdot \underbrace{ \mathbb{E}  \left[ \sum_{t=1}^T \left( F^{\star}_t - F_t(S_t) \right) \right]}_{H_1} + \underbrace{\mathbb{E} \left[\sum_{t=1}^T \sum_{i=1}^M \langle \tilde\theta_t -\theta^{\star}, x_{t,i} \rangle \right]}_{H_2}.
\end{equation}

\textbf{Step 2: Bounding $H_1$ using linear convergence.}
$H_1$ is the accumulated optimization error under a time-varying objective $f_{\tilde\theta_t}$.
After calling $S_{t-1}^{\prime} = \texttt{Directed\_Mutation}(f_{\tilde \theta_{t}}, S_{t-1}, \mu)$ and $S_{t} = \texttt{Crossover\_Selection}(f_{\tilde \theta_t}, S_{t-1}^{\prime})$ at step $t$, the ascent property (\ref{equ:ascent}) together with property of the mutation module yields a linear convergence towards $F^{\star}_{t}$, i.e.,
$
     \mathbb{E} \left[ F^{\star}_{t} - F_{t}(S_{t}) \mid  S_{t-1} ,\tilde \theta_t\right] \leq \gamma(F^{\star}_{t} - F_{t}(S_{t-1}))
$
with a modulus of contraction $\gamma\in(0,1)$ s.t. $\frac{1}{1 - \gamma} = O \left( \frac{\sqrt{d}}{\mu}\right)$.
It follows that
\begin{align*}
    F^{\star}_{t+1} - F_{t+1}(S_{t+1}) &\leq  \gamma \left[F^{\star}_{t} - F_{t}(S_{t}) \right] + \hbox{error terms}
    + e_{t+1},
\end{align*}
where $e_{t+1}$ is a martigale difference.
Applying the above recursively
to $H_1$, we get  $H_1\leq $ 
\begin{equation*}
   \underbrace{ \frac{1}{1-\gamma} \cdot \mathbb{E} \left[  F^{\star}_{1} - F_{1}(S_{0})\right]}_{O( \frac{1}{1-\gamma})} + \underbrace{\mathbb{E} \left[  \sum_{k = 2}^{T}  \gamma^{T-k+1} F^{\star}_{k} -\gamma^{T-1} F^{\star}_{1} \right]}_{O( \frac{1}{1-\gamma})} + \underbrace{ \mathbb{E} \left[ \sum_{t = 1}^T \sum^{t-1}_{k=1}\gamma^{t-k} \left( F_{k}(S_{k}) - F_{k+1}(S_{k})\right)\right]}_{\kappa},
\end{equation*}
which is dominated by term $\kappa$ and $M \cdot \kappa \leq \frac{1}{1-\gamma} \cdot  \sum_{t = 1}^{T-1} \sum_{i = 1}^{M}  \left|\langle \tilde \theta_{t} -  \tilde \theta_{t+1}, x_{t,i} \rangle \right| = O \left( \frac{1}{1-\gamma} H_2 \right)$.

\textbf{Step 3: Bounding $H_2$.} $H_2$ is the accumulated prediction error of $\tilde\theta_t$,  which is a classic term to bound in bandit literature and is of $\tilde O \left( d^{1.5} \sqrt{MT}  \right)$ by using a batched self-normalization bound.
$\hfill \blacksquare$

\section{Experiments}

\subsection{Simulation}

We test the TS-DE by simulating the evolution of a population of sequences in $\{0,1\}^d$. We set the initial population to be all zeros, and set $\lambda = 1$, $\sigma = 1$. 


\textbf{Regret and convergence results.} Figure \ref{fig:Rgt_f} shows the regret curves and learning curves of TS-DE, with comparison to basic DE. In the left panel of Figure \ref{fig:Rgt_f}, we plot the population-averaged Bayesian regret of TS-DE with various values of $M$, where $d = 10$, $T = 100$ and $\mu = 0.8$. These results confirm our sublinear regret bounds. In the right panel of Figure \ref{fig:Rgt_f}, we tested TS-DE using various mutation rates, and compared them with a basic DE approach \footnote{The basic DE approach does not employ any function estimate. It does random mutation with a predefined mutation rate and random crossover recombination. It evaluates every candidate sequence and uses the noisy feedback in replace of $f_{\tilde \theta}$ for selection.}. 
The comparison shows that TS-DE converges significantly faster, while the convergence of DE is much slower and very sensitive to mutation scheduling.

\begin{figure}[t]
    \centering
    \vspace{-2mm}
    \includegraphics[width=\linewidth]{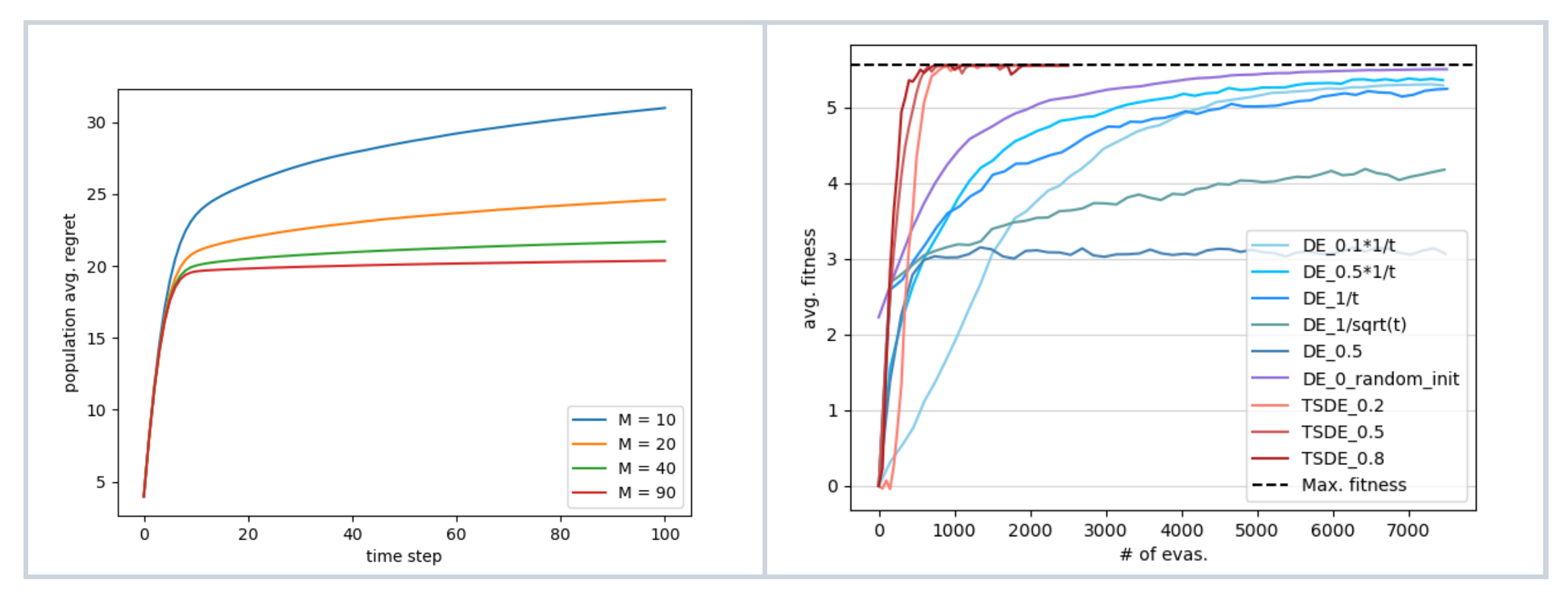}
    \vspace{-2mm}
    \caption{\textbf{Regret and fitness curves of TS-DE during evolution.} Left: Population-averaged regret with varying population sizes $M$. Each curve is averaged over 100 trials. Right: Fitness curves of TS-DE with varying values of $\mu$, compared with basic DE with varying mutation rates. (The purple curve plots basic DE without mutation, we modified the initial population to be uniformly distributed in this case to make it non-trivial.) \label{fig:Rgt_f}}
    \vspace{-2mm}
\end{figure}

\textbf{Visualizing the evolution of a population.} 
 We visualize the evolution trajectory of population $S_t$ in one run of TS-DE, with $d = 40$, $M = 20$ and $\mu = 0.1$. In the left panel of Fig.\ref{fig:S_f_evo}, we visualize the evolving high-dimensional population $S_t$ by mapping them to 2D (via PCA and KDE density contour plot). In the right panel of Fig.\ref{fig:S_f_evo}, we plot the fitness distribution of each $S_t$. These plots illustrate how TS-DE balances the exploration-exploitation trade-off: It guides $S_t$ to ``diversify" initially and then quickly approach and concentrate around a maximal solution.

\begin{figure}[t]
    \centering
    \includegraphics[width=\linewidth]{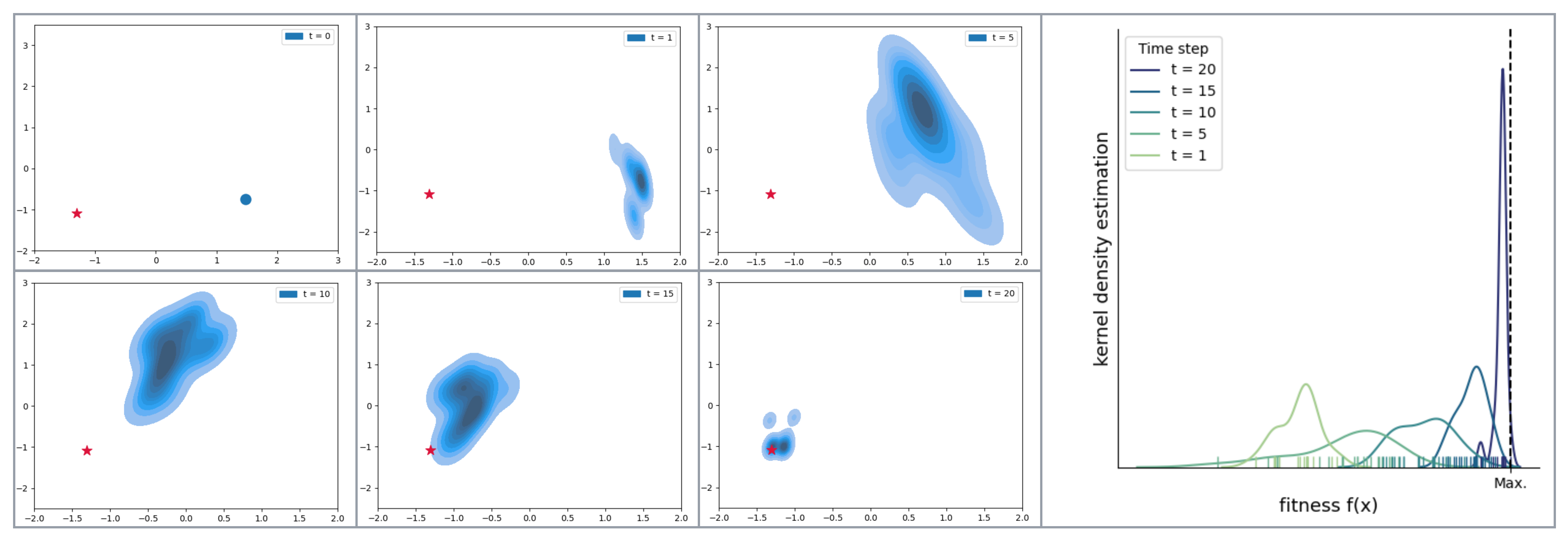}
    
     \vspace{-2mm}
    \caption{\textbf{Evolving population of TS-DE and fitness levels.}  Left panels: Visualization of population evolution projected in 2D shown, taken at 6 snapshots. Right panel: The population's fitness distribution shifts towards optimal during evolution. \textcolor{red}{$\star$} denotes the optimal solution.  
    \label{fig:S_f_evo}
    }
    \vspace{-2mm}
\end{figure}

\subsection{Real-world experiment validation} 

Having demonstrated our approach with simulations, we use real-world experiments to showcase the validity and generalizability of our method. The TS-DE method is adapted to work with real-world motif features (continuous-valued instead of binary), linear model and multiple rounds of wet-lab experiments for optimizing a CRISPR design sequence. Our approach together with high-throughput experiment identified a high-performing sequence with 30+ fold improvement in efficiency. 
Notably, the optimized CRISPR designs generated by our DE approach is part of another manuscript (in press at a biological journal, Molecular Cell), demonstrating real-world utility of our method.
We postpone more details about this real-world validation to Appendix \ref{crispr} and Figure \ref{fig:S_f_exp}.

\newpage

\bibliography{ref.bib}
\bibliographystyle{plainnat}

\newpage
\appendix

\section{Proof of Theorem \ref{main_thm}}
\label{proof_main}

\subsection{Notations}
We address the following notations that frequently occur throughout the proof section.
Denote by $f$ an arbitrary linear fitness function $f(x):= \langle \theta, x \rangle$ parameterized by some $\theta \in \mathbb{R}^{d}$ and denote by $F^{\star}$ its maximum. Define $F(S):=\operatorname{avg}_{x\in S} f(x)$, the average fitness under $f$ of population $S$. While $f$ represents arbitrary fitness function, $\{f_{\tilde \theta_t}(x): = \langle \tilde \theta_t, x \rangle\}_{t \in [T]}$ are the linear function parameterized by $\{\tilde \theta_t\}_{t \in [T]}$ obtained by posterior sampling in each iteration of Alg.\ref{alg:one-level(PS)}. Corresponding to each $f_{\tilde \theta_t}$, $F_t^{\star}:= f_{\tilde \theta_t}(x_t^{\star})$ is its maximum value and $x_t^{\star}$ is its one maximum point. Denote by $F_t(S)$ the average $f_{\tilde \theta_t}$ value over $S$. For a clear display, denote by $L$, an upper bound for the $l_2$ norm of any $x_{t,i}$ evaluated, i.e. $\|x_{t,i}\| \leq L$ and in our setting, take $L = \sqrt{d}$. Without clarification $\|\cdot\|$ denotes the $l_2$ norm by default and $\|\cdot\|_{A}$ denotes the norm normalized by matrix $A$.


\subsection{Routine of Alg.\ref{alg:one-level(PS)} and filtrations}

\begin{figure}[h]
    \centering
    \includegraphics[scale=0.6]{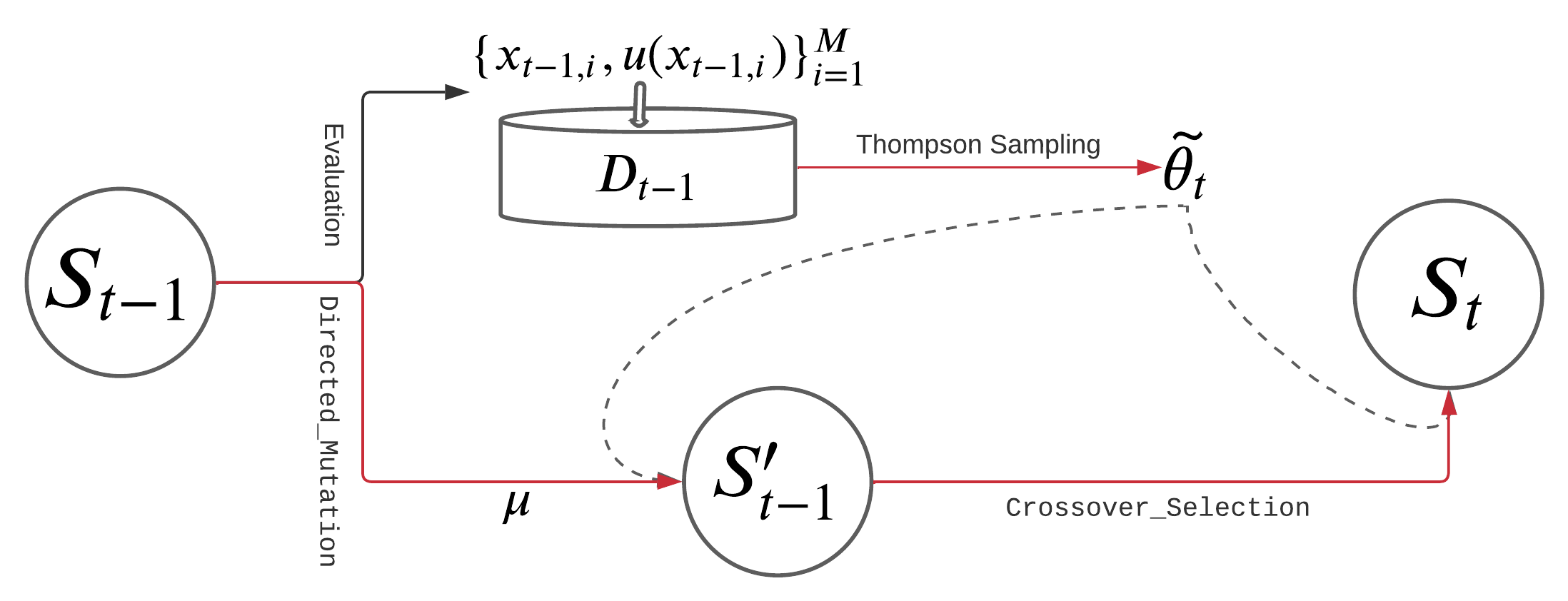}
    \caption{\textbf{Routine of Alg.\ref{alg:one-level(PS)}.} Red lines represent stochastic steps. Dash lines indicate $\tilde \theta_t$ is involved in those steps.}
    \label{fig:stochasticity}
\end{figure}

In Alg.\ref{alg:one-level(PS)}, there are three steps introducing stochasticity. Two of which are calling Module \ref{mod:dire_mutation} as $S_{t-1}^{\prime} = \texttt{Directed\_Mutation}(f_{\tilde \theta_t}, S_{t-1}, \mu)$ and calling  as $S_t = \texttt{Crossover\_Selection}(f_{\tilde \theta_t}, S_{t-1}^{\prime})$. Another one is Thompson sampling step s.t. $\tilde\theta_t$ is sampled from the posterior of $\theta^{\star}$ given data $D_{t-1}$. Fig. \ref{fig:stochasticity} illustrates how these three steps are built into the algorithm routine. 

There are two other sources of stochasticity inherited from the problem setting: the prior of $\theta^{\star}$ (Assumption \ref{asmp_linearf}) and the noisy feedback $\{u(x_{t,i})\}_{i=1}^{M}$ (Assumption \ref{asmp_noise}), which are revealed in the evaluation step. Including all stochasticity, the trajectory of Alg.\ref{alg:one-level(PS)} is
\begin{equation}
\label{equ:trajectory}
    \theta^{\star}, \tilde\theta_1, S_0^{\prime}, S_1, \{u(x_{1,i})\}_{i=1}^{M}, \cdots,  \tilde\theta_{t+1}, S_{t}^{\prime}, S_{t+1}, \{u(x_{t+1,i})\}_{i=1}^{M}, \cdots, \tilde\theta_{T}, S_{T-1}^{\prime}, S_{T}, \{u(x_{T,i})\}_{i=1}^{M}.
\end{equation}

At the convenience of analysis, we introduce multiple lines of the history up to time step $t$ by carefully partitioning the trajectory (\ref{equ:trajectory}), using $\sigma(\cdot)$ to represent the minimal sigma algebra expanded by $\cdot$.

\begin{definition}
Define a filtration $\left\{ \mathcal{H}^{M}_{t} \right\}_{t=0}^{T-1}$ with $\mathcal{H}^{M}_{t}$ be the information accumulated after $t$ rounds of Alg.\ref{alg:one-level(PS)} but before the Directed Mutation step in round $t+1$.
\begin{align*}
    \mathcal{H}^{M}_{0}: &= \sigma \left( \theta^{\star}, \tilde\theta_1 \right),\\
    \mathcal{H}^{M}_{t}: &= \left( \mathcal{H}^{M}_{t-1}, \sigma \left( S_{t-1}^{\prime}, S_t, \{u(x_{t,i})\}_{i=1}^{M}, \tilde\theta_{t+1} \right) \right), \quad t \in [T-1].
\end{align*}
\end{definition}

\begin{definition}
Define a filtration $\left\{ \mathcal{H}^{R}_{t} \right\}_{t=0}^{T-1}$ with $\mathcal{H}^{R}_{t}$ be the information accumulated after $t$ rounds of Alg.\ref{alg:one-level(PS)} but before the Recombination and Selection step in round $t+1$.
\begin{align*}
    \mathcal{H}^{R}_{0}: &= \sigma \left( \theta^{\star}, \tilde\theta_1, S_0^{\prime} \right),\\
    \mathcal{H}^{R}_{t}: &= \left( \mathcal{H}^{R}_{t-1}, \sigma \left(  S_t, \{u(x_{t,i})\}_{i=1}^{M}, \tilde\theta_{t+1}, S_{t}^{\prime} \right) \right), \quad t \in [T-1].
\end{align*}
\end{definition}

\subsection{Property of $\texttt{Directed\_Mutation}$ (Module \ref{mod:dire_mutation})}

Given a fitness function $f(x):= \langle \theta, x\rangle$, a useful observation is that for the dimension where $\theta_i \geq 0$, feature value $1$ is more favorable than $0$ in terms of a higher fitness. So in a population $S$, for each dimension $i$, the ratio of individuals who are with the favored feature is a key quantity, and we define it formally as follows.
\begin{definition}[Ratio of the Favored Feature]
\label{def:ratio_binary}
Under a fitness function $f(x):= \langle \theta, x\rangle$, for a population $S$, define
\begin{equation}
\label{equ:ratio_binary}
    p^{\theta}_i(S) = \left\{ 
    \begin{array}{cc}
        \frac{\#\{x \in S: x_i=0 \}}{|S|} & \theta_i < 0 \\
        \frac{\#\{x \in S: x_i=1 \}}{|S|} & \theta_i \geq 0,
    \end{array}
    \right. \quad \forall i \in [d],
\end{equation}
and we are allowed to omit the superscript $\theta$ of $p^{\theta}_i(S)$ when $\theta$ is clear from the context.
\end{definition}

We show the following property of $\texttt{Directed\_Mutation}$.
\begin{lemma}
\label{lmm:DM}
Suppose $S^{\prime} = \texttt{Directed\_Mutation}(f, S, \mu)$, then the population-averaged fitness of $S^{\prime}$ will not decrease compared to that of $S$ in expectation, that is,
\begin{equation}
\label{equ:dm_utility_exp}
     \mathbb{E} \left[ F(S^{\prime}) \right] \geq F(S).
\end{equation}

And for $\forall \delta \in (0,1)$, if $|S| = \Omega \left(  \frac{\log(\frac{d}{\delta})}{\mu^2} \right)$, then with probability $1-\delta$, 
\begin{equation}
\label{equ:dm_feature_lb}
    p_i\left( S^{\prime} \right) \geq \frac{\mu}{4}, \quad \forall i \in [d].
\end{equation}
\end{lemma}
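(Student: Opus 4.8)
The plan is to analyze the two claims separately, since they concern different aspects of the directed mutation module. For the monotonicity claim \eqref{equ:dm_utility_exp}, I would work site by site. Fix a dimension $i \in [d]$. If $i \notin \mathcal{I}$, then $x_i' = x_i$ for every $x \in S$, so that coordinate's contribution to $F$ is unchanged. If $i \in \mathcal{I}$, the module's selection rule guarantees $\frac{1}{M}\sum_{x \in S}\theta_i x_i \le \theta_i \bar x_i$ where $\bar x_i = \tfrac12$ is the mean of a uniform bit. Now for each $x \in S$, the mutated coordinate satisfies $\mathbb{E}[x_i'] = (1-\mu)x_i + \mu \cdot \tfrac12$, so $\mathbb{E}[\theta_i x_i'] = (1-\mu)\theta_i x_i + \mu\,\theta_i \bar x_i$. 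Averaging over $x \in S$ and using the triggering condition gives $\frac{1}{M}\sum_x \mathbb{E}[\theta_i x_i'] = (1-\mu)\,\frac1M\sum_x \theta_i x_i + \mu\,\theta_i\bar x_i \ge \frac1M\sum_x \theta_i x_i$, i.e. coordinate $i$'s expected contribution to $F$ does not decrease. Summing over all $i$ and using linearity of $f$ yields \eqref{equ:dm_utility_exp}.

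For the high-probability lower bound \eqref{equ:dm_feature_lb}, I would again split on whether $i \in \mathcal{I}$. If $i \notin \mathcal{I}$, the triggering condition fails, which (after rearranging, as above) forces $p_i(S) \ge \tfrac12$ — one checks that $\frac1M\sum_x \theta_i x_i > \theta_i \bar x_i$ is equivalent to $p_i(S) > \tfrac12$ regardless of the sign of $\theta_i$ — and since no mutation occurs at site $i$, $p_i(S') = p_i(S) \ge \tfrac12 \ge \mu/4$ deterministically. If $i \in \mathcal{I}$, I would compute, for each $x \in S$, the probability that $x_i'$ equals the favored value $v_i \in \{0,1\}$: conditioned on mutation firing (probability $\mu$) it is $\tfrac12$, and conditioned on no mutation it is $\mathbf 1[x_i = v_i]$, so the marginal probability is at least $\mu/2$ for every individual, independently across individuals. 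Thus $|S'|\,p_i(S')$ stochastically dominates a $\mathrm{Binomial}(|S|, \mu/2)$ variable, whose mean is $|S|\mu/2$. A multiplicative Chernoff bound gives $\Pr[p_i(S') < \mu/4] \le \exp(-c\,|S|\mu^2)$ for an absolute constant $c$; choosing $|S| = \Omega(\mu^{-2}\log(d/\delta))$ makes this at most $\delta/d$, and a union bound over the $d$ coordinates finishes the proof.

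The main obstacle is bookkeeping rather than conceptual: one must be careful that the "favored feature" direction in Definition \ref{def:ratio_binary} flips with the sign of $\theta_i$, so the equivalence between the module's triggering inequality and a clean statement about $p_i(S)$ must be verified in both sign cases (and the degenerate case $\theta_i = 0$, where the contribution is zero and the claim is vacuous for the utility part while $p_i$ is defined via the $\theta_i \ge 0$ branch). The other mild subtlety is that the randomness in $\texttt{Directed\_Mutation}$ at distinct sites, and across distinct individuals at a fixed site, is independent by Definition \ref{opt:mut}, which is exactly what licenses the Binomial domination and the coordinatewise union bound; I would state this independence explicitly before invoking Chernoff. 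Everything else — the linearity reduction for \eqref{equ:dm_utility_exp} and the multiplicative Chernoff estimate for \eqref{equ:dm_feature_lb} — is routine.
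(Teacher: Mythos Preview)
Your proposal is correct and follows essentially the same route as the paper: a coordinatewise analysis showing that for $i\in\mathcal I$ the expected contribution $\theta_i x_i$ (equivalently $p_i$) can only move toward the favored value, while for $i\notin\mathcal I$ nothing changes and $p_i(S')=p_i(S)>\tfrac12$ deterministically; then a Chernoff/union-bound argument on the $\mathrm{Binomial}(|S|,\mu/2)$-dominated count for $i\in\mathcal I$. If anything, your write-up is more explicit than the paper's, which simply states $\mathbb E[p_i(S')]\ge \mu/2$ and invokes ``a standard argument of concentration and a union bound'' without separating the deterministic $i\notin\mathcal I$ case or naming the Binomial domination.
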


\begin{proof}
See Appendix \ref{prf_lmm:DM}.
\end{proof}

\subsubsection{High probability events on $\min_i p_i^{\tilde \theta_{t+1}} \left( S^{\prime}_t \right)$}

The Directed Mutation step of Alg.\ref{alg:one-level(PS)} ensures $S^{\prime}_t$ is always sufficient with the feature favored by current $f_{\tilde \theta_{t+1}}$ in each dimension $i$ throughout $T$ rounds, i.e. $\min_i p_i^{\tilde \theta_{t+1}} \left( S^{\prime}_t \right)$ is lower bounded for $\forall t+1 \in [T]$, recall Definition \ref{def:ratio_binary} for the definition of $p^{\theta}_i(S)$.

We introduce the following line of events where this sufficiency of $S^{\prime}_t$ holds and show the intersection of them happens with high probability when the population size $M$ is sufficiently large.

\begin{definition}
\label{def:DM_hpevent}
Define $E^{\operatorname{DM}}_t$ to be the event where $\min_i p_i^{\tilde \theta_{s+1}} \left( S^{\prime}_s \right)$ is lower bounded by $\frac{\mu}{4}$ for $\forall s \leq t$, that is 
\begin{equation}
    E^{\operatorname{DM}}_t:= \{ \forall s \leq t, \quad \min_i p_i^{\tilde \theta_{s+1}}  \left( S^{\prime}_s \right) \geq \frac{\mu}{4}\}.
\end{equation}
Also define $E_{\operatorname{DM}}$ as the intersection of $\{E^{\operatorname{DM}}_t\}_{t=0}^{T-1}$.
\end{definition}

\begin{lemma}
\label{lmm:lb_min_pi}
For $\forall \delta \in (0,1)$, if the population size is sufficiently large s.t. $M = O \left(  \frac{\log(\frac{dT}{\delta})}{\mu^2} \right)$, then
\begin{equation}
    \mathbb{P}\left( E_{\operatorname{DM}} \right) \geq  1-\delta.
\end{equation}
Since event $E_{\operatorname{DM}}$ is independent from the realization of $\theta^{\star}$, thus it still holds with high probability when conditioned on $\theta^{\star}$. Denote by $E^{\theta^{\star}}_{\operatorname{DM}}$, the event $E_{\operatorname{DM}}$ conditioned on $\theta^{\star}$, then
\begin{equation*}
     \mathbb{P}\left( E^{\theta^{\star}}_{\operatorname{DM}} \right) \geq  1-\delta.
\end{equation*}
\end{lemma}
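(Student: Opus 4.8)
The plan is to obtain the bound on $E_{\operatorname{DM}}$ by a union bound over the $T$ rounds, reducing to the single-round guarantee already established in Lemma~\ref{lmm:DM}. First I would fix a round $s \in \{0,1,\dots,T-1\}$ and condition on the history $\mathcal{H}^{M}_{s}$, which (by the filtration definition) determines $S_s$ and the sampled parameter $\tilde\theta_{s+1}$ but not yet the randomness in the Directed Mutation step that produces $S_s'$. Conditioned on this, $S_s' = \texttt{Directed\_Mutation}(f_{\tilde\theta_{s+1}}, S_s, \mu)$ is exactly the setting of Lemma~\ref{lmm:DM} with population $S = S_s$ of size $M$ and fitness direction $\tilde\theta_{s+1}$. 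Hence, applying the high-probability part of Lemma~\ref{lmm:DM} with failure probability $\delta' := \delta/T$, provided $M = \Omega\!\left(\frac{\log(dT/\delta)}{\mu^2}\right)$ we get $\mathbb{P}\big(\min_i p_i^{\tilde\theta_{s+1}}(S_s') \geq \tfrac{\mu}{4}\,\big|\,\mathcal{H}^{M}_{s}\big) \geq 1 - \delta/T$. Since this holds for every realization of $\mathcal{H}^{M}_{s}$, it holds unconditionally as well.

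Next I would take a union bound over $s = 0,1,\dots,T-1$. Writing $A_s := \{\min_i p_i^{\tilde\theta_{s+1}}(S_s') \geq \tfrac{\mu}{4}\}$, we have $E_{\operatorname{DM}} = \bigcap_{s=0}^{T-1} A_s$ (this is just $E^{\operatorname{DM}}_{T-1}$, the strongest event in the nested family), and
\begin{equation*}
\mathbb{P}\big(E_{\operatorname{DM}}^{c}\big) = \mathbb{P}\Big(\bigcup_{s=0}^{T-1} A_s^{c}\Big) \leq \sum_{s=0}^{T-1} \mathbb{P}(A_s^{c}) \leq T \cdot \frac{\delta}{T} = \delta,
\end{equation*}
which gives $\mathbb{P}(E_{\operatorname{DM}}) \geq 1-\delta$. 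For the conditional statement, the key observation is that the event $A_s$ depends only on the output of the mutation module applied to $(S_s, \tilde\theta_{s+1})$, and the argument above already conditions on an arbitrary value of $\mathcal{H}^{M}_{s}$; one then notes that $\theta^{\star}$ enters the trajectory only through the noisy feedback $u(\cdot)$ and the Bayesian coupling, neither of which affects the internal coin flips of $\texttt{Directed\_Mutation}$ once $S_s$ and $\tilde\theta_{s+1}$ are fixed. So conditioning additionally on $\theta^{\star}$ leaves the per-round bound $1-\delta/T$ intact, and the same union bound yields $\mathbb{P}(E^{\theta^{\star}}_{\operatorname{DM}}) \geq 1-\delta$.

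The only real subtlety — and the step I would be most careful about — is the conditional independence claim: making precise in what sense $A_s$ is "independent from the realization of $\theta^{\star}$." The cleanest route is to argue at the level of the algorithm's randomness: the mutation module's output is a deterministic function of $(S_s, \tilde\theta_{s+1})$ together with fresh independent Bernoulli$(\mu)$ and uniform coin flips, so $\mathbb{P}(A_s \mid S_s, \tilde\theta_{s+1})$ is a fixed number not involving $\theta^{\star}$; Lemma~\ref{lmm:DM}'s high-probability bound is precisely a lower bound on this number that is uniform over all inputs $(S_s,\tilde\theta_{s+1})$. Because the bound is uniform in the conditioning, it survives taking expectations against \emph{any} joint law of $(S_s, \tilde\theta_{s+1})$, including one further conditioned on $\theta^{\star}$. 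Everything else — the reduction to Lemma~\ref{lmm:DM} and the union bound — is routine, so I do not anticipate additional obstacles.
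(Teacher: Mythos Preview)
Your proposal is correct and follows essentially the same approach as the paper: instantiate Lemma~\ref{lmm:DM} at each round with failure probability $\delta/T$ and take a union bound over the $T$ rounds. Your treatment is in fact more careful than the paper's one-line proof, particularly in spelling out the conditioning on $\mathcal{H}^{M}_{s}$ and in justifying why the bound survives conditioning on $\theta^{\star}$.
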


\begin{proof}
See Appendix \ref{prf_lmm:lb_min_pi}.
\end{proof}

\subsection{Linear convergence of \texttt{Crossover\_Selection} (Module \ref{mod:rcb&slc})}

Continuing from Theorem \ref{thm:ascent}, when $S$ is sufficient with the feature favored by $f$ in every dimension, i.e. $\min_i p_i(S)$ is lower bounded, then in expectation, $F(S^{\prime})$ converges linearly to $F^{\star}$ with a nontrivial convergence rate.

\begin{lemma}[Linear Convergence]
\label{lmm:LC_under_conditions}
Suppose $S^{\prime} = \texttt{Crossover\_Selection}(f,S)$, then
    \begin{equation}
        \mathbb{E} \left[ F(S^{\prime}) \right] \geq F(S) + \eta \left( F^{\star} - F(S) \right),
    \end{equation}
with factor $\eta = \frac{\min_i p_i(S)}{\sqrt{2d}}$.
\end{lemma}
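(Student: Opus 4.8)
The plan is to obtain the inequality directly from the ascent property of Theorem~\ref{thm:ascent}, which already gives $\mathbb{E}[F(S')] \geq F(S) + \frac{1}{\sqrt{2d}}\sum_{i}|\theta_i|\,\mathrm{Var}_i(S)$. So it suffices to show that the per-motif variance term dominates $\frac{\min_i p_i(S)}{\sqrt{2d}}(F^\star - F(S))$, i.e.
\[
\sum_{i}|\theta_i|\,\mathrm{Var}_i(S) \;\geq\; \Big(\min_i p_i(S)\Big)\big(F^\star - F(S)\big).
\]

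First I would derive an explicit formula for the optimality gap. Since $f(x) = \langle\theta,x\rangle$ is separable over the coordinates of $x \in \{0,1\}^d$, the maximizer sets $x^\star_i = 1$ when $\theta_i \geq 0$ and $x^\star_i = 0$ otherwise, so $F^\star = \sum_i \max\{\theta_i,0\}$. Writing $\bar x_i$ for the fraction of $x \in S$ with $x_i = 1$, we have $F(S) = \sum_i \theta_i \bar x_i$; comparing with Definition~\ref{def:ratio_binary}, $p_i(S) = \bar x_i$ when $\theta_i \geq 0$ and $p_i(S) = 1 - \bar x_i$ when $\theta_i < 0$. Checking these two cases shows that coordinate $i$ contributes exactly $|\theta_i|(1 - p_i(S))$ to the gap in either case, hence $F^\star - F(S) = \sum_i |\theta_i|(1 - p_i(S))$. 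Next I would record that $x_i$ is a Bernoulli random variable with mean $\bar x_i$ under a uniform draw from $S$, so $\mathrm{Var}_i(S) = \bar x_i(1 - \bar x_i) = p_i(S)(1 - p_i(S))$, again using $p_i(S) \in \{\bar x_i,\, 1-\bar x_i\}$.

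Substituting both identities, the target inequality becomes $\sum_i |\theta_i|\,p_i(S)(1-p_i(S)) \geq \big(\min_j p_j(S)\big)\sum_i|\theta_i|(1-p_i(S))$, which holds term by term because $p_i(S) \geq \min_j p_j(S)$ and the factor $|\theta_i|(1-p_i(S))$ is nonnegative; summing over $i$ and dividing by $\sqrt{2d}$ gives the claim with $\eta = \min_i p_i(S)/\sqrt{2d}$. There is no genuinely hard step; the only care needed is the bookkeeping in the $\theta_i \geq 0$ versus $\theta_i < 0$ case split, so that $p_i(S)$ always tracks the favored feature, together with the trivial observation that when $F(S) = F^\star$ all $p_i(S) = 1$ and the bound is vacuous. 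It is also worth noting for later use that $\eta \in (0,1)$ whenever $\min_i p_i(S) > 0$ and $d \geq 1$, since $\sqrt{2d} > 1$.
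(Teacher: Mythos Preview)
Your proposal is correct and follows essentially the same route as the paper: invoke the ascent inequality of Theorem~\ref{thm:ascent}, express both $\mathrm{Var}_i(S)$ and the optimality gap $F^\star - F(S)$ in terms of the favored-feature ratios $p_i(S)$, and conclude by the termwise bound $p_i(S) \geq \min_j p_j(S)$. The paper's proof in Appendix~\ref{prf_lmm:LC_under_conditions} is the same argument with slightly less detail on the case split.
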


\begin{proof}
See Appendix \ref{prf_lmm:LC_under_conditions}.
\end{proof}

\subsection{Thompson Sampling}
According to Assumption \ref{asmp_noise}, in the dataset $D_t = \left\{ \left\{x_{s,i}, u(x_{s,i}) \right\}_{i = 1}^{M}, s= [t]  \right\}$
\begin{equation}
\label{equ:noise}
    u(x_{s,i}) = f_{\theta^{\star}}(x_{s,i}) + \xi_{s,i},
\end{equation}
with $\xi_{s,i}$ i.i.d. sampled from $\mathcal{N}( 0, \sigma^2)$ and independent from all other stochasticity. 

Therefore, by Bayes' Rule, the posterior of $\theta^{\star}$ give $D_{t-1}$ is also Gaussian distributed, for $\forall t \in [T]$:
\begin{equation}
    \label{equ:TS_dist} \tilde{\theta}_{t} \sim \mathcal{N}(\hat \theta_{t}, V^{-1}_{t}),
\end{equation}
where
\begin{align}
    \label{equ:cov_matrix} V_{t} &= \frac{1}{\sigma^2} \Phi^{\top}_{t-1} \Phi_{t-1} + \lambda I,\\
    \label{equ:hat_theta} \hat \theta_{t} &= \frac{1}{\sigma^2} V^{-1}_{t} \Phi^{\top}_{t-1} U_{t-1}, 
\end{align}
recall from Alg. \ref{alg:one-level(PS)} for the updating rules of $\Phi_t$ and $U_t$.

Given the posterior distribution (\ref{equ:TS_dist}), we are able to show $\tilde{\theta}_{t}$ concentrates to $\theta^{\star}$ in term of the normalized distance between them.

\subsubsection{High probability events on $\left\|\tilde{\theta}_{t}- \theta^{\star} \right\|_{V_{t}}$}

We introduce two useful lines of high probability events similar to those in \citet{abeille2017linear}, except here these events are defined conditioned on any realization of $\theta^{\star}$ sampled from its prior. We rephrased the definition to match our notations.
\begin{definition}
Conditioned on $\theta^{\star}$, for any given probability tolerance $\delta \in (0,1)$, each time step $t \in [T]$ and a line of ($\theta^{\star}$ dependent) radiuses $\{\beta^{\theta^{\star}}_{t}\left(\delta \right)\}_{t=1}^{T}$, we define $\hat{E}_{t}^{\theta^{\star}}$ as the event where $\hat \theta_{s}$ concentrates around $\theta^{\star}$ for all previous steps $s \leq t$, i.e. 
\begin{equation}
    \hat{E}^{\theta^{\star}}_{t}(\delta)=\left\{ \left. \forall s \leq t,\left\|\hat \theta_{s}-\theta^{\star}\right\|_{V_{s}} \leq \beta^{\theta^{\star}}_{s}\left(\delta \right) \right \vert \theta^{\star} \right\}. 
\end{equation}
with a line of ($\theta^{\star}$ independent) radiuses $\{\alpha_{t}\left(\delta \right)\}_{t=1}^{T}$, we also define $\tilde{E}_{t}^{\theta^{\star}}$ as the event where the sampled parameter $\tilde{\theta}_{s}$ concentrates around $\hat \theta_{s}$ for all steps $s \leq t$, i.e.
\begin{equation}
    \tilde{E}^{\theta^{\star}}_{t}(\delta)=\left\{ \left. \forall s \leq t,\left\|\tilde{\theta}_{s}-\hat \theta_{s}\right\|_{V_{s}} \leq \alpha_{s}\left(\delta\right) \right \vert \theta^{\star} \right\}.
\end{equation}
Then under the same $\delta$ and $\theta^{\star}$, which are omitted here, we have $\hat{E}:=\hat{E}_{T} \subset \cdots \subset \hat{E}_{1}$,
$\tilde{E}:=\tilde{E}_{T} \subset \cdots \subset \tilde{E}_{1}$ and define $E^{\theta^{\star}}(\delta) := \hat{E}^{\theta^{\star}}(\delta) \cap \tilde{E}^{\theta^{\star}}(\delta)$.
\end{definition}

With appropriate choices of $\{\beta_{t}\}$ and $\{\alpha_{t}\}$, event $E^{\theta^{\star}}(\delta)$ defined above happens with high probability as stated in the following lemma.

\begin{lemma}
\label{lmm:hp_events}
Under Assumption \ref{asmp_linearf} and \ref{asmp_noise}, conditioned on any realization of $\theta^{\star}$ drawn from its prior, for $\forall \delta \in (0,1)$ and any series of feature vectors $\left( \{x_{1,i}\}_{i = 1}^M , \cdots, \{x_{T,i}\}_{i = 1}^M \right)$ where each  $\|x_{t,i}\| \leq L$,  $\mathbb{P}\left( E^{\theta^{\star}} \left( \frac{\delta}{2} \right) \right) \geq  1-\delta$ with $\beta_{t}^{\theta^{\star}} \left(\delta \right)$ and $\alpha_{t}\left(\delta \right)$ specified as
\begin{align}
    \label{beta_t} \beta_{t}^{\theta^{\star}} \left(\delta \right) &= \sqrt{2 \log \left( \frac{1}{\delta} \right) + d \log\left(\frac{\sigma^2 \lambda d + t M L^2}{ \sigma^2 \lambda d}\right)} + \sqrt{\lambda} \|\theta^{\star}\|, \quad \forall t \in [T].\\
    \label{alpha_t}\alpha_{t}\left(\delta \right) &= 2\sqrt{d \log \left( \frac{T}{\delta} \right) } + \sqrt{d} , \quad \forall t \in [T].
\end{align}
\end{lemma}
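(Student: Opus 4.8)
The event $E^{\theta^\star}=\hat E^{\theta^\star}\cap\tilde E^{\theta^\star}$ splits into a least-squares concentration part and a posterior-sample concentration part, so the plan is to show each of $\hat E^{\theta^\star}(\delta/2)$ and $\tilde E^{\theta^\star}(\delta/2)$ holds with probability at least $1-\delta/2$ and finish with a union bound. Conditioning on $\theta^\star$ is harmless because, by Assumption \ref{asmp_noise}, the noises $\xi_{s,i}$ are i.i.d.\ $\mathcal N(0,\sigma^2)$ and independent of $\theta^\star$ and of the feature sequence, so the noise model is unchanged under the conditioning.

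\textbf{Step 1 ($\hat E^{\theta^\star}$, the radius $\beta_t$).} Writing $U_{t-1}=\Phi_{t-1}\theta^\star+\xi_{t-1}$ and using $\tfrac{1}{\sigma^2}\Phi_{t-1}^\top\Phi_{t-1}=V_t-\lambda I$ from \eqref{equ:RLS}, one gets the standard ridge-regression decomposition
\[
\hat\theta_t-\theta^\star=\tfrac{1}{\sigma^2}V_t^{-1}\Phi_{t-1}^\top\xi_{t-1}-\lambda V_t^{-1}\theta^\star ,
\]
hence $\|\hat\theta_t-\theta^\star\|_{V_t}\le \tfrac{1}{\sigma^2}\|\Phi_{t-1}^\top\xi_{t-1}\|_{V_t^{-1}}+\sqrt\lambda\,\|\theta^\star\|$, using $V_t^{-1}\preceq\lambda^{-1}I$ for the prior term. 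That prior term is exactly the second summand of $\beta_t^{\theta^\star}$ in \eqref{beta_t}. For the noise term I would invoke the self-normalized martingale bound (as in \citet{abbasi2011improved,abeille2017linear}) in its batched form: with the filtration in which each feature block $\{x_{t,i}\}_i$ is revealed before the noise block $\{\xi_{t,i}\}_i$, the stacked noise is a conditionally $1$-sub-Gaussian martingale-difference sequence, so with probability at least $1-\delta$, simultaneously for all $t$,
\[
\tfrac{1}{\sigma^2}\|\Phi_{t-1}^\top\xi_{t-1}\|_{V_t^{-1}}^2\le 2\log\tfrac1\delta+\log\tfrac{\det V_t}{\det(\lambda I)} .
\]
Bounding $\det V_t\le(\operatorname{tr}V_t/d)^d$ with $\operatorname{tr}V_t\le \sigma^{-2}tML^2+\lambda d$ turns the log-determinant into $d\log\!\big(\tfrac{\sigma^2\lambda d+tML^2}{\sigma^2\lambda d}\big)$, which reproduces $\beta_t^{\theta^\star}(\delta)$; replacing $\delta$ by $\delta/2$ gives $\mathbb P(\hat E^{\theta^\star}(\delta/2))\ge 1-\delta/2$.

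\textbf{Step 2 ($\tilde E^{\theta^\star}$, the radius $\alpha_t$), and combining.} By \eqref{equ:TS_dist}, conditionally on $D_{t-1}$ the vector $V_t^{1/2}(\tilde\theta_t-\hat\theta_t)$ is a standard $d$-dimensional Gaussian, so $\|\tilde\theta_t-\hat\theta_t\|_{V_t}$ is the Euclidean norm of such a vector. Gaussian Lipschitz concentration gives $\mathbb P\big(\|\tilde\theta_t-\hat\theta_t\|_{V_t}\ge\sqrt d+\sqrt{2\log(1/\delta')}\big)\le\delta'$; taking $\delta'=\delta/(2T)$ and union-bounding over $t\in[T]$ yields radius $\sqrt d+\sqrt{2\log(2T/\delta)}$, which is dominated by $\alpha_t(\delta)=\sqrt d+2\sqrt{d\log(T/\delta)}$ of \eqref{alpha_t} for $d\ge1$, and $\mathbb P(\tilde E^{\theta^\star}(\delta/2))\ge 1-\delta/2$. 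Since $E^{\theta^\star}(\delta/2)=\hat E^{\theta^\star}(\delta/2)\cap\tilde E^{\theta^\star}(\delta/2)$, a union bound over the two failure events proves $\mathbb P(E^{\theta^\star}(\delta/2))\ge 1-\delta$.

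\textbf{Main obstacle.} The only delicate point is the filtration bookkeeping behind the batched self-normalized bound in Step 1: in TS-DE the features $x_{t,i}$ are produced by \texttt{Directed\_Mutation}/\texttt{Crossover\_Selection} driven by $\tilde\theta_t$, which itself depends on all earlier noise. One must set up the $\sigma$-algebras so that each feature block is predictable and the stacked noise is genuinely a martingale-difference sequence adapted to it, and then check the self-normalized inequality applies verbatim with the batched design matrix $\Phi_{t-1}$ (equivalently, ``any series of feature vectors'' in the statement should be read as ``any predictable such series''). Everything downstream is the standard linear-bandit concentration toolkit.
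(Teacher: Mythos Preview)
Your proposal is correct and follows essentially the same route as the paper: split $E^{\theta^\star}(\delta/2)$ into $\hat E^{\theta^\star}(\delta/2)$ and $\tilde E^{\theta^\star}(\delta/2)$, handle the former via the self-normalized martingale bound of \citet{abbasi2011improved} (the paper black-boxes this as Proposition~\ref{pop:beta_radius} while you unpack the ridge decomposition explicitly), and handle the latter by noting $\|\tilde\theta_t-\hat\theta_t\|_{V_t}$ is the norm of a standard Gaussian and union-bounding over $t$ (the paper uses a $\chi_d$ tail bound where you use Gaussian Lipschitz concentration, which gives a slightly tighter constant). The filtration bookkeeping you flag as the main obstacle is exactly what the paper formalizes by introducing the fine-grained filtration $\{\mathcal F_{t,i}\}$; one small slip is that in Step~2 your comparison should be with $\alpha_t(\delta/2)$ rather than $\alpha_t(\delta)$, but the inequality still holds.
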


\begin{proof}
See Appendix \ref{prf_lmm:hp_events}.
\end{proof}

\subsection{Prediction error under batch update}
Before regret decomposition, one more preparation to have is a modified concentration on the accumulated prediction error of $\tilde\theta_t$ catering for the batch-data update routine in Alg.\ref{alg:one-level(PS)}. In the following lemma, we summarize a more general version of this concentration result.
\begin{lemma}
\label{lmm:batch_update}
Suppose at any timestep $a_t$ lies in a confidence ellipsoid around $b_t$ in the sense that 
\begin{equation*}
    \| a_t - b_t \|_{V_{t}} \leq \eta_{t}(\delta), \quad \forall t \in [T],
\end{equation*}
and $\|x_{t,i}\| \leq L , \forall t \in [T], i \in [M]$, then it holds that,
\begin{align}
\nonumber & \sum_{t=1}^T \sum_{i=1}^M \left| \langle a_t - b_t, x_{t,i} \rangle \right|\\
\label{def:pred_err}\leq & \eta_T(\delta) \sqrt{ \frac{2L^2 + 2\lambda}{\lambda}} \cdot \sqrt{ dMT \log\left(\frac{\sigma^2 d \lambda + M T L^2}{\sigma^2 d \lambda}\right)}  + \eta_T(\delta) \frac{ 2L}{\sqrt{\lambda}} \cdot dM \log\left(\frac{\sigma^2 d \lambda + M T L^2}{\sigma^2 d \lambda}\right).
\end{align}

And let us give an alias $\operatorname{RGT} \left( \eta_T(\delta) \right)$ to the RHS of (\ref{def:pred_err}).
\end{lemma}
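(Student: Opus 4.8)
The plan is to reduce the batched quantity $\sum_{t=1}^T \sum_{i=1}^M |\langle a_t - b_t, x_{t,i}\rangle|$ to a familiar elliptical-potential argument, but with the subtlety that all $M$ observations in round $t$ are evaluated against the \emph{same} design matrix $V_t$ (the data from round $t$ has not yet been incorporated). First I would bound each summand by Cauchy--Schwarz in the $V_t$-geometry: $|\langle a_t - b_t, x_{t,i}\rangle| \le \|a_t - b_t\|_{V_t}\,\|x_{t,i}\|_{V_t^{-1}} \le \eta_t(\delta)\,\|x_{t,i}\|_{V_t^{-1}} \le \eta_T(\delta)\,\|x_{t,i}\|_{V_t^{-1}}$, using monotonicity of $\eta_t$. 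So it remains to control $\sum_{t=1}^T\sum_{i=1}^M \|x_{t,i}\|_{V_t^{-1}}$, and then apply Cauchy--Schwarz over the $MT$ terms to pass to $\sqrt{MT}\cdot\sqrt{\sum_{t,i}\|x_{t,i}\|_{V_t^{-1}}^2}$.

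The core step is a batched elliptical potential lemma: show $\sum_{t=1}^T\sum_{i=1}^M \|x_{t,i}\|_{V_t^{-1}}^2 = O\big(d\log\frac{\sigma^2 d\lambda + MTL^2}{\sigma^2 d\lambda}\big)$. The standard (non-batched) argument compares $\log\det V_{t+1} - \log\det V_t$ to $\|x\|_{V_t^{-1}}^2$; here $V_{t+1} = V_t + \tfrac{1}{\sigma^2}\sum_{i=1}^M x_{t,i}x_{t,i}^\top$, so I would telescope $\log\det V_{T+1} - \log\det V_1$ and, within each block, relate $\sum_{i=1}^M \|x_{t,i}\|_{V_t^{-1}}^2$ to the within-block log-determinant increment. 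The one complication is that $\|x_{t,i}\|_{V_t^{-1}}^2$ can exceed $1$ (the feature norms are only bounded by $L=\sqrt d$, not normalized), so the clean inequality $u \le 2\log(1+u)$ fails for the large terms. I would handle this by the usual case split: terms with $\|x_{t,i}\|_{V_t^{-1}}^2 \le 1$ are bounded via $u\le 2\log(1+u)$ and contribute the $\log\det$ telescoping bound, while the number of ``large'' terms is itself $O(d\log(\cdot))$ by a separate potential counting argument, and each large term is bounded by $L^2/\lambda$; this produces the \emph{second}, lower-order term $\eta_T(\delta)\frac{2L}{\sqrt\lambda}dM\log(\cdot)$ in \eqref{def:pred_err} — note it scales like $M$ (not $\sqrt{M}$) precisely because within a single block $V_t$ is frozen, so up to $M$ large terms can pile up before the potential updates. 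Finally I would bound $\log\det V_{T+1} \le d\log(\lambda + MTL^2/(\sigma^2 d))$ via AM--GM on the eigenvalues and $\log\det V_1 = d\log\lambda$, assemble the pieces, and track the constants $\sqrt{(2L^2+2\lambda)/\lambda}$ coming from the $u\le \frac{2L^2+2\lambda}{\lambda}\log(1+u)$-type rescaling needed when $\|x\|^2/\lambda$ can be as large as $L^2/\lambda$.

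The main obstacle I expect is getting the batched potential bookkeeping exactly right — in particular making the ``number of large terms'' counting argument respect the block structure (the large-term count must be against the sequence of \emph{block-updated} matrices $V_1, V_2,\dots,V_{T+1}$, and one must check that having $M$ copies of a potentially-large term inside one block is still absorbed by a single $O(d\log(\cdot))$ budget because after at most $d\log(\cdot)$ such ``surprising'' blocks the determinant has grown too much). Once the batched elliptical potential lemma is in hand, the rest is Cauchy--Schwarz and plugging in the eigenvalue bound, which is routine. I would state the potential lemma as a standalone sub-claim, prove it by the case split above, and then derive \eqref{def:pred_err} in two lines.
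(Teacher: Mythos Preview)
Your high-level strategy is right and you have correctly diagnosed why the second term in \eqref{def:pred_err} scales as $M$: within a block $V_t$ is frozen. But the case split you propose---on whether an \emph{individual} $\|x_{t,i}\|_{V_t^{-1}}^2$ exceeds $1$---does not work, and the order in which you apply Cauchy--Schwarz and the split is off. Even when every individual term is small, $\sum_{i=1}^M \|x_{t,i}\|_{V_t^{-1}}^2$ does \emph{not} telescope against the fixed $V_t$: the inequality $\sum_i \|x_{t,i}\|_{V_t^{-1}}^2 \le C\cdot(\log\det V_{t+1}-\log\det V_t)$ is simply false in general (take all $x_{t,i}$ identical and $M$ large). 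And if you apply Cauchy--Schwarz over all $MT$ terms \emph{before} splitting, the large-term piece also inherits a $\sqrt{MT}$ factor rather than the $M$ you need.

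The resolution---which is what the paper does and what your ``obstacle'' paragraph is groping toward---is to split at the \emph{block} level \emph{before} the second Cauchy--Schwarz: set $C=\{t:\det(V_{t+1})/\det(V_t)>2\}$. For $t\in C$ bound each summand crudely by $\eta_T L/\sqrt{\lambda}$; since $|C|\le \log_2(\det V_{T+1}/\det V_1)\le 2d\log(\cdot)$ this yields exactly the $\eta_T\tfrac{2L}{\sqrt{\lambda}}dM\log(\cdot)$ term. For $t\notin C$ the key device your plan is missing is the within-block intermediate matrices $V_{t,l}=V_t+\tfrac{1}{\sigma^2}\sum_{j\le l}x_{t,j}x_{t,j}^\top$: the block condition gives $\det(V_{t,i-1})/\det(V_t)\le \det(V_{t+1})/\det(V_t)\le 2$, so by the determinant-ratio norm inequality (Lemma~12 of \cite{abbasi2011improved}) one can swap $\|x_{t,i}\|_{V_t^{-1}}^2\le 2\|x_{t,i}\|_{V_{t,i-1}^{-1}}^2$, and \emph{now} the standard sequential elliptical potential telescopes over the refined sequence $(V_{t,l})$. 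Your $(2L^2+2\lambda)/\lambda$ constant and the final $\log\det$ bound are then exactly right.
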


\begin{proof}
See Appendix \ref{prf_lmm:batch_update}.
\end{proof}

\subsection{Regret decomposition}

Recall the notation that $x^{\star}$ is a maximum point of $f_{\theta^{\star}}$.

By the scheme of posterior sampling, $f_{\theta^{\star}}(x^{\star})$ and $f_{\tilde \theta_t}(x_t^{\star})$ are identically distributed conditioned on $D_{t-1}$, which leads to
\begin{equation}
\label{equ:PS}
    \mathbb{E}\left[ \left. f_{\theta^{\star}}(x^{\star}) - f_{\tilde \theta_t}(x_t^{\star}) \right \vert D_{t-1} \right] = 0.
\end{equation}

With expectation taken over all 
stochasticity, the per-round Bayesian regret is
\begin{align*}
        \mathbb{E} \left[ \sum_{i=1}^M \left( f_{\theta^{\star}}(x^{\star}) - f_{\theta^{\star}}(x_{t,i}) \right) \right]
        &= \sum_{i=1}^M \mathbb{E} \left[ f_{\theta^{\star}}(x^{\star}) - f_{\tilde \theta_t}(x_t^{\star}) \right] + \sum_{i=1}^M \mathbb{E} \left[ f_{\tilde \theta_t}(x_t^{\star}) - f_{\theta^{\star}}(x_{t,i}) \right]\\ 
        &= \sum_{i=1}^M \mathbb{E} \left[ \mathbb{E}\left[ \left. f_{\theta^{\star}}(x^{\star}) - f_{\tilde \theta_t}(x_t^{\star}) \right \vert D_{t-1} \right] \right] + \sum_{i=1}^M \mathbb{E} \left[ f_{\tilde \theta_t}(x_t^{\star}) - f_{\theta^{\star}}(x_{t,i}) \right]\\
        &\overset{(\ref{equ:PS})}{=} \sum_{i=1}^M \mathbb{E} \left[ f_{\tilde \theta_t}(x_t^{\star}) - f_{\theta^{\star}}(x_{t,i}) \right]
        \\
        &= \sum_{i=1}^M \mathbb{E} \left[ f_{\tilde \theta_t}(x_t^{\star}) - f_{\tilde \theta_t}(x_{t,i}) + f_{\tilde \theta_t}(x_{t,i}) - f_{\theta^{\star}}(x_{t,i}) \right]\\
        &= \mathbb{E} \left[M \left( F^{\star}_t - F_t(S_t) \right) \right] + \mathbb{E} \left[\sum_{i=1}^M \langle \tilde\theta_t -\theta^{\star}, x_{t,i} \rangle \right].
\end{align*}
Then the total Bayesian regret over $T$ rounds sums up to be 
\begin{align}
    \nonumber \operatorname{BayesRGT}(T,M) &= \mathbb{E} \left[M \sum_{t=1}^T  F^{\star}_t - F_t(S_t) \right] + \mathbb{E} \left[ \sum_{t=1}^T \sum_{i=1}^M \langle \tilde\theta_t -\theta^{\star}, x_{t,i} \rangle \right]\\
    \label{rgt_dcp} &= \mathbb{E}_{\theta^{\star} \sim \pi}\left[ \mathbb{E}_{\theta^{\star}} \left[ M \sum_{t=1}^T  F^{\star}_t - F_t(S_t) \right] \right] + \mathbb{E}_{\theta^{\star} \sim \pi} \left[ \mathbb{E}_{\theta^{\star}} \left[  \sum_{t=1}^T \sum_{i=1}^M \langle \tilde\theta_t -\theta^{\star}, x_{t,i} \rangle \right] \right],
\end{align}
where $\mathbb{E}_{\theta^{\star}} \left[ \cdot \right]$ denotes the conditional expectation on a given $\theta^{\star}$: $\mathbb{E}\left[ \left. \cdot \right \vert \theta^{\star} \right]$.

Note that under any realization of $\theta^{\star}$, the regret of each individual at any time step should be no more than the range of $f_{\theta^{\star}}$ on domain $\mathcal{X}$. For any $\theta \in R^{d}$ parameterizing the fitness $f_{\theta}$ as $f_{\theta}(x) = \langle \theta, x\rangle$, denote by $B_f^{\theta}$ an upper bound for the range of $f_{\theta}$, i.e.
\begin{equation}
    B_f^{\theta} := 2 L \|\theta\| \geq \max_{x} f_{\theta}(x) - \min_{x} f_{\theta}(x).
\end{equation}
For the regret of each individual in each step, it holds that
\begin{equation}
   f_{\theta^{\star}}(x^{\star}) - f_{\theta^{\star}}(x_{t,i}) \leq 2 \|\theta^{\star}\| L = B_f^{\theta^{\star}}.
\end{equation}

Therefore, when bounding the total regret decomposed as (\ref{rgt_dcp}), it is reasonable to  truncate terms in the RHS of (\ref{rgt_dcp}) with $B_f^{\theta^{\star}}$ to derive a tighter bound.
\begin{align}
    \operatorname{BayesRGT}(T,M)
    \label{equ:rgt_p1} \leq &  \mathbb{E}_{\theta^{\star} \sim \pi} \left[ M \mathbb{E}_{\theta^{\star}} \left[  \sum_{t=1}^T \min \left\{ F^{\star}_t - F_t(S_t), B_f^{\theta^{\star}} \right\}  \right] \right]  \\
    \label{equ:rgt_p2} & + \mathbb{E}_{\theta^{\star} \sim \pi} \left[ \mathbb{E}_{\theta^{\star}} \left[  \sum_{t=1}^T \sum_{i=1}^M \min \left\{ \left| \langle \tilde\theta_t -\theta^{\star}, x_{t,i} \rangle \right|, B_f^{\theta^{\star}} \right\} \right] \right]
\end{align}

\subsection{Bounding the first half (\ref{equ:rgt_p1})}
\subsubsection{After calling  $S_{t-1}^{\prime} = \texttt{Directed\_Mutation}(f_{\tilde \theta_t}, S_{t-1}, \mu)$}

As shown in Lemma \ref{lmm:DM}, the population average of $S_{t}^{\prime}$ under $f_{\tilde \theta_{t+1}}$ in not decreasing from that of $S_{t}$, that is 
\begin{equation}
\label{equ:f_S'}
    \mathbb{E} \left[\left. F_{t+1}(S_t^{\prime}) \right \vert  \mathcal{H}^{M}_{t} \right] \geq F_{t+1}(S_t).
\end{equation}

The other property of $\texttt{Directed\_Mutation}$ is to ensure that w.h.p. $\min_i p_i^{\tilde \theta_{t+1}} \left( S^{\prime}_t \right)$ is lower bounded for $\forall t+1 \in [T]$, which is stated in the definition of event $E^{\theta^{\star}}_{\operatorname{DM}}$ (Definition \ref{def:DM_hpevent}). 
So from here on, given any realization of $\theta^{\star}$, our further analysis is conditioned on $E^{\theta^{\star}}:= E^{\theta^{\star}} \left( \frac{\delta}{2} \right) \cap E^{\theta^{\star}}_{\operatorname{DM}}$.

\begin{corollary}
\label{cor:hp_2conditons}
Given any realization of $\theta^{\star}$, if $M = \Omega \left(  \frac{\log(\frac{dT}{\delta})}{\mu_M^2} \right)$, then $\mathbb{P} \left( E^{\theta^{\star}} \right) \geq 1- 2\delta$ for $\forall \delta \in (0,1)$. Conditioned on $E^{\theta^{\star}}:= E^{\theta^{\star}} \left( \frac{\delta}{2} \right) \cap E^{\theta^{\star}}_{\operatorname{DM}}$, it is guaranteed that   
\begin{align}
    \label{equ:cond_pi}&\min_i p_i^{\tilde \theta_{t+1}}(S^{\prime}_t) \geq C_{S^{\prime}}:= \frac{\mu}{4}, \quad \forall t+1 \in [T],\\
    \label{equ:theta_acc}&\| \tilde \theta_{t} - \theta^{\star} \|_{V_{t}} \leq \beta^{\theta^{\star}}_t\left(\frac{\delta}{2}\right) + \alpha_t\left(\frac{\delta}{2}\right), \quad \forall t \in [T].
\end{align}
where recall the definition of $\beta^{\theta^{\star}}_t$ and $\alpha_t$ from (\ref{beta_t}) and (\ref{alpha_t}).
\end{corollary}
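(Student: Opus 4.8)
The plan is to obtain the corollary simply by union-bounding the two high-probability events already established in Lemma \ref{lmm:lb_min_pi} and Lemma \ref{lmm:hp_events}, and then reading off the two stated guarantees directly from the definitions of those events.

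First, fix an arbitrary realization of $\theta^{\star}$ drawn from the prior. Apply Lemma \ref{lmm:lb_min_pi} with tolerance $\delta$: under the stated population-size condition, the event $E^{\theta^{\star}}_{\operatorname{DM}}$ holds with probability at least $1-\delta$ conditioned on $\theta^{\star}$. The point to be careful about here is that the randomness driving $E^{\operatorname{DM}}_t$ --- the directed-mutation coin flips and the posterior samples $\tilde\theta_{s+1}$ --- is, by the construction of Module \ref{mod:dire_mutation} and Alg.\ref{alg:one-level(PS)}, independent of which $\theta^{\star}$ was realized, so conditioning on $\theta^{\star}$ does not degrade the bound; this is exactly the last clause of Lemma \ref{lmm:lb_min_pi}. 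Next, apply Lemma \ref{lmm:hp_events} conditioned on the same $\theta^{\star}$, with $\delta/2$ in place of $\delta$ and with $L=\sqrt{d}$, to get $\mathbb{P}(E^{\theta^{\star}}(\delta/2)) \geq 1-\delta$, where $E^{\theta^{\star}}(\delta/2)=\hat{E}^{\theta^{\star}}(\delta/2)\cap\tilde{E}^{\theta^{\star}}(\delta/2)$ with the radii $\beta^{\theta^{\star}}_t$ and $\alpha_t$ of (\ref{beta_t})--(\ref{alpha_t}). A union bound over these two events then yields $\mathbb{P}(E^{\theta^{\star}}) = \mathbb{P}(E^{\theta^{\star}}(\delta/2)\cap E^{\theta^{\star}}_{\operatorname{DM}}) \geq 1-2\delta$.

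It then remains to derive (\ref{equ:cond_pi}) and (\ref{equ:theta_acc}) on $E^{\theta^{\star}}$. Bound (\ref{equ:cond_pi}) is immediate: since $E^{\theta^{\star}}_{\operatorname{DM}}$ is the intersection of $\{E^{\operatorname{DM}}_t\}_{t=0}^{T-1}$, it implies in particular $E^{\operatorname{DM}}_{T-1}$, whose defining condition is precisely $\min_i p_i^{\tilde\theta_{s+1}}(S'_s)\geq\mu/4$ for every $s\leq T-1$, i.e. for every $t+1\in[T]$ with $C_{S'}:=\mu/4$. For (\ref{equ:theta_acc}), on $\hat{E}^{\theta^{\star}}(\delta/2)$ we have $\|\hat\theta_t-\theta^{\star}\|_{V_t}\leq\beta^{\theta^{\star}}_t(\delta/2)$ and on $\tilde{E}^{\theta^{\star}}(\delta/2)$ we have $\|\tilde\theta_t-\hat\theta_t\|_{V_t}\leq\alpha_t(\delta/2)$ for all $t\in[T]$; the triangle inequality for the $V_t$-weighted norm gives $\|\tilde\theta_t-\theta^{\star}\|_{V_t}\leq\|\tilde\theta_t-\hat\theta_t\|_{V_t}+\|\hat\theta_t-\theta^{\star}\|_{V_t}\leq\alpha_t(\delta/2)+\beta^{\theta^{\star}}_t(\delta/2)$, which is (\ref{equ:theta_acc}).

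The only genuine subtlety, and the step I would guard most carefully, is that Lemma \ref{lmm:hp_events} is phrased for a fixed list of feature vectors whereas the sequence produced by TS-DE is random and history-dependent. This is resolved by recalling that the concentration of $\hat\theta_t$ around $\theta^{\star}$ behind $\beta^{\theta^{\star}}_t$ comes from a self-normalized martingale inequality valid for any sequence $\{x_{t,i}\}$ predictable with respect to the filtration revealed before the step-$t$ noise --- which the TS-DE iterates are, by the routine in Fig.\ref{fig:stochasticity} --- while the concentration of $\tilde\theta_t$ around $\hat\theta_t$ behind $\alpha_t$ is a Gaussian tail bound for $\tilde\theta_t\sim\mathcal{N}(\hat\theta_t,V_t^{-1})$ whose $V_t$-norm deviation is distribution-free in the (adapted) feature choice. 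Hence the ``any series of feature vectors'' hypothesis of Lemma \ref{lmm:hp_events} covers the algorithmic trajectory, and the union bound above is legitimate.
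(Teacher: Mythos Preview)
Your proposal is correct and follows exactly the approach the paper takes: the paper's proof simply states that the corollary is ``directly derived by combining Lemma \ref{lmm:lb_min_pi} and Lemma \ref{lmm:hp_events},'' and your argument spells out precisely this combination via a union bound together with the triangle inequality for (\ref{equ:theta_acc}). Your added paragraph on the adaptivity of the feature sequence is a helpful clarification that the paper leaves implicit.
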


\begin{proof}
The proof is directly derived by combining Lemma \ref{lmm:lb_min_pi} and Lemma \ref{lmm:hp_events}.
\end{proof}

\subsubsection{After calling  $S_t = \texttt{Crossover\_Selection}(f_{\tilde \theta_t}, S_{t-1}^{\prime})$}

Conditioned on $\theta^{\star}$, we are about to decompose $\sum_{t=1}^T \left( F^{\star}_t - F_t(S_t) \right)$ by leveraging the linear convergence property shown in Lemma \ref{lmm:LC_under_conditions}. Conditionally on $E^{\theta^{\star}}$, applying Lemma \ref{lmm:LC_under_conditions} to each call of $\texttt{Crossover\_Selection}(f_{\tilde \theta_t}, S_{t-1}^{\prime})$ guarantees for $\forall t+1 \in [T],$
\begin{align}
    \nonumber\mathbb{E}_{E^{\theta^{\star}}} \left[ \left. F_{t+1}(S_{t+1})  \right \vert  \mathcal{H}^{R}_{t} \right] &\geq \mathbb{E}_{E^{\theta^{\star}}} \left[ \left. F_{t+1}(S_{t}^{\prime}) +  \frac{ \min_i p_i^{\tilde \theta_{t+1}}(S^{\prime}_t)}{\sqrt{2d}} \left( F^{\star}_{t+1} - F_{t+1}(S_{t}^{\prime}) \right) \right \vert  \mathcal{H}^{R}_{t} \right] \\
    \label{equ:lc} &\overset{(\ref{equ:cond_pi})}{\geq} F_{t+1}(S_{t}^{\prime}) +  \frac{ C_{S^{\prime}}}{\sqrt{2d}} \left( F^{\star}_{t+1} - F_{t+1}(S_{t}^{\prime}) \right),
\end{align}
where $\mathbb{E}_{E^{\theta^{\star}}} \left[ \cdot \right]$ is the conditional expectation on event $E^{\theta^{\star}}$.

Recall (\ref{equ:f_S'}) that
\begin{equation}
\tag{\ref{equ:f_S'} revisited}
    \mathbb{E} \left[\left. F_{t+1}(S_t^{\prime}) \right \vert  \mathcal{H}^{M}_{t} \right] \geq F_{t+1}(S_t).
\end{equation}

Conditioned on $E^{\theta^{\star}}$, it still holds that
\begin{equation}
\label{equ:dm_non_dec_cond}
    \mathbb{E}_{E^{\theta^{\star}}} \left[\left. F_{t+1}(S_t^{\prime}) \right \vert  \mathcal{H}^{M}_{t} \right] \geq F_{t+1}(S_t),
\end{equation}
since in $E^{\theta^{\star}}$, $E^{\theta^{\star}} \left( \frac{\delta}{2} \right)$ holds independent from the Directed Mutation step $S_{t}^{\prime} = \operatorname{DM}(f_{\tilde \theta_{t+1}}, S_{t}, \mu_M)$, and conditioned on $E^{\theta^{\star}}_{\operatorname{DM}}$, $\left. F_{t+1}(S_t^{\prime}) \right \vert  \mathcal{H}^{M}_{t}$ tends to be greater then it was unconditionally. 

Along with $\mathcal{H}^{M}_{t} \subset \mathcal{H}^{R}_{t}$, we have
\begin{align}
    \mathbb{E}_{E^{\theta^{\star}}} \left[ \left. F_{t+1}(S_{t+1})  \right \vert  \mathcal{H}^{M}_{t} \right] &\geq \mathbb{E}_{E^{\theta^{\star}}} \left[ \left. \mathbb{E}_{E^{\theta^{\star}}} \left[ \left. F_{t+1}(S_{t+1})  \right \vert  \mathcal{H}^{R}_{t} \right] \right \vert  \mathcal{H}^{M}_{t} \right] \\
    &\overset{(\ref{equ:lc})}{\geq} \mathbb{E}_{E^{\theta^{\star}}} \left[ \left. F_{t+1}(S_{t}^{\prime}) + \frac{ C_{S^{\prime}}}{\sqrt{2d}} \left( F^{\star}_{t+1} - F_{t+1}(S_{t}^{\prime}) \right) \right \vert  \mathcal{H}^{M}_{t} \right]\\
    &\overset{(\ref{equ:dm_non_dec_cond})}{\geq} F_{t+1}(S_{t}) + \frac{ C_{S^{\prime}}}{\sqrt{2d}} \left( F^{\star}_{t+1} - F_{t+1}(S_{t}) \right).
\end{align}

By introducing the convergence rate $\gamma:= 1- \frac{ C_{S^{\prime}}}{\sqrt{2d}}$ s.t. $\frac{1}{1-\gamma} =O \left( \frac{\sqrt{d}}{\mu} \right) $ and an residual term 
\begin{equation*}
    e_{t+1}:=  \mathbb{E}_{E^{\theta^{\star}}} \left[ \left. F_{t+1}(S_{t+1})  \right \vert  \mathcal{H}^{M}_{t} \right] - F_{t+1}(S_{t+1}),
\end{equation*}
we have
\begin{equation}
    F^{\star}_{t+1} - F_{t+1}(S_{t+1}) \leq \gamma(F^{\star}_{t+1} - F_{t+1}(S_{t})) + e_{t+1},
\end{equation}
where $\{e_t\}_{t=1}^{T}$ is a martingale difference with 
\begin{equation}
    \label{equ:et_mean0} \mathbb{E}_{E^{\theta^{\star}}}[e_{t+1} \mid \mathcal{H}^{M}_{t}] = 0.
\end{equation}

Thus,
\begin{align*}
    F^{\star}_{t+1} - F_{t+1}(S_{t+1}) &\leq \gamma(F^{\star}_{t+1} - F_{t+1}(S_{t})) + e_{t+1}\\
    &= \gamma \left[ F^{\star}_{t} - F_{t}(S_{t}) + F^{\star}_{t+1}- F^{\star}_{t} + F_{t}(S_{t}) - F_{t+1}(S_{t})\right] + e_{t+1}\\
    &= \gamma \left[F^{\star}_{t} - F_{t}(S_{t}) \right] + \gamma \left[ F^{\star}_{t+1}- F^{\star}_{t} + F_{t}(S_{t}) - F_{t+1}(S_{t})\right] + e_{t+1}.
\end{align*}

Therefore we have the recursion that
\begin{equation}
    F^{\star}_{t} - F_{t}(S_{t}) \leq 
    \gamma^{t} \left(F^{\star}_{1} - F_{1}(S_{0}) \right) + \sum^{t-1}_{k=1}\gamma^{t-k} \left( F^{\star}_{k+1}- F^{\star}_{k} \right) + \sum^{t-1}_{k=1}\gamma^{t-k} \left( F_{k}(S_{k}) - F_{k+1}(S_{k}) \right) + \sum^{t}_{k=1}\gamma^{t-k} e_k,
\end{equation}

summing up which from $t=1$ to $T$ gives
\begin{align}
    \label{equ:term1} \sum_{t=1}^T F^{\star}_t - F_t(S_t) \leq&  \sum_{t = 1}^T \sum^{t}_{k=1}\gamma^{t-k} e_k \\
    \label{equ:term2} &+ \sum_{t = 1}^T  \gamma^{t} \left(F^{\star}_{1} - F_{1}(S_{0}) \right) \\ 
    \label{equ:term3} &+  \sum_{t = 1}^T \sum^{t-1}_{k=1}\gamma^{t-k} \left( F^{\star}_{k+1}- F^{\star}_{k} \right) \\ 
    \label{equ:term4} &+ \sum_{t = 1}^T \sum^{t-1}_{k=1}\gamma^{t-k} \left( F_{k}(S_{k}) - F_{k+1}(S_{k})\right).
\end{align}

As it appears in (\ref{equ:rgt_p1}), what matters in bounding regret is the expected truncated value of $\sum_{t = 1}^T F^{\star}_{t} - F_{t}(S_{t})$, which is 
\begin{equation*}
    \mathbb{E}_{\theta^{\star} \sim \pi} \left[ M \mathbb{E}_{\theta^{\star}} \left[  \sum_{t=1}^T \min \left\{ F^{\star}_t - F_t(S_t), B_f^{\theta^{\star}} \right\}  \right]  \right] ,\tag{\ref{equ:rgt_p1} revisited}
\end{equation*}

and the decomposition of $\sum_{t = 1}^T F^{\star}_{t} - F_{t}(S_{t})$ into four terms as above holds conditionally on $E^{\theta^{\star}}$. So from here on, we progress with first upper bounding $\mathbb{E}_{\theta^{\star}} \left[  \sum_{t=1}^T \min \left\{ F^{\star}_t - F_t(S_t), B_f^{\theta^{\star}} \right\}  \right]$ by
\begin{equation}
\label{equ:P1_dcp_cond_theta*}
    \mathbb{E}_{\theta^{\star}} \left[  \sum_{t=1}^T \min \left\{ F^{\star}_t - F_t(S_t), B_f^{\theta^{\star}} \right\}  \right]  \leq 2 \delta T B_f^{\theta^{\star}} + (1- 2\delta) \mathbb{E}_{E^{\theta^{\star}}} \left[   \sum_{t=1}^T  F^{\star}_t - F_t(S_t) \right],
\end{equation}
and then upper bounding $\mathbb{E}_{E^{\theta^{\star}}} \left[   \sum_{t=1}^T  F^{\star}_t - F_t(S_t) \right]$ with

\begin{align}
    \mathbb{E}_{E^{\theta^{\star}}} \left[   \sum_{t=1}^T  F^{\star}_t - F_t(S_t) \right] 
    \label{equ:term1} \leq &  \mathbb{E}_{E^{\theta^{\star}}} \left[ \sum_{t = 1}^T \sum^{t}_{k=1}\gamma^{t-k} e_k \right] \\
    \label{equ:term2} &+ \mathbb{E}_{E^{\theta^{\star}}} \left[  \sum_{t = 1}^T  \gamma^{t} \left(F^{\star}_{1} - F_{1}(S_{0}) \right) \right]   \\ 
    \label{equ:term3} &+  \mathbb{E}_{E^{\theta^{\star}}} \left[ \sum_{t = 1}^T \sum^{t-1}_{k=1}\gamma^{t-k} \left( F^{\star}_{k+1}- F^{\star}_{k} \right) \right] \\ 
    \label{equ:term4} &+ \mathbb{E}_{E^{\theta^{\star}}} \left[ \sum_{t = 1}^T \sum^{t-1}_{k=1}\gamma^{t-k} \left( F_{k}(S_{k}) - F_{k+1}(S_{k})\right)\right] .
\end{align}

\subsubsection{Term (\ref{equ:term1})}
$\{e_t\}_{t=1}^{T}$ is claimed to be a martingale difference when first being introduced, that is, recall (\ref{equ:et_mean0}) that
\begin{equation}
\tag{\ref{equ:et_mean0} revisited}
    \mathbb{E}_{E^{\theta^{\star}}}[e_{t+1} \mid \mathcal{H}^{M}_{t}] = 0, \quad \forall t+1 \in [T].
\end{equation}
Thus by the property of martingale difference,
\begin{equation*}
    \mathbb{E}_{E^{\theta^{\star}}} \left[e_{t+1} \right] = \mathbb{E}_{E^{\theta^{\star}}} \left[ \mathbb{E}_{E^{\theta^{\star}}} \left[e_{t+1} \mid \mathcal{H}^{M}_{t} \right]\right]= 0, \quad \forall t+1 \in [T].
\end{equation*}
Then by the linearity of $\mathbb{E}_{E^{\theta^{\star}}} \left[ \cdot \right]$:
\begin{align}
    \nonumber \mathbb{E}_{E^{\theta^{\star}}} \left[ \sum_{t = 1}^T \sum^{t}_{k=1}\gamma^{t-k} e_k \right] &= \sum_{t = 1}^T \sum^{t}_{k=1} \mathbb{E}_{E^{\theta^{\star}}} \left[ \gamma^{t-k} e_k \right]\\
    \nonumber &=  \sum_{t = 1}^T \sum^{t}_{k=1} \gamma^{t-k} \mathbb{E}_{E^{\theta^{\star}}} \left[  e_k \right]\\
     \label{equ:b_term1}&= 0.
\end{align} 

\subsubsection{Term (\ref{equ:term2})}
Before looking into the term (\ref{equ:term2}), we first introduce the following lemma upper bounding the expectation of $\tilde \theta_{t}$'s $l_2$ norm conditioned on event $E^{\theta^{\star}}$.

\begin{lemma}
\label{lmm:E_theta_t_l2}
For $\forall t \in [T]$, $\mathbb{E}_{E^{\theta^{\star}}} \left[ \| \tilde \theta_t \| \right]$ has the following upper bound.
\begin{equation}
   \mathbb{E}_{E^{\theta^{\star}}} \left[ \| \tilde \theta_t \| \right] \leq  2 \| \theta^{\star} \| + 2 \sqrt{\frac{d}{\lambda}}.
\end{equation}
\end{lemma}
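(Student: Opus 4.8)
The plan is to prove the bound \emph{pointwise} on the event $E^{\theta^{\star}}$, after which it passes verbatim to the conditional expectation $\mathbb{E}_{E^{\theta^{\star}}}[\cdot]$. By Corollary \ref{cor:hp_2conditons}, and in particular the guarantee (\ref{equ:theta_acc}), on $E^{\theta^{\star}}$ we have $\|\tilde\theta_t-\theta^{\star}\|_{V_t}\le \beta_t^{\theta^{\star}}(\tfrac{\delta}{2})+\alpha_t(\tfrac{\delta}{2})$ for every $t\in[T]$. Since $V_t=\tfrac{1}{\sigma^2}\Phi_{t-1}^{\top}\Phi_{t-1}+\lambda I\succeq \lambda I$ (see (\ref{equ:cov_matrix})), the $V_t$-norm dominates $\sqrt{\lambda}$ times the Euclidean norm, so $\|\tilde\theta_t-\theta^{\star}\|\le \tfrac{1}{\sqrt{\lambda}}\bigl(\beta_t^{\theta^{\star}}(\tfrac{\delta}{2})+\alpha_t(\tfrac{\delta}{2})\bigr)$, and the triangle inequality yields $\|\tilde\theta_t\|\le\|\theta^{\star}\|+\tfrac{1}{\sqrt{\lambda}}\bigl(\beta_t^{\theta^{\star}}(\tfrac{\delta}{2})+\alpha_t(\tfrac{\delta}{2})\bigr)$.

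It then remains to substitute the explicit radii (\ref{beta_t}) and (\ref{alpha_t}). The summand $\sqrt{\lambda}\,\|\theta^{\star}\|$ inside $\beta_t^{\theta^{\star}}$ contributes exactly $\|\theta^{\star}\|$ after division by $\sqrt{\lambda}$, producing the $2\|\theta^{\star}\|$ term; the remaining pieces, namely $\tfrac{1}{\sqrt{\lambda}}\sqrt{2\log(2/\delta)+d\log\bigl((\sigma^2\lambda d+tML^2)/(\sigma^2\lambda d)\bigr)}$ and $\tfrac{1}{\sqrt{\lambda}}\bigl(2\sqrt{d\log(2T/\delta)}+\sqrt{d}\bigr)$, are each of order $\sqrt{d/\lambda}$ up to logarithmic factors (recall $L=\sqrt{d}$); with $\delta$ taken polynomially small in $d,T$ as in Theorem \ref{main_thm}, these are absorbed into the $\tilde O$-convention and bounded by $2\sqrt{d/\lambda}$. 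Because the resulting bound is a deterministic function of $\theta^{\star}$ and $t$ only, taking $\mathbb{E}_{E^{\theta^{\star}}}[\cdot]$ leaves it unchanged, which is the claim.

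The only genuinely delicate point is cosmetic: the self-normalized radii inherently carry $\log T$ and $\log(1/\delta)$ factors, so the honest bound is $2\|\theta^{\star}\|+\tilde O(\sqrt{d/\lambda})$ and the stated ``$2\sqrt{d/\lambda}$'' should be read modulo these logs (as elsewhere in the paper). If one prefers a route that makes this transparent without passing through $E^{\theta^{\star}}$, condition on the history up to the Thompson-sampling step of round $t$; there $\tilde\theta_t\sim\mathcal{N}(\hat\theta_t,V_t^{-1})$, hence $\mathbb{E}[\|\tilde\theta_t\|\mid\cdot]\le\|\hat\theta_t\|+\sqrt{\operatorname{tr}(V_t^{-1})}\le\|\hat\theta_t\|+\sqrt{d/\lambda}$ using $V_t^{-1}\preceq\lambda^{-1}I$, and then write $\hat\theta_t=(\mathbf{I}-\lambda V_t^{-1})\theta^{\star}+\tfrac{1}{\sigma^2}V_t^{-1}\Phi_{t-1}^{\top}\xi$, bounding the first term by $\|\theta^{\star}\|$ (since $0\preceq\mathbf{I}-\lambda V_t^{-1}\preceq\mathbf{I}$) and controlling the martingale noise term by the same self-normalized inequality underlying (\ref{beta_t}) (Lemma \ref{lmm:hp_events}). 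Either way the computation is routine; the substantive work was already carried out in Corollary \ref{cor:hp_2conditons}.
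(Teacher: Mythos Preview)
Your primary route---bounding $\|\tilde\theta_t-\theta^{\star}\|_{V_t}$ pointwise on $E^{\theta^{\star}}$ by $\beta_t^{\theta^{\star}}(\tfrac{\delta}{2})+\alpha_t(\tfrac{\delta}{2})$ and then dividing by $\sqrt{\lambda}$---does not prove the lemma as stated: those radii carry unavoidable $\log T$ and $\log(1/\delta)$ factors, so you obtain $2\|\theta^{\star}\|+\tilde O(\sqrt{d/\lambda})$ rather than the clean $2\|\theta^{\star}\|+2\sqrt{d/\lambda}$. You acknowledge this and suggest the constant be read ``modulo logs,'' but the paper's proof in fact delivers the stated constant with no logarithms at all, so the gap is real.

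The paper's argument differs in exactly this respect. After the same triangle inequality and the same conversion $\|\cdot\|\le\lambda^{-1/2}\|\cdot\|_{V_t}$, it \emph{drops} the conditioning on $E^{\theta^{\star}}$ (arguing that the event only shrinks the relevant norms) and then computes the \emph{unconditional} expectations directly from the distributions, never invoking $\beta_t$ or $\alpha_t$: since $\|\tilde\theta_t-\hat\theta_t\|_{V_t}^2\sim\chi^2_d$ one gets $\mathbb{E}_{\theta^{\star}}[\|\tilde\theta_t-\hat\theta_t\|_{V_t}]\le\sqrt{d}$, and the explicit identity $\hat\theta_t-\theta^{\star}=\tfrac{1}{\sigma^2}V_t^{-1}\Phi_{t-1}^{\top}\xi_{t-1}-\lambda V_t^{-1}\theta^{\star}$ yields $\mathbb{E}_{\theta^{\star}}[\|\hat\theta_t-\theta^{\star}\|_{V_t}]\le\sqrt{d}+\sqrt{\lambda}\,\|\theta^{\star}\|$, the noise piece being handled by the second-moment computation $\mathbb{E}\bigl[\xi_{t-1}^{\top}\Phi_{t-1}V_t^{-1}\Phi_{t-1}^{\top}\xi_{t-1}\mid\Phi_{t-1}\bigr]=\sigma^2\operatorname{tr}\bigl(V_t^{-1}\Phi_{t-1}^{\top}\Phi_{t-1}\bigr)\le\sigma^4 d$. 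Your alternative route at the end is essentially this, except that you propose to control the noise term via the self-normalized inequality behind (\ref{beta_t}); that choice would reintroduce the logarithms. The point is precisely that a direct moment bound on $V_t^{-1}\Phi_{t-1}^{\top}\xi_{t-1}$ is available and log-free.
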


\begin{proof}
See Appendix \ref{prf_lmm:E_theta_t_l2}.
\end{proof}

What is to take expectation in (\ref{equ:term2}) is of constant order because
\begin{equation}
\label{equ:t1}
     \sum_{t = 1}^T  \gamma^{t-1} \left(F^{\star}_{1} - F_{1}(S_{0}) \right) \leq   \frac{1}{1-\gamma}  \left| F^{\star}_{1} - F_{1}(S_{0}) \right| \leq  \frac{1}{1-\gamma} B_f^{\tilde\theta_1},
\end{equation}
where $B_f^{\tilde \theta_1} = 2L \| \tilde \theta_1 \|$.

Then by taking expectation over both sides of (\ref{equ:t1}), we have
\begin{equation}
\label{equ:b_term2}
    \mathbb{E}_{E^{\theta^{\star}}} \left[ \sum_{t = 1}^T \gamma^{t} \left(F^{\star}_{1} - F_{1}(S_{0}) \right) \right] \leq  \frac{2 L}{1-\gamma}  \mathbb{E}_{E^{\theta^{\star}}} \left[ \| \tilde \theta_1 \| \right] \leq \frac{2}{1-\gamma} \left( B_f^{\theta^{\star}} + 2 \sqrt{\frac{d }{\lambda}}L\right).
\end{equation}

\subsubsection{Term (\ref{equ:term3})}
Rearrange terms to sum up in (\ref{equ:term3}) as
\begin{align*}
    \sum_{t = 1}^T \sum^{t-1}_{k=1} \gamma^{t-k} \left( F^{\star}_{k+1}- F^{\star}_{k} \right) &= \sum_{k = 1}^{T-1} \left( F^{\star}_{k+1}- F^{\star}_{k} \right)  \sum_{t=k+1}^{T} \gamma^{t-k}\\
    &= \sum_{k = 1}^{T-1} \frac{\gamma - \gamma^{T-k+1}}{1-\gamma}  \left( F^{\star}_{k+1}- F^{\star}_{k} \right) \\
    &= \frac{\gamma}{1-\gamma} ( F^{\star}_{T}- F^{\star}_{1} ) - \sum_{k = 1}^{T-1} \frac{\gamma^{T-k+1}}{1-\gamma}  \left( F^{\star}_{k+1}- F^{\star}_{k} \right)\\
    &= \frac{\gamma}{1-\gamma} ( F^{\star}_{T}- F^{\star}_{1} ) - \frac{\gamma^2}{1-\gamma}F^{\star}_{T} + \sum_{k = 2}^{T-1}  \gamma^{T-k+1} F^{\star}_{k} + \frac{\gamma^T}{1-\gamma}F^{\star}_{1}\\
     &= \sum_{k = 2}^{T}  \gamma^{T-k+1} F^{\star}_{k} -\gamma^{T-1} F^{\star}_{1}.
\end{align*}

Thus by taking expectation conditioned on $E^{\theta^{\star}}$ over the absolute value of RHS, we have
\begin{align}
    \nonumber \mathbb{E}_{E^{\theta^{\star}}} \left[ \sum_{t = 1}^T \sum^{t-1}_{k=1}\gamma^{t-k} \left( F^{\star}_{k+1}- F^{\star}_{k} \right) \right] &\leq \mathbb{E}_{E^{\theta^{\star}}} \left[ \sum_{k = 2}^{T} \gamma^{T-k+1} \left| F^{\star}_{k} \right| \right]+ \mathbb{E}_{E^{\theta^{\star}}} \left[\gamma^{T-1} \left| F^{\star}_{1} \right| \right]\\
    \nonumber &\leq L  \sum_{k = 2}^{T} \gamma^{T-k+1} \cdot \mathbb{E}_{E^{\theta^{\star}}} \left[   \|\tilde\theta_{k}\| \right]+ L \gamma^{T-1} \cdot \mathbb{E}_{E^{\theta^{\star}}} \left[  \|\tilde\theta_{1}\| \right]\\
    \nonumber&\leq  \sum_{k = 0}^{T-1} \gamma^{k} \left( 2 \| \theta^{\star} \| L + 2 \sqrt{\frac{d}{\lambda}} L \right)\\
    \label{equ:b_term3}&\leq \frac{1}{1-\gamma} \left( B_f^{\theta^{\star}}  + 2 \sqrt{\frac{d}{\lambda}} L \right).
\end{align}

\subsubsection{Term (\ref{equ:term4})}
We start off by rearranging terms in the summation: $\sum_{t = 1}^T \sum^{t-1}_{k=1}\gamma^{t-k} \left( F_{k}(S_{k}) - F_{k+1}(S_{k})\right) $.
 
\begin{align}
    \nonumber \sum_{t = 1}^T \sum^{t-1}_{k=1}\gamma^{t-k}  \left( F_{k}(S_{k}) - F_{k+1}(S_{k})\right) &=
    \sum_{k = 1}^{T-1} \left( F_{k}(S_{k}) - F_{k+1}(S_{k})\right)   \sum_{t=k+1}^{T} \gamma^{t-k}\\
    \nonumber &= \sum_{k = 1}^{T-1} \frac{\gamma - \gamma^{T-k+1}}{1-\gamma}  \left( F_{k}(S_{k}) - F_{k+1}(S_{k})\right)\\
    \nonumber &= \sum_{k = 1}^{T-1} \frac{\gamma - \gamma^{T-k+1}}{1-\gamma}  \langle \tilde \theta_{k} -  \tilde \theta_{k+1}, \frac{1}{M} \sum_{x \in S_k} x  \rangle\\
    \label{equ:gap_btw_2stps}&\leq \frac{1}{1-\gamma} \frac{1}{M} \sum_{t = 1}^{T-1} \sum_{i = 1}^{M}  \left|\langle \tilde \theta_{t} -  \tilde \theta_{t+1}, x_{t,i} \rangle \right|.
\end{align}

In the following Corollary \ref{cor:consecutive_diff}, we bound the RHS above by constructing a high probability confidence ellipsoid for $\tilde \theta_{t+1} - \tilde \theta_{t}$ and then completing with a call of Lemma \ref{lmm:batch_update}.

\begin{corollary}
\label{cor:consecutive_diff}
For any realization of $\theta^{\star}$, conditioned on event $E^{\theta^{\star}}$, it holds that 
\begin{equation}
    \sum_{t = 1}^T \sum^{t-1}_{k=1}\gamma^{t-k}  \left( F_{k}(S_{k}) - F_{k+1}(S_{k})\right) \leq \frac{1}{1-\gamma} \frac{1}{M} \operatorname{RGT} \left( 2\beta^{\theta^{\star}}_T\left(\frac{\delta}{2}\right) + 2\alpha_T\left(\frac{\delta}{2}\right) \right) .
\end{equation}
\end{corollary}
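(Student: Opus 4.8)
The plan is to pick up from inequality (\ref{equ:gap_btw_2stps}), which already reduces the left-hand side to $\frac{1}{1-\gamma}\cdot\frac{1}{M}\sum_{t=1}^{T-1}\sum_{i=1}^{M}\left|\langle \tilde\theta_t-\tilde\theta_{t+1},x_{t,i}\rangle\right|$, and then to bound this remaining double sum by $\operatorname{RGT}\!\left(2\beta^{\theta^{\star}}_T(\tfrac{\delta}{2})+2\alpha_T(\tfrac{\delta}{2})\right)$ using Lemma \ref{lmm:batch_update}. To invoke that lemma with $a_t=\tilde\theta_t$ and $b_t=\tilde\theta_{t+1}$, I first need a uniform confidence-ellipsoid bound of the form $\|\tilde\theta_t-\tilde\theta_{t+1}\|_{V_t}\le\eta_t(\delta)$ valid under $E^{\theta^{\star}}$.

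To produce that bound I would use the triangle inequality in the $V_t$-norm, $\|\tilde\theta_t-\tilde\theta_{t+1}\|_{V_t}\le\|\tilde\theta_t-\theta^{\star}\|_{V_t}+\|\theta^{\star}-\tilde\theta_{t+1}\|_{V_t}$. Conditioned on $E^{\theta^{\star}}$ (which contains $E^{\theta^{\star}}(\tfrac{\delta}{2})$), Corollary \ref{cor:hp_2conditons}, eq. (\ref{equ:theta_acc}), bounds the first term by $\beta^{\theta^{\star}}_t(\tfrac{\delta}{2})+\alpha_t(\tfrac{\delta}{2})$. For the second term, the key observation is that the data-collection step of Alg.\ref{alg:one-level(PS)} gives $V_{t+1}=V_t+\frac{1}{\sigma^2}\sum_{i=1}^{M}x_{t,i}x_{t,i}^{\top}\succeq V_t$, so $\|\cdot\|_{V_t}\le\|\cdot\|_{V_{t+1}}$ and hence $\|\theta^{\star}-\tilde\theta_{t+1}\|_{V_t}\le\|\theta^{\star}-\tilde\theta_{t+1}\|_{V_{t+1}}\le\beta^{\theta^{\star}}_{t+1}(\tfrac{\delta}{2})+\alpha_{t+1}(\tfrac{\delta}{2})$, again by (\ref{equ:theta_acc}) applied at index $t+1$. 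Summing the two and using that $t\mapsto\beta^{\theta^{\star}}_t(\tfrac{\delta}{2})$ is nondecreasing (clear from (\ref{beta_t})) while $\alpha_t(\tfrac{\delta}{2})$ is constant in $t$ (from (\ref{alpha_t})), I get $\|\tilde\theta_t-\tilde\theta_{t+1}\|_{V_t}\le 2\beta^{\theta^{\star}}_T(\tfrac{\delta}{2})+2\alpha_T(\tfrac{\delta}{2})$ for every $t\le T-1$.

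Finally I would apply Lemma \ref{lmm:batch_update} with the constant radius $\eta_t(\delta):=2\beta^{\theta^{\star}}_T(\tfrac{\delta}{2})+2\alpha_T(\tfrac{\delta}{2})$ and $L=\sqrt d$ (so $\eta_T(\delta)$ is this same constant), which yields $\sum_{t=1}^{T-1}\sum_{i=1}^{M}\left|\langle\tilde\theta_t-\tilde\theta_{t+1},x_{t,i}\rangle\right|\le\operatorname{RGT}\!\left(2\beta^{\theta^{\star}}_T(\tfrac{\delta}{2})+2\alpha_T(\tfrac{\delta}{2})\right)$; the cosmetic mismatch that the sum in Lemma \ref{lmm:batch_update} runs over $[T]$ rather than $[T-1]$ is handled by padding with $a_T:=b_T$ or by monotonicity of $\operatorname{RGT}$ in the number of rounds. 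Chaining this with (\ref{equ:gap_btw_2stps}) gives the claim. None of these steps is genuinely hard; the one worth spelling out carefully is the norm-domination $\|\cdot\|_{V_t}\le\|\cdot\|_{V_{t+1}}$, a one-line PSD argument, together with checking that $E^{\theta^{\star}}$ indeed gives simultaneous concentration at indices $t$ and $t+1$ so that both invocations of (\ref{equ:theta_acc}) are legitimate.
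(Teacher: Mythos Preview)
Your proposal is correct and follows essentially the same route as the paper's own proof: start from (\ref{equ:gap_btw_2stps}), build a confidence ellipsoid for $\tilde\theta_t-\tilde\theta_{t+1}$ in the $V_t$-norm via the triangle inequality together with the PSD comparison $V_t\preceq V_{t+1}$, and then feed this into Lemma~\ref{lmm:batch_update}. The only cosmetic difference is that the paper writes the intermediate radius as $2\beta^{\theta^{\star}}_{t+1}(\tfrac{\delta}{2})+2\alpha_{t+1}(\tfrac{\delta}{2})$ and appeals to monotonicity of both sequences, whereas you jump directly to the time-$T$ constants; either way is fine.
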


\begin{proof}
See Appendix \ref{prf_cor:consecutive_diff}.
\end{proof}

Therefore, after taking expectation conditioned on $E^{\theta^{\star}}$, we still have
\begin{equation}
\label{equ:b_term4}
    \mathbb{E}_{E^{\theta^{\star}}} \left[ \sum_{t = 1}^T \sum^{t-1}_{k=1} \gamma^{t-k}  \left( F_{k}(S_{k}) - F_{k+1}(S_{k})\right) \right] 
    \leq  \frac{1}{M} \cdot \frac{1}{1-\gamma} \operatorname{RGT} \left( 2\beta^{\theta^{\star}}_T\left(\frac{\delta}{2}\right) + 2\alpha_T\left(\frac{\delta}{2}\right)\right).
\end{equation}

\subsubsection{Pulling $4$ terms into the final bound of the first half (\ref{equ:rgt_p1})}

Going back to  the contribution coming from the first half of the regret decomposition (\ref{rgt_dcp}), plugging (\ref{equ:b_term1}), (\ref{equ:b_term2}), (\ref{equ:b_term3}) and (\ref{equ:b_term4}) into (\ref{equ:P1_dcp_cond_theta*}), it holds that, for $\forall \delta \in (0,1)$
\begin{align}
    \nonumber M \mathbb{E}_{\theta^{\star}} \left[  \sum_{t=1}^T \min \left\{ F^{\star}_t - F_t(S_t), B_f^{\theta^{\star}} \right\}  \right]  \leq& 2 \delta MT \cdot B_f^{\theta^{\star}} + (1- 2\delta) M \cdot \mathbb{E}_{E^{\theta^{\star}}} \left[  \sum_{t=1}^T  F^{\star}_t - F_t(S_t) \right]\\
    \nonumber \leq& 2 \delta M T \cdot B_f^{\theta^{\star}} +  (1- 2\delta) \cdot \frac{3M}{1-\gamma} \left( B_f^{\theta^{\star}}  + 2 \sqrt{\frac{d}{\lambda}} L \right).\\
    \label{equ:P1_cond_theta*}&+ (1- 2\delta) \cdot \frac{1}{1-\gamma} \operatorname{RGT} \left( 2\beta^{\theta^{\star}}_T\left(\frac{\delta}{2}\right) + 2\alpha_T\left(\frac{\delta}{2}\right) \right).
\end{align}

Averaging (\ref{equ:P1_cond_theta*}) over the prior of $\theta^{\star}$, we have
\begin{align}
    \nonumber M  \mathbb{E} \left[\sum_{t=1}^T \min \left\{ F^{\star}_t - F_t(S_t), B_f^{\theta^{\star}} \right\}\right] =&   \mathbb{E}_{\theta^{\star} \sim \pi} \left[ M \mathbb{E}_{\theta^{\star}} \left[  \sum_{t=1}^T \min \left\{ F^{\star}_t - F_t(S_t), B_f^{\theta^{\star}} \right\}  \right] \right]\\
    \nonumber \leq &  2 \delta \cdot \mathbb{E}_{\theta^{\star} \sim \pi} \left[ B_f^{\theta^{\star}} \right] \cdot MT \\
    \nonumber &+  \frac{3(1- 2\delta)}{1-\gamma}  \cdot \left( \mathbb{E}_{\theta^{\star} \sim \pi} \left[ B_f^{\theta^{\star}} \right] + 2 \sqrt{\frac{d}{\lambda}}L \right) \cdot M  \\
    \label{equ:P1} &+ \frac{1-2\delta}{1-\gamma} \cdot \mathbb{E}_{\theta^{\star} \sim \pi} \left[ \operatorname{RGT} \left( 2\beta^{\theta^{\star}}_T\left(\frac{\delta}{2}\right) + 2\alpha_T\left(\frac{\delta}{2}\right)\right) \right].
\end{align}

\subsection{The Second Half of Regret Bound as in (\ref{equ:rgt_p2})}

Conditioned on $E^{\theta^{\star}} \left( \frac{\delta}{2} \right)$, which holds with probability $1- \delta$, $\tilde \theta_t$ lies in a confidence ellipsoid around $\theta^{\star}$ at all times,
\begin{equation*}
    \| \tilde \theta_t - \theta^{\star} \|_{V_{t}} \leq \eta_{t}(\delta), \quad \forall t \in [T].
\end{equation*}
We wrap up an upper bound for $\sum_{t=1}^T \sum_{i=1}^M  \left| \langle \tilde\theta_t -\theta^{\star}, x_{t,i} \rangle \right|$ derived by calling Lemma \ref{lmm:batch_update} into the corollary as follows.

\begin{corollary}
\label{cor:prediction_err}
Conditioned on $E^{\theta^{\star}} \left( \frac{\delta}{2} \right)$, the part of total regret contributed by the prediction error of TS sampled $\tilde\theta_t$ is upper bounded by
\begin{align}
     \sum_{t=1}^T \sum_{i=1}^M  \left| \langle \tilde\theta_t -\theta^{\star}, x_{t,i} \rangle \right| \leq \operatorname{RGT}  \left( \beta^{\theta^{\star}}_T\left(\frac{\delta}{2}\right) + \alpha_T\left(\frac{\delta}{2}\right) \right) .
\end{align}
\end{corollary}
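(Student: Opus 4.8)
The plan is to reduce Corollary \ref{cor:prediction_err} to a direct invocation of Lemma \ref{lmm:batch_update} with $a_t = \tilde\theta_t$ and $b_t = \theta^{\star}$. The only ingredient that must be assembled first is a uniform-in-$t$ confidence-ellipsoid bound on $\|\tilde\theta_t - \theta^{\star}\|_{V_t}$ valid on the event $E^{\theta^{\star}}(\delta/2)$.

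First I would split the error by the triangle inequality in the $V_t$-norm, $\|\tilde\theta_t - \theta^{\star}\|_{V_t} \le \|\tilde\theta_t - \hat\theta_t\|_{V_t} + \|\hat\theta_t - \theta^{\star}\|_{V_t}$. Recalling that $E^{\theta^{\star}}(\delta/2) = \hat E^{\theta^{\star}}(\delta/2)\cap \tilde E^{\theta^{\star}}(\delta/2)$, on this event the first summand is at most $\alpha_t(\delta/2)$ by the definition of $\tilde E^{\theta^{\star}}$ and the second is at most $\beta^{\theta^{\star}}_t(\delta/2)$ by the definition of $\hat E^{\theta^{\star}}$, for every $s\le t$ and in particular for $s=t$. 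Hence, pathwise on $E^{\theta^{\star}}(\delta/2)$, $\|\tilde\theta_t - \theta^{\star}\|_{V_t} \le \beta^{\theta^{\star}}_t(\delta/2) + \alpha_t(\delta/2) =: \eta_t(\delta)$ for all $t\in[T]$ (note this radius is non-decreasing in $t$, since $\beta^{\theta^{\star}}_t$ is increasing and $\alpha_t$ is constant in $t$).

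Next, since by construction $\|x_{t,i}\| \le L$ for every $t,i$, the hypotheses of Lemma \ref{lmm:batch_update} are satisfied with $a_t = \tilde\theta_t$, $b_t = \theta^{\star}$ and the radii $\eta_t$ above. The lemma then yields $\sum_{t=1}^T\sum_{i=1}^M |\langle \tilde\theta_t - \theta^{\star}, x_{t,i}\rangle| \le \operatorname{RGT}(\eta_T(\delta)) = \operatorname{RGT}\!\left(\beta^{\theta^{\star}}_T(\delta/2) + \alpha_T(\delta/2)\right)$, which is exactly the claimed inequality; here I use that $\operatorname{RGT}(\cdot)$ is linear in its scalar argument, so it is the value of $\eta$ at the terminal index $T$ that enters.

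There is no genuine obstacle here: the substance sits in Lemma \ref{lmm:hp_events} (supplying the concentration radii $\beta,\alpha$ and controlling $\mathbb{P}(E^{\theta^{\star}})$) and in Lemma \ref{lmm:batch_update} (the batched self-normalized sum bound), both of which we may invoke. The only subtlety to respect is that Lemma \ref{lmm:batch_update} is a deterministic/analytic statement, so it must be applied sample-path by sample-path on the event $E^{\theta^{\star}}(\delta/2)$ rather than in expectation; this is legitimate precisely because Corollary \ref{cor:prediction_err} is stated conditionally on that event.
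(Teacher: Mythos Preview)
Your proposal is correct and takes essentially the same approach as the paper: the paper's proof is the one-liner ``Lemma \ref{lmm:batch_update} directly applies by customizing the parameter $\eta_{t}(\delta)$ to be $\beta^{\theta^{\star}}_T(\delta/2) + \alpha_T(\delta/2)$,'' and your write-up simply makes explicit the triangle-inequality step (which the paper has already recorded separately as \eqref{equ:theta_acc} in Corollary \ref{cor:hp_2conditons}) needed to verify that hypothesis.
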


\begin{proof}
Lemma \ref{lmm:batch_update} directly applies by customizing the parameter $\eta_{t}(\delta)$ to be $\beta^{\theta^{\star}}_T\left(\frac{\delta}{2}\right) + \alpha_T\left(\frac{\delta}{2}\right)$.
\end{proof}

With Corollary \ref{cor:prediction_err} ready, we take expectation first conditioned on $\theta^{\star}$ and then over the prior of $\theta^{\star}$, which finally gives an upper bound of (\ref{equ:rgt_p2}) as
\begin{align}
    \nonumber &\mathbb{E}_{\theta^{\star} \sim \pi} \left[ \mathbb{E}_{\theta^{\star}} \left[  \sum_{t=1}^T \sum_{i=1}^M \min \left\{ \left| \langle \tilde\theta_t -\theta^{\star}, x_{t,i} \rangle \right|, B_f^{\theta^{\star}} \right\} \right] \right]\\
    \nonumber \leq &\mathbb{E}_{\theta^{\star} \sim \pi} \left[ \delta M T  B_f^{\theta^{\star}} + (1-\delta) \operatorname{RGT}  \left( \beta^{\theta^{\star}}_T\left(\frac{\delta}{2}\right) + \alpha_T\left(\frac{\delta}{2}\right) \right) \right]\\
    \label{equ:P2} =&  \delta \cdot \mathbb{E}_{\theta^{\star} \sim \pi} \left[ B_f^{\theta^{\star}} \right] \cdot MT + (1-\delta) \cdot \mathbb{E}_{\theta^{\star} \sim \pi} \left[ \operatorname{RGT}  \left( \beta^{\theta^{\star}}_T\left(\frac{\delta}{2}\right) + \alpha_T\left(\frac{\delta}{2}\right) \right) \right].
\end{align}

\subsection{Final Bound: Combining The Two Halves (\ref{equ:rgt_p1}) and (\ref{equ:rgt_p2})}  
Pulling two parts (\ref{equ:P1}) and (\ref{equ:P2}) into the regret decomposition (\ref{rgt_dcp}), for $\forall \delta \in (0,1)$, with $\gamma$ s.t. $\frac{1}{1-\gamma} =O \left( \frac{\sqrt{d}}{\mu} \right) $, the Bayesian regret of Alg.\ref{alg:one-level(PS)} is bounded by
\begin{align*}
     \operatorname{BayesRGT}(T, M)
     \leq&  3 \delta \cdot \mathbb{E}_{\theta^{\star} \sim \pi} \left[ B_f^{\theta^{\star}} \right] \cdot MT \\
     &+ \frac{3(1- 2\delta)}{1-\gamma}  \cdot \left( \mathbb{E}_{\theta^{\star} \sim \pi} \left[ B_f^{\theta^{\star}} \right] + 2 \sqrt{\frac{d}{\lambda}}L \right) \cdot M \\
     &+ (1-\delta) \cdot \mathbb{E}_{\theta^{\star} \sim \pi} \left[ \operatorname{RGT}  \left( \beta^{\theta^{\star}}_T\left(\frac{\delta}{2}\right) + \alpha_T\left(\frac{\delta}{2}\right) \right) \right]\\
     &+ \frac{1-2\delta}{1-\gamma} \cdot \mathbb{E}_{\theta^{\star} \sim \pi} \left[ \operatorname{RGT} \left( 2\beta^{\theta^{\star}}_T\left(\frac{\delta}{2}\right) + 2\alpha_T\left(\frac{\delta}{2}\right)\right) \right].
\end{align*}

By taking the probability of failure $\delta$ to be of $O(\frac{1}{T})$, we finally arrive at
\begin{align}
    \nonumber \operatorname{BayesRGT}(T, M)
    \leq   & O \left(  \frac{1}{1-\gamma} \cdot \left( \mathbb{E}_{\theta^{\star} \sim \pi} \left[ B_f^{\theta^{\star}} \right] + 2 \sqrt{\frac{d}{\lambda}}L \right) \cdot M \right. \\
    \label{equ:RGT_ub} & \quad + \left.\frac{1}{1-\gamma} \cdot \mathbb{E}_{\theta^{\star} \sim \pi} \left[  \operatorname{RGT} \left( 2\beta^{\theta^{\star}}_T\left(\frac{1}{2T}\right) + 2\alpha_T\left(\frac{1}{2T}\right)\right) \right] \right),
\end{align}
where $B_f^{\theta^{\star}} = 2 L \|\theta^{\star}\|$ and  $\frac{1}{1-\gamma}=O \left(  \frac{\sqrt{d}}{\mu} \right)$.

The orders of two expectations in (\ref{equ:RGT_ub}) is claimed as follows.
\begin{claim}
\label{clm:RGT_ord}
The orders of $\mathbb{E}_{\theta^{\star} \sim \pi} \left[ B_f^{\theta^{\star}} \right]$ and $\mathbb{E}_{\theta^{\star} \sim \pi} \left[  \operatorname{RGT} \left( 2\beta^{\theta^{\star}}_T\left(\frac{1}{2T}\right) + 2\alpha_T\left(\frac{1}{2T}\right)\right) \right]$ are:
\begin{itemize}
    \item $\mathbb{E}_{\theta^{\star} \sim \pi} \left[ B_f^{\theta^{\star}} \right]$ is of order
    \begin{equation}
     O \left( \sqrt{\frac{d}{\lambda}} L\right).
    \end{equation}
    \item $\mathbb{E}_{\theta^{\star} \sim \pi} \left[  \operatorname{RGT} \left( 2\beta^{\theta^{\star}}_T\left(\frac{1}{2T}\right) + 2\alpha_T\left(\frac{1}{2T}\right)\right) \right]$ is of order
    \begin{equation}
    O\left( \frac{ L}{\sqrt{\lambda}} d \sqrt{M} (\sqrt{T} + \sqrt{dM}) \log\left(\frac{\sigma^2 \lambda d + T M L^2}{ \sigma^2 \lambda d}\right) \right).
\end{equation}
\end{itemize}
\end{claim}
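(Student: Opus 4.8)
Both statements are obtained by taking the prior expectation of closed-form quantities, so the plan is simply to push $\mathbb{E}_{\theta^{\star}\sim\pi}$ through and simplify; the only probabilistic input beyond algebra is the second-moment identity $\mathbb{E}_{\theta^{\star}\sim\pi}[\|\theta^{\star}\|^2]=d/\lambda$ for the Gaussian prior of Assumption~\ref{asmp_linearf}, used alongside the explicit forms of $\beta_T^{\theta^{\star}}$, $\alpha_T$ and $\operatorname{RGT}$ already established. For the first bullet: since $B_f^{\theta^{\star}}=2L\|\theta^{\star}\|$, Jensen's inequality gives $\mathbb{E}_{\theta^{\star}\sim\pi}[\|\theta^{\star}\|]\le(\mathbb{E}_{\theta^{\star}\sim\pi}[\|\theta^{\star}\|^2])^{1/2}=\sqrt{d/\lambda}$, hence $\mathbb{E}_{\theta^{\star}\sim\pi}[B_f^{\theta^{\star}}]\le 2L\sqrt{d/\lambda}=O(\sqrt{d/\lambda}\,L)$, which settles it.

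For the second bullet, the key structural observation is that $\eta\mapsto\operatorname{RGT}(\eta)$ — the right-hand side of (\ref{def:pred_err}) — is homogeneous (linear) in $\eta$, and that $\alpha_T(\cdot)$ in (\ref{alpha_t}) does not involve $\theta^{\star}$. So I may pull the expectation inside:
\[
  \mathbb{E}_{\theta^{\star}\sim\pi}\!\left[\operatorname{RGT}\!\left(2\beta_T^{\theta^{\star}}(\tfrac{1}{2T})+2\alpha_T(\tfrac{1}{2T})\right)\right]=\operatorname{RGT}\!\left(2\,\mathbb{E}_{\theta^{\star}\sim\pi}[\beta_T^{\theta^{\star}}(\tfrac{1}{2T})]+2\alpha_T(\tfrac{1}{2T})\right).
\]
In the formula (\ref{beta_t}) for $\beta_T^{\theta^{\star}}$ only the additive term $\sqrt{\lambda}\|\theta^{\star}\|$ depends on $\theta^{\star}$, and $\mathbb{E}_{\theta^{\star}\sim\pi}[\sqrt{\lambda}\|\theta^{\star}\|]\le\sqrt{\lambda}\cdot\sqrt{d/\lambda}=\sqrt{d}$; combining this with the deterministic square-root term at $\delta=\tfrac{1}{2T}$ and with $\alpha_T(\tfrac{1}{2T})=O(\sqrt{d\log T})$ from (\ref{alpha_t}) yields $2\,\mathbb{E}_{\theta^{\star}\sim\pi}[\beta_T^{\theta^{\star}}(\tfrac{1}{2T})]+2\alpha_T(\tfrac{1}{2T})=O(\sqrt{d\,\ell})$, where $\ell:=\log(\tfrac{\sigma^2\lambda d+TML^2}{\sigma^2\lambda d})$ and the $\log T$ contribution is absorbed into $\ell$ (legitimate since $M=\Omega(\log(dT)/\mu^2)$ and $L^2=d$ force $\ell\gtrsim\log T$ up to constants in the regime of interest).

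Finally I substitute $\eta=O(\sqrt{d\,\ell})$ into the two summands of (\ref{def:pred_err}), using $L=\sqrt{d}$ and $\sqrt{(2L^2+2\lambda)/\lambda}=O(L/\sqrt{\lambda})$ (valid because $L=\sqrt{d}\ge\sqrt{\lambda}$ in the regime of interest; otherwise this incurs only a harmless additive constant). The first summand contributes $O(\sqrt{d\,\ell}\cdot\tfrac{L}{\sqrt{\lambda}}\cdot\sqrt{dMT\,\ell})=O(\tfrac{L}{\sqrt{\lambda}}\,d\sqrt{MT}\,\ell)$ and the second contributes $O(\sqrt{d\,\ell}\cdot\tfrac{L}{\sqrt{\lambda}}\,dM\,\ell)=O(\tfrac{L}{\sqrt{\lambda}}\,d\sqrt{M}\cdot\sqrt{dM}\,\ell)$ after folding one leftover $\sqrt{\ell}$ into the polylogarithmic factor; summing the two gives exactly $O(\tfrac{L}{\sqrt{\lambda}}\,d\sqrt{M}(\sqrt{T}+\sqrt{dM})\,\ell)$, the claimed order. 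The only place demanding care is this last step — bookkeeping which of the several logarithmic factors to retain versus fold into $O(\cdot)$, plus the benign regime assumption $L\gtrsim\sqrt{\lambda}$ — but there is no genuine analytic obstacle, since everything reduces to substitution into formulas already derived.
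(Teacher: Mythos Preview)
Your proposal is correct and follows essentially the same approach as the paper's own proof: Jensen on $\|\theta^{\star}\|$ for the first bullet, and for the second bullet exploiting that $\operatorname{RGT}(\cdot)$ is linear in its argument so the prior expectation passes through to $\eta_T=\beta_T^{\theta^\star}+\alpha_T$, after which only $\mathbb{E}_{\theta^\star\sim\pi}[\sqrt{\lambda}\|\theta^\star\|]\le\sqrt{d}$ is needed before substituting back into (\ref{def:pred_err}). If anything you are slightly more careful than the paper about the leftover $\sqrt{\ell}$ in the second summand and about the regime assumption $L\gtrsim\sqrt{\lambda}$; the paper silently absorbs both.
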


\begin{proof}
See Appendix \ref{prf_clm:RGT_ord}.
\end{proof}

Therefore, use $\tilde O$ to hide logarithmic term and lower $O(1)$ order term on $T$, recall $\frac{1}{1-\gamma}=O \left(  \frac{\sqrt{d}}{\mu} \right)$ and $L = \sqrt{d}$, we finally arrived at a Bayesian regret of order
\begin{equation}
    \operatorname{BayesRGT}(T, M) = \tilde O \left( \frac{\sqrt{d}}{\mu} \cdot \frac{ L}{\sqrt{\lambda}} \cdot d \sqrt{MT}  \right) = \tilde O \left( \frac{d}{\mu \sqrt{\lambda}} \cdot d \sqrt{MT} \right).
\end{equation}

\section{Real-world experiment validation} 

\subsection{Optimizing sequence fitness for CRISPR gene-editing}
\label{crispr}

Our TS-DE method was adapted for use in a gene-editing application in real-world experiments. 
Briefly, gene-editing, exemplified by technology derived from the Clustered Regularly Interspaced Short Palindromic Repeats, or CRISPR system, is a powerful tool for engineering genetic information in living organisms, and has transformed basic research and human therapeutics \citep{doudna2014new}. The efficiency and outcome of CRISPR gene-editing is highly dependent on the selection of guideRNA sequences, which form a complex with CRISPR proteins to perform gene-editing \citep{shalem2015high}. The TS-DE was applied to guide high-throughput CRISPR gene-editing experiments. In particular, we use known genomic motif features and a linear model for modeling the log editing capacity. At the beginning of each round of experiment, we computationally generate a new library of design sequences by randomly generating mutations and recombinations based on the previous population. Then we apply the bandit linear model to select sequences with high predicted fitness, and evaluate their actual editing capacities in the next round of experiments. A total of ~14,358 unique guideRNA sequences were measured, and the log capacity improved by $\approx 5$. Notably, the optimized CRISPR designs generated by our DE approach is part of another manuscript ({\it Hughes NW, Zhang J, Pierce J, Qu Y, Wang C, Agrawal A., Morri M, Neff N, Winslow MM, Wang M, and Cong L. Machine Learning Optimized Cas12a Barcoding Enables Recovery of Single-Cell Lineages and Transcriptional Profiles. Molecular Cell. 2022. In Press.}), demonstrating real-world utility of current method. See Fig.\ref{fig:S_f_exp} (borrowed from Hughes et al, 2022) for an illustration of the pipeline.  We refer to Hughes et al, 2022 for more details on the experiment and computation. 

\paragraph{Remark} The above real-world application of bandit DE differs from Algorithm \ref{alg:one-level(PS)} and generalizes it in a number of ways. For example, features used for predicting the gene-editing efficiency are not limited to motif features. Also they are not binary valued. Second, recombination and mutation were not done exactly as in Modules \ref{mod:rcb&slc} and \ref{mod:dire_mutation}. They were randomized on the basepair level rather than the motif level. Despite these differences, our method was able to guide the experiment and accelerate discovery. 
This demonstrates the bandit DE method may have broad generalizability and it is not restricted to the abstract mathematical model formulated in this paper. 

\begin{figure}[h]
    \centering
    \includegraphics[width=\linewidth]{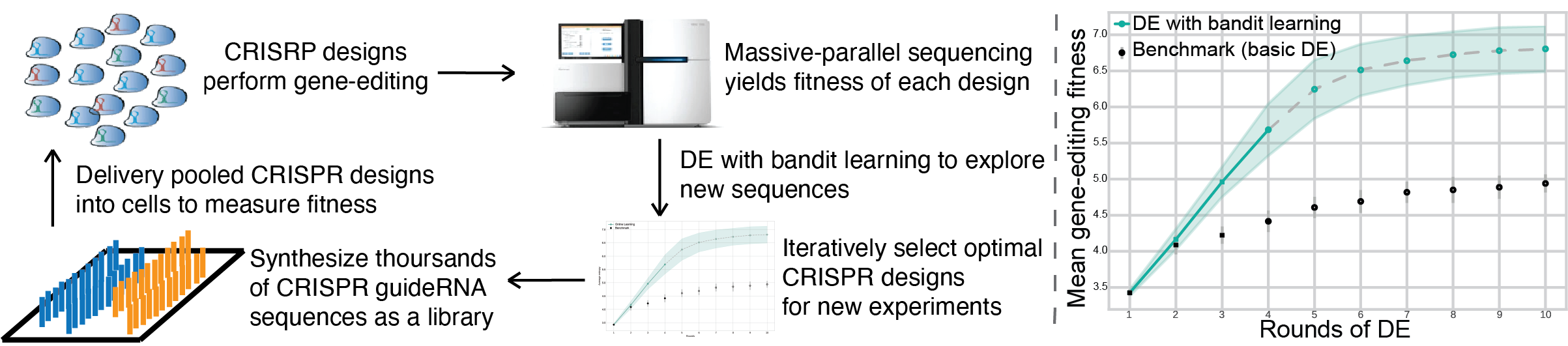}
    \caption{\textbf{Evolving CRISPR sequences using iterative real-world experiments and accelerated DE}  Left panels: Workflow overview. Right panel: Fitness distribution showing accelerated optimization using DE with Bandit learning. (Ths figure is borrowed from Hughes et al, 2022)
    \label{fig:S_f_exp}
    }
\end{figure}

\section{Proof of Theorem \ref{thm:ascent}}

\subsection{Ascent property of \texttt{Crossover\_Selection}}
\label{app-ascent-general}
\begin{proof}
Since each $z \in S^{\prime}$ is generated in the same way independently and $F(S^{\prime})$ is the fitness averaging over all $z$'s, thus
\begin{equation*}
    \mathbb{E}\left[ F(S^{\prime}) \right] = \mathbb{E}\left[ \left. f(z) \right \vert z \in S^{\prime}  \right],  
\end{equation*}
with the expectation taken over the randomness in sampling $z$'s parents $x$ and $y$ and in crossing over $x$ and $y$. Using notation $\mathbb{E}_{x,y} \left[ \cdot \right]: = \mathbb{E}\left[ \left. \cdot \right \vert x,y \right]$, the conditional expectation given $x$ and $y$, rewrite $\mathbb{E}\left[ \left. f(z) \right \vert z \in S^{\prime} \right]$ as
\begin{equation*}
    \mathbb{E}\left[ \left. f(z) \right \vert z \in S^{\prime} \right] = \mathbb{E}\left[ \mathbb{E}_{x, y} \left[ f(z) \left \vert f(z) \geq \frac{f(x) + f(y)}{2} \right.  \right] \right],
\end{equation*}
where the inner expectation is taken over the randomness in the recombination step $z \leftarrow \texttt{Rcb}(x, y)$, and the outer expectation is over sampling $x$ and $y$.

Given $x$ and $y$, a recombined child sample $z$ can be represented by
\begin{equation}
    z = \frac{x + y}{2} + \frac{x - y}{2} \cdot e,
\end{equation}
where the $\cdot$ operator here multiplies two vector entrywisely into a new vector and $e$ is a vector consisting of $d$ independent Rademacher variables, that is $e = (e_i, \cdots, e_d)^{\top}$ and
\begin{equation*}
    e_i \overset{\text{i.i.d.}}{\sim} \operatorname{Rad}.
\end{equation*} 
Thus $f(z)$ is computed as 
\begin{equation}
    f(z) = \frac{f(x) + f(y)}{2} + \frac{1}{2} \sum_{i=1}^{d} \theta_i \left( x_i - y_i \right) e_i.
\end{equation}
And then $f(z) \geq \frac{f(x) + f(y)}{2}$ is equivalent to $\sum_{i=1}^{d} \theta_i \left( x_i - y_i \right) e_i \geq 0$, so
\begin{align}
    \nonumber &\mathbb{E}_{x,y} \left[ f(z) \left \vert f(z) \geq \frac{f(x) + f(y)}{2} \right.  \right] \\
    \nonumber =& \frac{f(x) + f(y)}{2} + \frac{1}{2} \mathbb{E} \left[  \sum_{i=1}^{d} \theta_i \left( x_i - y_i \right) e_i \left \vert \sum_{i=1}^{d} \theta_i \left( x_i - y_i \right) \geq 0 \right.  \right]\\
    \label{equ:symmetry}& \frac{f(x) + f(y)}{2} + \frac{1}{2} \mathbb{E} \left[  \left|\sum_{i=1}^{d} \theta_i \left( x_i - y_i \right) e_i \right| \right]\\
    \nonumber \geq& \frac{f(x) + f(y)}{2} + \frac{C}{2} \| \theta \cdot \left( x - y \right)\|,
\end{align}
where (\ref{equ:symmetry}) holds because $\sum_{i=1}^{d} \theta_i \left( x_i - y_i \right) e_i$ is symmetrically distributed around $0$. And in the last line, $\cdot$ is still the entrywise multiplication between vectors and $C \geq \frac{1}{\sqrt{2}}$ according to \citet{haagerup1981best}.

Thus, 
\begin{align}
    \nonumber \mathbb{E}\left[ F(S^{\prime}) \right] &=  \mathbb{E}\left[ \mathbb{E}_{a,b} \left[ f(z) \left \vert f(z) \geq \frac{f(x) + f(y)}{2} \right.  \right] \right]\\
    \nonumber &\geq  \mathbb{E}\left[ \frac{f(x) + f(y)}{2} \right] + \frac{1}{2 \sqrt{2}}  \mathbb{E}\left[ \| \theta \cdot \left( x - y \right)\| \right]\\
    &\geq F(S) + \frac{1}{2 \sqrt{2}}  \mathbb{E}\left[ \| \theta \cdot \left( x - y \right)\| \right].
\end{align}

By Cauchy-Schwarz, we have
\begin{equation*}
    \| \theta \cdot \left( x - y \right)\| \geq  \frac{1}{\sqrt{d}} \sum_{i=1}^d |\theta_i| |x_i - y_i|
\end{equation*}
Thus, by averaging over all $x$ and $y$ sampled from $S$, 
\begin{equation*}
    \mathbb{E}\left[ \| \theta \cdot \left( x - y \right)\| \right] \geq  \frac{1}{\sqrt{d}} \sum_{i=1}^d |\theta_i| \mathbb{E}\left[  |x_i - y_i| \right]
\end{equation*}
 
When $\forall i \in [d], x_i, y_i \in \{0,1
\}$ for all $x$ and $y$'s in $S$, then
\begin{align}
    &\nonumber \mathbb{E}\left[  |x_i - y_i| \right] \geq \mathbb{E}\left[  \left(x_i - y_i \right)^2 \right] = 2 Var_i(S),\\
    &\mathbb{E}\left[ \| \theta \cdot \left( x - y \right)\| \right] \geq \frac{2}{\sqrt{d}} \sum_{i=1}^d |\theta_i| Var_i(S),
\end{align}
where $Var_i(S)$ denotes the variance of $x_i$ when $x$ is uniformly sampled from $S$.

Therefore,
\begin{equation*}
    \mathbb{E}\left[ F(S^{\prime}) \right]  \geq F(S) + \frac{1}{\sqrt{2d}} \sum_i |\theta_i| Var_i(S).
\end{equation*}
\end{proof}

\section{Omitted Proofs in Appendix \ref{proof_main}}

\subsection{Proof of Lemma \ref{lmm:DM}}
\label{prf_lmm:DM}

\begin{proof}
For $\forall i \not \in \mathcal{I}$, $\forall x \in S$ is not induced to mutate at site $i$, thus for $x^{\prime} = \texttt{Mut}(x, \mathcal{I}, \mu)$, $x^{\prime}_i = x_i$ and
\begin{equation*}
    p_i(S^{\prime}) =  p_i(S).
\end{equation*}

For $i \in \mathcal{I}$, after the directed mutation formulated as (\ref{equ:d_mutation}), $\mathbb{E} \left[ \mathbf{I} \left\{ x^{\prime}_i = 1 \right\} \right] = (1-\mu)\mathbf{I} \left\{ x_i = 1 \right\} + \frac{\mu}{2}$.
\begin{equation*}
     \mathbb{E} \left[ p_i(S^{\prime}) \right] =  (1-\mu)p_i(S) + \frac{\mu}{2}
    = p_i(S) + \left( \frac{1}{2} - p_i(S)\right) \mu. 
\end{equation*}

Since $i \in \mathcal{I}$ iff $\frac{1}{M} \sum_{x\in S} \theta_i  \cdot x_i \leq  \theta_i \cdot \bar x_i$, which is equivalent to $p_i(S) \leq \frac{1}{2}$, showing that the $i$-th dimension is not sufficient with the favored feature. Then the directed mutation strictly increases $p_i(S)$ for any insufficient dimension $i$ by boosting it by $\mu \left( \frac{1}{2} - p_i(S) \right) \geq 0$, which resulting in a $|\theta_i|$-increase in the utility value per unit of increase in $p_i(S)$.

Therefore, $\mathbb{E} \left[ F(S^{\prime})\right] \geq F(S)$ and
\begin{align*}
    \mathbb{E} \left[ p_i(S^{\prime}) \right]
    &= p_i(S) > \frac{1}{2}, \quad \forall i \not \in \mathcal{I},\\
    \mathbb{E} \left[ p_i(S^{\prime}) \right]
    &= p_i(S) + \left( \frac{1}{2} - p_i(S)\right) \mu \geq \frac{\mu}{2}, \quad \forall i \in \mathcal{I}.
\end{align*}
Thus, after calling $S^{\prime} = \texttt{Directed\_Mutation}(f, S, \mu)$, $\mathbb{E} \left[ p_i(S^{\prime}) \right] \geq \frac{\mu}{2}, \forall i \in [d]$. By a standard argument of concentration and a union bound taken over $i \in [d]$, with probability $1-\delta$,
\begin{equation*}
    p_i(S^{\prime}) \geq \frac{\mu}{4}, \quad \forall i \in [d]
\end{equation*}
when $|S| = \Omega \left(  \frac{\log(\frac{d}{\delta})}{\mu^2} \right)$.
\end{proof}

\subsection{Proof of Lemma \ref{lmm:lb_min_pi}}
\label{prf_lmm:lb_min_pi}
\begin{proof}
Lemma \ref{lmm:lb_min_pi} is derived by taking union bound over $t+1 \in [T]$ upon $\min_i p_i^{\tilde \theta_{t+1}}  \left( S^{\prime}_t \right) \geq \frac{\mu}{4}$ obtained by instantiating (\ref{equ:dm_feature_lb}) for $S^{\prime}_{t}$ and $f_{\tilde \theta_{t+1}}$ in Lemma \ref{lmm:DM}.
\end{proof}

\subsection{Proof of Lemma \ref{lmm:LC_under_conditions}}
\label{prf_lmm:LC_under_conditions}
\begin{proof}
Recall from Theorem \ref{thm:ascent} that
\begin{equation}
\tag{\ref{equ:ascent} revisited}
    \mathbb{E}\left[ F(S^{\prime}) \right] \geq F(S) + \frac{1}{\sqrt{2d}} \sum_i |\theta_i| \textrm{Var}_i(S),
\end{equation}
where $Var_i(S)$ is the variance of $x_i$ when $x$ is uniformly sampled from $S$. Using $p_i(S)$ defined in Definition \ref{def:ratio_binary}
\begin{equation}
    \textrm{Var}_i(S) = p_i(S) \left( 1-p_i(S) \right).
\end{equation}

Then, it suffices to prove 
\begin{equation*}
    \sum_{i=1}^d |\theta_i|  p_i(S) \left( 1-p_i(S) \right) \geq \min_i p_i(S) \cdot \left( F^{\star} - F(S) \right).
\end{equation*}
Taking a closer look at the suboptimality gap $F^{\star} - F(S)$, it is easily observed that
\begin{align}
    \label{equ:F*_binary} F^{\star} &= \sum_{i: \theta_i \geq 0} \theta_i + \sum_{i: \theta_i < 0} 0,\\
    \nonumber F(S) &= \sum_{i: \theta_i \geq 0} \theta_i \left[ p_i(S) \cdot 1 + (1-p_i(S)) \cdot 0 \right] + \sum_{i: \theta_i < 0} \theta_i \left[ p_i(S) \cdot 0 + (1-p_i(S)) \cdot 1 \right]\\
    \label{equ:F(S)_binary} &= \sum_{i: \theta_i \geq 0} \theta_i \cdot p_i(S) + \sum_{i: \theta_i < 0} \theta_i \cdot (1-p_i(S)).
\end{align}
Plugging in (\ref{equ:F*_binary}) and (\ref{equ:F(S)_binary}), we have
\begin{equation}
\label{equ:sub_gap}
    F^{\star} - F(S) = \sum_{i} |\theta_i| (1 - p_i(S)).
\end{equation}

Therefore,
\begin{equation*}
    \sum_{i=1}^d |\theta_i|  p_i(S) \left( 1-p_i(S) \right) \geq \min_i p_i(S) \cdot \left( F^{\star} - F(S) \right).
\end{equation*}
\end{proof}

\subsection{Proof of Lemma \ref{lmm:hp_events}}
\label{prf_lmm:hp_events}
\begin{proof}
We finish the proof by lower bounding the probabilities of two events $\hat{E}^{\theta^{\star}} \left( \frac{\delta}{2} \right)$ and $\tilde{E}^{\theta^{\star}} \left( \frac{\delta}{2} \right)$ by $1-\frac{\delta}{2}$ separately. Recall that for $\forall t\in [T]$
\begin{align}
    \tag{\ref{equ:TS_dist} revisited} \tilde{\theta}_{t} &\sim \mathcal{N}(\hat \theta_{t}, V^{-1}_{t}),\\
    \tag{\ref{equ:cov_matrix} revisited} V_{t} &= \frac{1}{\sigma^2} \Phi^{\top}_{t-1} \Phi_{t-1} + \lambda I,\\
    \tag{\ref{equ:hat_theta} revisited} \hat \theta_{t} &= \frac{1}{\sigma^2} V^{-1}_{t} \Phi^{\top}_{t-1} U_{t-1}. 
\end{align}

\textbf{Bounding $\mathbb{P}\left(  \hat{E}^{\theta^{\star}} \left( \frac{\delta}{2} \right)  \right)$.} 

Plugging (\ref{equ:cov_matrix}) into (\ref{equ:hat_theta}), we will see $\hat \theta_{t}$ is related to the regularized least square estimator (RLS):
\begin{equation*}
    \hat \theta_{t} = \frac{1}{\sigma^2} V^{-1}_{t} \Phi^{\top}_{t-1} U_{t-1} = \left( \Phi^{\top}_{t-1} \Phi_{t-1} + \sigma^2 \lambda I \right)^{-1} \Phi^{\top}_{t-1} U_{t-1}.
\end{equation*}
For any fixed ground truth $\theta^{\star}$, $\hat \theta_{t}$ is a RLS estimator of $\theta^{\star}$ regularized by $\sigma^2 \lambda \cdot I$. 
Conditioned on $\theta^{\star}$, define a filtration w.r.t. the data $\{\left( x_{t,i}, u(x_{t,i})\right), i \in [M], t \in [T-1] \}$ collected along the way.

\begin{definition}
Define $\mathcal{F}_{t}$ be the information accumulated after the $t$-th batch of data points is collected.
\begin{align}
    \mathcal{F}_{0}: &= \sigma \left( \theta^{\star} \right)\\
    \mathcal{F}_{t}: &= \{\mathcal{F}_{t-1}, \sigma \left( x_{t,1}, u(x_{t,1}), \cdots, x_{t,M}, u(x_{t,M}) \right)  \}.
\end{align}
Then we fine grind the filtration $\{F_t\}_{t=0}^{T-1}$ to be
\begin{equation}
    \mathcal{F}_{0} \subset \mathcal{F}_{0,1} \subset \cdots \subset \mathcal{F}_{0,M}\subset \mathcal{F}_{1}\subset \cdots \subset \mathcal{F}_{t-1}\subset \mathcal{F}_{t-1,1}\subset \cdots \mathcal{F}_{t-1,M}\subset \mathcal{F}_{t-1}\subset \cdots \subset \mathcal{F}_{T-1}
\end{equation}
by essentially adding $M$ layers between $\mathcal{F}_{t-1}$ and $\mathcal{F}_{t}$ and each layer $\mathcal{F}_{t-1,i}$ contains the information obtained after $(x_{t,i}, u(x_{t,i}))$ is added to the dataset.
\end{definition}

Under Assumption \ref{asmp_noise}, each feedback $u(x_{t,i})$ satisfies
\begin{equation}
    u(x_{t,i}) = f_{\theta^{\star}}(x_{t,i}) + \xi_{t,i}, \tag{\ref{equ:noise} revisited}
\end{equation}
where
\begin{equation*}
    \left. \xi_{t,i} \right\vert \mathcal{F}_{t,i} \sim \mathcal{N} \left(0, \sigma^2 \right).
\end{equation*}

Then bounding $\mathbb{P}\left(  \hat{E}^{\theta^{\star}} \left( \frac{\delta}{2} \right)  \right)$ is a straightforward application of the Theorem 2 in \cite{abbasi2011improved}, wrapped up into the following proposition.

\begin{proposition}
\label{pop:beta_radius}
Under Assumption \ref{asmp_noise}, for $\forall \delta \in (0,1)$ and any $\mathcal{F}_{t,i}$-adapted data sequence $\left( \{x_{0,i}\}_{i = 1}^M , \cdots, \{x_{T-1,i}\}_{i = 1}^M \right)$ s.t. $\|x_{t,i}\| \leq L$,
\begin{equation}
    \mathbb{P}\left( \left. \exists t \in [T]: \left\|\hat \theta_{t}-\theta^{\star}\right\|_{V_{t}} \geq \beta_{t}\left(\delta \right) \right \vert \mathcal{F}_{0}\right) \leq \delta.
\end{equation}
\end{proposition}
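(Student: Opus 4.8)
The plan is to recognize $\hat\theta_t$ as a ridge-regression estimator of $\theta^\star$ and then invoke the self-normalized martingale tail bound of \citet{abbasi2011improved}. First I would rewrite $\hat\theta_t$ in closed form: since $\sigma^2 V_t = \Phi_{t-1}^\top\Phi_{t-1} + \sigma^2\lambda I$ by (\ref{equ:cov_matrix}), equation (\ref{equ:hat_theta}) gives $\hat\theta_t = (\Phi_{t-1}^\top\Phi_{t-1} + \sigma^2\lambda I)^{-1}\Phi_{t-1}^\top U_{t-1}$, i.e.\ the $\ell_2$-regularized least-squares estimate with regularization parameter $\lambda' := \sigma^2\lambda$ and Gram matrix $\bar V_t := \sigma^2 V_t = \Phi_{t-1}^\top\Phi_{t-1} + \lambda' I$. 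Substituting $U_{t-1} = \Phi_{t-1}\theta^\star + \xi_{t-1}$ from Assumption~\ref{asmp_noise} (with $\xi_{t-1}$ the stacked noise vector) and using $\Phi_{t-1}^\top\Phi_{t-1} = \bar V_t - \lambda' I$ yields the standard error decomposition $\hat\theta_t - \theta^\star = \bar V_t^{-1}\Phi_{t-1}^\top\xi_{t-1} - \lambda'\bar V_t^{-1}\theta^\star$, hence by the triangle inequality in the $\bar V_t$-norm
\begin{equation*}
\|\hat\theta_t - \theta^\star\|_{\bar V_t} \le \|\Phi_{t-1}^\top\xi_{t-1}\|_{\bar V_t^{-1}} + \lambda'\|\theta^\star\|_{\bar V_t^{-1}}.
\end{equation*}

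Next I would bound the two terms. For the deterministic bias term, $\bar V_t \succeq \lambda' I$ implies $\bar V_t^{-1}\preceq(\lambda')^{-1}I$, so $\lambda'\|\theta^\star\|_{\bar V_t^{-1}} \le \sqrt{\lambda'}\,\|\theta^\star\| = \sigma\sqrt{\lambda}\,\|\theta^\star\|$. For the stochastic term I would apply the self-normalized tail inequality for vector-valued martingales (Theorem~1 of \citet{abbasi2011improved}) with respect to the refined filtration $\{\mathcal{F}_{t,i}\}$, relative to which each regressor $x_{t,i}$ is predictable and each scalar noise $\xi_{t,i}\sim\mathcal{N}(0,\sigma^2)$ is conditionally mean-zero and $\sigma$-subgaussian; this gives, simultaneously for all $t$ and with probability at least $1-\delta$ conditional on $\mathcal{F}_0$,
\begin{equation*}
\|\Phi_{t-1}^\top\xi_{t-1}\|_{\bar V_t^{-1}} \le \sigma\sqrt{2\log\!\Big(\tfrac{\det(\bar V_t)^{1/2}\det(\lambda' I)^{-1/2}}{\delta}\Big)}.
\end{equation*}
To convert the log-determinant into the explicit form appearing in $\beta_t$, I would use the trace--AM--GM estimate: since each $\|x_{s,i}\|\le L$, $\operatorname{tr}(\bar V_t)\le d\lambda' + tML^2$, so $\det(\bar V_t)\le\big((d\lambda'+tML^2)/d\big)^d$ and therefore $\det(\bar V_t)/\det(\lambda' I)\le\big((d\sigma^2\lambda+tML^2)/(d\sigma^2\lambda)\big)^d$.

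Combining these estimates on the good event and dividing through by $\sigma$ to pass from the $\bar V_t$-norm to the $V_t$-norm,
\begin{equation*}
\|\hat\theta_t - \theta^\star\|_{V_t} = \tfrac1\sigma\|\hat\theta_t - \theta^\star\|_{\bar V_t} \le \sqrt{2\log(1/\delta) + d\log\!\Big(\tfrac{\sigma^2\lambda d + tML^2}{\sigma^2\lambda d}\Big)} + \sqrt\lambda\,\|\theta^\star\|,
\end{equation*}
which is exactly $\beta_t^{\theta^\star}(\delta)$ from (\ref{beta_t}), so the complementary event has probability at most $\delta$. The step I expect to require the most care is the martingale bookkeeping: one must check that the batched update of Algorithm~\ref{alg:one-level(PS)} genuinely satisfies the predictable-sequence hypothesis of the self-normalized bound, which is precisely why the filtration is split into the $M$ intermediate $\sigma$-algebras $\mathcal{F}_{t-1,1}\subset\cdots\subset\mathcal{F}_{t-1,M}$ so that $x_{t,i}$ is $\mathcal{F}_{t,i-1}$-measurable while $\xi_{t,i}$ stays conditionally $\sigma$-subgaussian given $\mathcal{F}_{t,i}$; the rest is the textbook ridge-regression confidence-ellipsoid argument. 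I would also note that no union bound over $t\in[T]$ is needed, since Theorem~1 of \citet{abbasi2011improved} already holds uniformly in $t$ via a stopped-supermartingale argument.
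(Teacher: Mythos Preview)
Your proposal is correct and follows essentially the same route as the paper: both recognize $\hat\theta_t$ as the ridge estimator with regularization $\sigma^2\lambda$, rescale to pass between $V_t$ and $\bar V_t=\sigma^2 V_t$, and invoke the confidence-ellipsoid machinery of \citet{abbasi2011improved}. The only difference is granularity---the paper cites Theorem~2 of \citet{abbasi2011improved} as a black box, whereas you reconstruct that theorem from its ingredients (the self-normalized bound of their Theorem~1, the ridge error decomposition, and the trace/AM--GM determinant estimate).
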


From the result above, we have 
\begin{align*}
    \mathbb{P}\left(  \hat{E}^{\theta^{\star}} \left( \frac{\delta}{2} \right)  \right) &= \mathbb{P}\left(\left. \left\|\hat \theta_{t}-\theta^{\star}\right\|_{V_{t}} \leq \beta_{t}\left( \frac{\delta}{2} \right), \forall t \in [T] \right\vert \theta^{\star} \right)\\
    &= 1- \mathbb{P}\left( \left. \exists t \in [T]: \left\|\hat \theta_{t}-\theta^{\star}\right\|_{V_{t}} \geq \beta_{t}\left( \frac{\delta}{2} \right)\right\vert \theta^{\star}\right)\\
    &\geq 1-\frac{\delta}{2}.
\end{align*}

\textbf{Bounding $\mathbb{P}\left(  \hat{E}^{\theta^{\star}} \left( \frac{\delta}{2} \right)  \right)$.} Recall that $\tilde \theta_{t}$ is sampled from posterior distribution $\mathcal{N}(\mathbf{\hat \theta}_{t}, V^{-1}_{t})$ independently from $\theta^{\star}$, then we have
\begin{equation}
    \left\|\tilde{\theta}_{t}-\hat \theta_{t}\right\|^2_{V_{t}} =  \left\|V_{t}^{\frac{1}{2}} (\tilde{\theta}_{t}-\hat \theta_{t}) \right\|^2, \quad \forall t \in [T]
\end{equation}
where $V_{t}^{\frac{1}{2}} (\tilde{\theta}_{t}-\hat \theta_{t}) \sim \mathcal{N}(0, \mathbf{I})$. Thus $\left\|\tilde{\theta}_{t}-\hat \theta_{t}\right\|^2_{V_{t}} \sim \chi^2_d$ independently from $\theta^{\star}$. From the concentration of $\chi^2_d$ random variable, we have
\begin{equation*}
    \mathbb{P}\left( \chi_d \geq 2\sqrt{d \log \left( \frac{1}{\delta} \right) } + \sqrt{d} \right) \leq \delta.
\end{equation*}
Therefore, by taking a union bound over $\forall t \in [T]$, we have
\begin{align*}
    \mathbb{P}\left( \left. \tilde{E} \left( \frac{\delta}{2} \right) \right \vert \theta^{\star} \right) &= \mathbb{P}\left( \tilde{E} \left( \frac{\delta}{2} \right) \right)\\
    &= \mathbb{P}\left( \left\|\tilde{\theta}_{t}-\hat \theta_{t}\right\|_{V_{t}} \leq \alpha_{t}\left( \frac{\delta}{2} \right), \forall t \in [T] \right)\\
    &\geq 1- \sum_{t=1}^{T} \mathbb{P}\left( \left\|\tilde{\theta}_{t}-\hat \theta_{t}\right\|_{V_{t}} \leq \alpha_{t}\left( \frac{\delta}{2} \right) \right)\\
    &\geq 1- \sum_{t=1}^{T} \frac{\delta}{2T} = 1- \frac{\delta}{2}.
\end{align*}
\end{proof}

\subsection{Proof of Proposition \ref{pop:beta_radius}}
\begin{proof}
Use notation $\tilde V_t:= \Phi^{\top}_{t-1} \Phi_{t-1} + \sigma^2 \lambda I $, then $V_t = \frac{1}{\sigma^2} \tilde V_t$ and $\hat \theta_{t} = {\tilde V_t }^{-1} \Phi^{\top}_{t-1} U_{t-1}$.
According to Theorem 2 in \cite{abbasi2011improved}, for $\forall \delta \in (0,1)$ and any $\mathcal{F}_{t,i}$-adapted data sequence $\left( \{x_{0,i}\}_{i = 1}^M , \cdots, \{x_{T-1,i}\}_{i = 1}^M \right)$ s.t. $\|x_{t,i}\| \leq L$,
\begin{equation*}
    \mathbb{P}\left( \left. \exists t \in [T]: \left\|\hat \theta_{t}-\theta^{\star}\right\|_{\tilde V_{t}} \geq \sigma \cdot \beta_{t}\left(\delta \right) \right \vert \mathcal{F}_{0}\right) \leq \delta.
\end{equation*}
Therefore, since $\left\|\hat \theta_{t}-\theta^{\star}\right\|_{V_{t}} = \left\|\hat \theta_{t}-\theta^{\star}\right\|_{\frac{1}{\sigma^2} \tilde V_{t}} = \frac{1}{\sigma} \left\|\hat \theta_{t}-\theta^{\star}\right\|_{\tilde V_{t}} $
\begin{equation*}
    \mathbb{P}\left( \left. \exists t \in [T]: \left\|\hat \theta_{t}-\theta^{\star}\right\|_{V_{t}} \geq \beta_{t}\left(\delta \right) \right \vert \mathcal{F}_{0}\right) \leq \delta.
\end{equation*}
\end{proof}

\subsection{Proof of Lemma \ref{lmm:batch_update}}
\label{prf_lmm:batch_update}
\begin{proof}
We are about to take a closer look at the incremental increase of the determinant of $V_t$, define $V_{t, l} = \frac{1}{\sigma^2} \left( \sigma^2 \lambda I + \sum_{i=1}^{t-1}\sum_{j=1}^M x_{i,j} x_{i,j}^T + \sum_{j=1}^{l} x_{t,j} x_{t,j}^T \right)$ for $\forall t \in [T], l \in [M]$ and thus $V_{t, 0} = V_t $ .
Mark the time steps where $V_t$ has significant increase in its determinant by $ C := \{t \in [T]: \frac{\det(V_{t+1})}{\det(V_{t})} > 2\}$. Then the prediction errors in $T$ rounds can be divided into two parts as
\begin{equation}
\label{equ:pre_err_div}
    \sum_{t=1}^T \sum_{i=1}^M \left| \langle a_t - b_t, x_{t,i} \rangle \right| = \sum_{t \notin C} \sum_{i=1}^M \left| \langle a_t - b_t, x_{t,i} \rangle \right| + \sum_{t \in C} \sum_{i=1}^M \left| \langle a_t - b_t, x_{t,i} \rangle \right|.
\end{equation}

The first half of (\ref{equ:pre_err_div}) consists of error accumulated in the rounds where $\det(V_{t})$ didn't increased much after having a batch update of size $M$, so we bound this part in the same spirit of bounding the case where only rank-$1$ update happens per round. Result is stated in the following claim.
\begin{claim}
\label{clm:self_1}
The first half of (\ref{equ:pre_err_div}) is bounded by
\begin{equation}
    \sum_{t \notin C} \sum_{i=1}^M   \left| \langle a_t - b_t, x_{t,i} \rangle \right| \leq  \eta_T(\delta) \sqrt{ \frac{2L^2 + 2\lambda}{\lambda}}  \sqrt{ MT \log\left(\frac{\det \left(V_{T+1}\right)}{\det \left(V_1\right)}\right)}.
\end{equation} 
\end{claim}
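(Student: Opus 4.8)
\textbf{Proof proposal for Claim \ref{clm:self_1}.} The plan is to run a \emph{batched} version of the standard self-normalized (elliptical-potential) argument, in which the extra factor $2$ hidden inside $\tfrac{2L^2+2\lambda}{\lambda}$ is exactly the price of the posterior covariance staying frozen at $V_t$ for the whole $t$-th batch --- a price we can only afford on rounds $t\notin C$, by the very definition of $C$. Throughout we may assume $\eta_t(\delta)$ is nondecreasing in $t$ (otherwise replace it by $\max_{s\le t}\eta_s(\delta)$), so $\eta_t(\delta)\le\eta_T(\delta)$.

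\emph{Step 1 (decouple by Cauchy--Schwarz in the $V_t$-norm).} For every $t$ and $i$, Cauchy--Schwarz in the inner product induced by $V_t$ together with the hypothesis $\|a_t-b_t\|_{V_t}\le\eta_t(\delta)$ gives
\[
\left|\langle a_t-b_t,\,x_{t,i}\rangle\right|\;\le\;\|a_t-b_t\|_{V_t}\,\|x_{t,i}\|_{V_t^{-1}}\;\le\;\eta_T(\delta)\,\|x_{t,i}\|_{V_t^{-1}}.
\]

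\emph{Step 2 (frozen norm vs.\ virtual incremental norm --- the crux).} The quantity the log-determinant telescoping controls is $\|x_{t,i}\|_{V_{t,i-1}^{-1}}$, not the frozen $\|x_{t,i}\|_{V_t^{-1}}$. Since $V_t=V_{t,0}\preceq V_{t,i-1}$, the elementary inequality $\|x\|_{A^{-1}}^2\le\frac{\det B}{\det A}\,\|x\|_{B^{-1}}^2$ valid whenever $0\prec A\preceq B$ (diagonalize $A^{-1/2}BA^{-1/2}\succeq I$ and use $\lambda_{\max}\le\det$ for a matrix with all eigenvalues $\ge 1$) yields $\|x_{t,i}\|_{V_t^{-1}}^2\le\frac{\det V_{t,i-1}}{\det V_t}\|x_{t,i}\|_{V_{t,i-1}^{-1}}^2$. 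For $t\notin C$ we have $\det V_{t,i-1}\le\det V_{t,M}=\det V_{t+1}\le 2\det V_t$, hence $\|x_{t,i}\|_{V_t^{-1}}^2\le 2\,\|x_{t,i}\|_{V_{t,i-1}^{-1}}^2$. This comparison is the only place where the batch structure genuinely departs from the rank-one case, and the only place the restriction $t\notin C$ is used.

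\emph{Step 3 (Cauchy--Schwarz over the double sum, then the potential bound).} Combining Steps 1--2 and applying Cauchy--Schwarz over the at most $MT$ summands,
\[
\sum_{t\notin C}\sum_{i=1}^M\left|\langle a_t-b_t,x_{t,i}\rangle\right|\;\le\;\sqrt 2\,\eta_T(\delta)\,\sqrt{MT}\;\Big(\sum_{t=1}^T\sum_{i=1}^M\|x_{t,i}\|_{V_{t,i-1}^{-1}}^2\Big)^{1/2}.
\]
The chain $V_{1,0},V_{1,1},\dots,V_{1,M}=V_{2,0},\dots,V_{T,M}=V_{T+1}$ is a genuine rank-one update sequence, so the matrix-determinant lemma gives $\det V_{t,i}=\det V_{t,i-1}\big(1+\sigma^{-2}\|x_{t,i}\|_{V_{t,i-1}^{-1}}^2\big)$, and telescoping yields $\sum_{t,i}\log\!\big(1+\sigma^{-2}\|x_{t,i}\|_{V_{t,i-1}^{-1}}^2\big)=\log\frac{\det V_{T+1}}{\det V_1}$. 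Since $V_{t,i-1}\succeq\lambda I$ and $\|x_{t,i}\|\le L$, each $\|x_{t,i}\|_{V_{t,i-1}^{-1}}^2\le L^2/\lambda$ lies in a fixed bounded range, so the termwise inequality $z\le\frac{L^2+\lambda}{\lambda}\log(1+z)$ (equivalently $1-1/s\le\log s$ after rescaling) gives $\sum_{t,i}\|x_{t,i}\|_{V_{t,i-1}^{-1}}^2\le\frac{L^2+\lambda}{\lambda}\log\frac{\det V_{T+1}}{\det V_1}$. Substituting into the display produces exactly $\eta_T(\delta)\sqrt{\tfrac{2L^2+2\lambda}{\lambda}}\,\sqrt{MT\log(\det V_{T+1}/\det V_1)}$ --- the factor $2$ coming from Step 2 and the factor $\tfrac{L^2+\lambda}{\lambda}$ from the potential bound.

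\emph{Main obstacle.} The genuine difficulty is Step 2: because the posterior covariance is refreshed only at batch boundaries, the off-the-shelf elliptical-potential lemma does not apply, and one must introduce the virtual matrices $V_{t,i}$ and absorb the factor $2$ on the ``good'' rounds $t\notin C$. The complementary ``bad'' rounds $t\in C$ cannot be treated this way and must be split off and bounded separately --- crudely, via $|C|\le\log_2\frac{\det V_{T+1}}{\det V_1}$ together with $\|x_{t,i}\|_{V_t^{-1}}\le L/\sqrt\lambda$ --- which is the content of the companion part of Lemma \ref{lmm:batch_update} not covered by this statement.
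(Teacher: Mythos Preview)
Your proposal is correct and follows essentially the same route as the paper's own proof: Cauchy--Schwarz in the $V_t$-norm, then Cauchy--Schwarz over the double sum, then the determinant-ratio comparison $\|x_{t,i}\|_{V_t^{-1}}^2\le 2\|x_{t,i}\|_{V_{t,i-1}^{-1}}^2$ on rounds $t\notin C$, and finally the elliptical-potential telescoping. The only cosmetic difference is the order of operations---the paper first passes to $\frac{L^2+\lambda}{\lambda}\log(1+\|x_{t,i}\|_{V_t^{-1}}^2)$ and then swaps to $V_{t,i-1}^{-1}$ inside the logarithm (using $\log(1+2u)\le 2\log(1+u)$), whereas you swap norms first and then apply the $z\le\frac{L^2+\lambda}{\lambda}\log(1+z)$ bound; both orderings yield the identical constant.
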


For the second half of (\ref{equ:pre_err_div}), we are about to bound by showing $|C|$ is small. Notice that
\begin{equation}
\frac{\det \left(V_{T+1}\right)}{\det \left(V_{1}\right)} = \prod_{t=1}^{T} \frac{\det \left(V_{t+1}\right)}{\det \left(V_{t}\right)} \geq \prod_{t \in C} \frac{\det \left(V_{t+1}\right)}{\det \left(V_{t}\right)} \geq 2^{|C|},
\end{equation}
thus $|C|$ should not be greater than $2\log \left(\frac{\det \left(V_{T+1}\right)}{\det \left(V_{1}\right)}\right)$. And for $\forall t \in [T],i\in [M]$
\begin{equation}
    \left| \langle a_t - b_t, x_{t,i} \rangle \right| \leq \| a_t - b_t \|_{V_t} \|x_{t,i}\|_{V_t^{-1}} \leq  \frac{\eta_T(\delta) L}{\sqrt{\lambda}}.
\end{equation}

Putting two parts together, we have
\begin{align*}
    \sum_{t=1}^T \sum_{i=1}^M \left| \langle a_t - b_t, x_{t,i} \rangle \right| =& \sum_{t \notin C} \sum_{i=1}^M \left| \langle a_t - b_t, x_{t,i} \rangle \right| + \sum_{t \in C} \sum_{i=1}^M \left| \langle a_t - b_t, x_{t,i} \rangle \right|\\
    \leq& \eta_T(\delta) \sqrt{ \frac{2L^2 + 2\lambda}{\lambda}}  \sqrt{ MT \log\left(\frac{\det \left(V_{T+1}\right)}{\det \left(V_1\right)}\right)} + \eta_T(\delta) \frac{ L}{\sqrt{\lambda}} |C| M \\
    \leq& \eta_T(\delta) \sqrt{ \frac{2L^2 + 2\lambda}{\lambda}}  \sqrt{ MT \log\left(\frac{\det \left(V_{T+1}\right)}{\det \left(V_1\right)}\right)}  + \eta_T(\delta) \frac{ 2L}{\sqrt{\lambda}} M \log \left(\frac{\det \left(V_{T+1}\right)}{\det \left(V_{1}\right)}\right)\\
    \leq& \eta_T(\delta) \sqrt{ \frac{2L^2 + 2\lambda}{\lambda}}  \sqrt{ dMT \log\left(\frac{\sigma^2 d \lambda + M T L^2}{\sigma^2 d \lambda}\right)}  + \eta_T(\delta) \frac{ 2L}{\sqrt{\lambda}} dM \log\left(\frac{\sigma^2 d \lambda + M T L^2}{\sigma^2 d \lambda}\right).
\end{align*}
where the final line is referring to the result in \cite{abbasi2011improved} that
\begin{equation*}
    \log \left(\frac{\det \left(V_{T+1}\right)}{\det \left(V_{1}\right)}\right) \leq d\log\left(\frac{\sigma^2 d \lambda + M T L^2}{\sigma^2 d \lambda}\right).
\end{equation*}

\end{proof}

\subsection{Proof of Claim \ref{clm:self_1}}
\begin{proof}
With probability $1-\delta$, for $\forall t \in [T]$, normalized by $V_t$, $a_t$ and $b_t$ concentrate around each other with in a radius of $\eta_t(\delta)$, thus
\begin{equation*}
    \left| \langle a_t - b_t, x_{t,i} \rangle \right| \leq \| a_t - b_t \|_{V_t} \|x_{t,i}\|_{V_t^{-1}} \leq \eta_t(\delta) \|x_{t,i}\|_{V_t^{-1}}.
\end{equation*}

Summing over all individuals in time steps $t \notin C$ , we have
\begin{align}
    \nonumber\sum_{t \notin C} \sum_{i=1}^M  \left| \langle a_t - b_t, x_{t,i} \rangle \right| &\leq  \sum_{t \notin C} \sum_{i=1}^M  \eta_t(\delta) \|x_{t,i}\|_{V_t^{-1}} \\
    \nonumber &\leq \eta_T(\delta) \sum_{t \notin C} \sum_{i=1}^M   \|x_{t,i}\|_{V_t^{-1}} \\
    &\nonumber \leq \eta_T(\delta)  \sqrt{MT\sum_{t \notin C} \sum_{i=1}^M \|x_{t,i}\|^2_{V_t^{-1}} }\\
    &\label{equ:x_log1+x}\leq \eta_T(\delta)  \sqrt{\frac{L^2 + \lambda}{\lambda} \cdot  MT\sum_{t \notin C} \sum_{i=1}^M  \log \left(1+\|x_{t,i}\|^2_{V_t^{-1}} \right) },
\end{align}
where (\ref{equ:x_log1+x}) holds because 
\begin{equation*}
    \|x_{t,i}\|^2_{V_t^{-1}} \leq \lambda_{\max}(V_t^{-1}) \|x_{t,i}\|^2 \leq \frac{L^2}{\lambda}.
\end{equation*}

Continuing from (\ref{equ:x_log1+x}), we can substitute the normalization matrix $V_t^{-1}$ with $V_{t,i}^{-1}$, at the cost of inflating by $2$, and then following the classic self-normalized bound on data points. Recall the Lemma 12 in \cite{abbasi2011improved}:
\begin{equation}
    \frac{\left\Vert x \right\Vert^2_{\bA}}{\left\Vert x \right\Vert^2_{\bB}} \leq \frac{\det(\bA)}{\det(\bB)}, \quad \text{if } \bA \succeq \bB.
\end{equation}

Substituting $V_t^{-1}$ with $V_{t,i-1}^{-1}$ in $\|x_{t,i}\|^2_{V_t^{-1}}$, noticing $\frac{\det(V_t^{-1})}{\det(V_{t,i-1}^{-1})} = \frac{\det(V_{t,i-1})}{\det(V_t)} \leq \frac{\det(V_{t+1})}{\det(V_t)} \leq 2$ when $t \not \in C$,  leads to
\begin{align*}
    \|x_{t,i}\|^2_{V_t^{-1}} &\leq 2 \|x_{t,i}\|^2_{V_{t,i}^{-1}},\\
    \log \left(1 + \|x_{t,i}\|^2_{V_t^{-1}} \right) &\leq  \log \left( 1 + 2\|x_{t,i}\|^2_{V_{t,i-1}^{-1}} \right)\\
    &\leq  2 \log \left( 1 + \|x_{t,i}\|^2_{V_{t,i-1}^{-1}} \right).
\end{align*}
Then it follows the self-normalized bound in \cite{abbasi2011improved} and gives that
\begin{align*}
  \sum_{t \notin C} \sum_{i=1}^M \log(1+\|x_{t,i}\|^2_{V_t^{-1}})  &\leq 2 \sum_{t \notin C} \sum_{i=1}^M \log \left( 1 + \|x_{t,i}\|^2_{V_{t,i-1}^{-1}} \right) \\
  &\leq 2 \sum_{t = 1}^T \sum_{i=1}^M \log \left( 1 + \|x_{t,i}\|^2_{V_{t,i-1}^{-1}} \right) \\
  &\leq 2\log \left(\frac{\det \left(V_{T+1}\right)}{\det \left(V_1\right)}\right).
\end{align*}

Therefore, the first half of (\ref{equ:pre_err_div}) is bounded by
\begin{equation*}
    \sum_{t \notin C} \sum_{i=1}^M  \left| \langle a_t - b_t, x_{t,i} \rangle \right| \leq   \eta_T(\delta) \sqrt{ \frac{2L^2 + 2\lambda}{\lambda}}  \sqrt{ MT \log\left(\frac{\det \left(V_{T+1}\right)}{\det \left(V_1\right)}\right)}.
\end{equation*}
\end{proof}

\subsection{Proof of Lemma \ref{lmm:E_theta_t_l2}}
\label{prf_lmm:E_theta_t_l2}
\begin{proof}
By triangle inequality
\begin{equation*}
    \mathbb{E}_{E^{\theta^{\star}}} \left[ \| \tilde \theta_t \| \right] \leq  \| \theta^{\star} \| + \mathbb{E}_{E^{\theta^{\star}}} \left[ \| \tilde \theta_t - \hat \theta_t \| \right] + \mathbb{E}_{E^{\theta^{\star}}} \left[ \| \hat \theta_t - \theta^{\star} \| \right] .
\end{equation*}

Along with $\lambda_{\min} (V_t) \geq \lambda$ since $V_t$ is regularized with $\lambda I$ in its definition, then we have
\begin{equation*}
    \mathbb{E}_{E^{\theta^{\star}}} \left[ \| \tilde \theta_t \| \right] \leq  \| \theta^{\star} \| + \frac{1}{\sqrt{\lambda}} \mathbb{E}_{E^{\theta^{\star}}} \left[ \| \tilde \theta_t - \hat \theta_t \|_{V_t} \right] + \frac{1}{\sqrt{\lambda}} \mathbb{E}_{E^{\theta^{\star}}} \left[ \| \hat \theta_t - \theta^{\star} \|_{V_t} \right] .
\end{equation*}

And in event $E^{\theta^{\star}}:= E^{\theta^{\star}} \left( \frac{\delta}{2} \right) \cap E^{\theta^{\star}}_{\operatorname{DM}}$, $E^{\theta^{\star}}_{\operatorname{DM}}$ is independent from the sampling of $\tilde \theta_t$, and conditioned on event $E^{\theta^{\star}} \left( \frac{\delta}{2} \right)$, both $\| \tilde \theta_t - \hat \theta_t \|_{V_t}$ and $\| \hat \theta_t - \theta^{\star} \|_{V_t}$ tend to be smaller than it is unconditionally. Thus, we lift the condition on $E^{\theta^{\star}}$ to get an upper bound as
\begin{equation}
\label{equ:TS_l2_ub}
    \mathbb{E}_{E^{\theta^{\star}}} \left[ \| \tilde \theta_t \| \right] \leq  \| \theta^{\star} \| + \frac{1}{\sqrt{\lambda}} \mathbb{E}_{\theta^{\star}} \left[ \| \tilde \theta_t - \hat \theta_t \|_{V_t} \right] + \frac{1}{\sqrt{\lambda}} \mathbb{E}_{\theta^{\star}} \left[ \| \hat \theta_t - \theta^{\star} \|_{V_t} \right] .
\end{equation}
Recall that conditioned on any realization of $\theta^{\star}$, $\tilde \theta_t$ is sampled from
\begin{align}
    \tag{\ref{equ:TS_dist} revisited} \tilde{\theta}_{t} &\sim \mathcal{N}(\hat \theta_{t}, V^{-1}_{t})
\end{align}
with 
\begin{align}
    \tag{\ref{equ:cov_matrix} revisited} V_{t} &= \frac{1}{\sigma^2} \Phi^{\top}_{t-1} \Phi_{t-1} + \lambda I,\\
    \tag{\ref{equ:hat_theta} revisited} \hat \theta_{t} &= \frac{1}{\sigma^2} V^{-1}_{t} \Phi^{\top}_{t-1} U_{t-1}. 
\end{align}
So $\left\|\tilde{\theta}_{t}-\hat \theta_{t}\right\|^2_{V_{t}} \sim \chi^2_d$ independent from $\theta^{\star}$ and thus
\begin{equation}
\label{equ:TS_l_Vt}
    \mathbb{E}_{\theta^{\star}} \left[ \| \tilde \theta_t - \hat \theta_t \|_{V_t} \right] \leq \sqrt{d}.
\end{equation}

Also, from (\ref{equ:hat_theta}), let $U_{t-1} = \Phi_{t-1} \theta^{\star} + \xi_{t-1}$ and $\xi_{t-1}$ be the corresponding noise vector, then $\hat \theta_t - \theta^{\star}$ is computed as
\begin{align*}
    \hat \theta_t - \theta^{\star} =& \frac{1}{\sigma^2} V^{-1}_{t} \Phi^{\top}_{t-1} U_{t-1} - \theta^{\star}\\
    \overset{\ref{equ:cov_matrix}}{=}& \left( \Phi^{\top}_{t-1} \Phi_{t-1} + \sigma^2 \lambda I \right)^{-1} \Phi^{\top}_{t-1} U_{t-1} - \theta^{\star}\\
    =& \left( \Phi^{\top}_{t-1} \Phi_{t-1} + \sigma^2 \lambda I \right)^{-1} \Phi^{\top}_{t-1} (\Phi_{t-1} \theta^{\star} + \xi_{t-1}) - \theta^{\star}\\
    =& \left( \Phi^{\top}_{t-1} \Phi_{t-1} + \sigma^2 \lambda I \right)^{-1} \Phi^{\top}_{t-1}\xi_{t-1} - \sigma^2 \lambda \left( \Phi^{\top}_{t-1} \Phi_{t-1} + \sigma^2 \lambda I \right)^{-1} \theta^{\star}\\
    =& \frac{1}{\sigma^2} V^{-1}_{t} \Phi^{\top}_{t-1} \xi_{t-1} - \lambda V^{-1}_{t} \theta^{\star}.
\end{align*}

Thus,
\begin{equation}
\label{equ:l_Vt}
    \mathbb{E}_{\theta^{\star}} \left[ \| \hat \theta_t - \theta^{\star} \|_{V_t} \right] \leq \frac{1}{\sigma^2} \mathbb{E}_{\theta^{\star}} \left[ \|  V^{-1}_{t} \Phi^{\top}_{t-1} \xi_{t-1} \|_{V_t} \right] + \lambda \mathbb{E}_{\theta^{\star}} \left[ \| V^{-1}_{t} \theta^{\star}\|_{V_t} \right],
\end{equation}
where 
\begin{equation}
\label{equ:l_Vt_p1}
    \mathbb{E}_{\theta^{\star}} \left[ \| V^{-1}_{t} \theta^{\star}\|_{V_t} \right] = \mathbb{E}_{\theta^{\star}} \left[ \sqrt{{\theta^{\star}}^{ \top} V^{-1}_{t} \theta^{\star}} \right] \leq \frac{1}{\sqrt{\lambda}} \|\theta^{\star}\|,
\end{equation}
and 
\begin{align*}
    \mathbb{E}_{\theta^{\star}} \left[ \|  V^{-1}_{t} \Phi^{\top}_{t-1} \xi_{t-1} \|_{V_t} \right] =& \mathbb{E}_{\theta^{\star}} \left[  \sqrt{\xi_{t-1}^{\top} \Phi_{t-1} V^{-1}_{t} \Phi^{\top}_{t-1} \xi_{t-1} } \right]\\
    = &\mathbb{E}_{\theta^{\star}} \left[ \mathbb{E} \left[ \left.  \sqrt{\xi_{t-1}^{\top} \Phi_{t-1} V^{-1}_{t} \Phi^{\top}_{t-1} \xi_{t-1} } \right \vert \Phi_{t-1} \right] \right]\\
    \overset{v:= \Phi^{\top}_{t-1} \xi_{t-1}}{=} &\mathbb{E}_{\theta^{\star}} \left[\mathbb{E} \left[  \left. \sqrt{v^{\top} V^{-1}_{t} v} \right \vert \Phi_{t-1}\right] \right],
\end{align*}
with $v \in \mathbb{R}^{d}$ following the distribution $\mathcal{N}(0, \sigma^2 \Phi^{\top}_{t-1} \Phi_{t-1})$ conditioned on $\Phi_{t-1}$ because the noise vector $\xi_{t-1} \mid \Phi_{t-1} \sim \mathcal{N}(0, \sigma^2 I)$. Recall $ V_{t} = \frac{1}{\sigma^2} \Phi^{\top}_{t-1} \Phi_{t-1} + \lambda I$, therefore
\begin{equation}
\label{equ:l_Vt_p2}
    \mathbb{E}_{\theta^{\star}} \left[ \|  V^{-1}_{t} \Phi^{\top}_{t-1} \xi_{t-1} \|_{V_t} \right] \leq \sigma^2 \sqrt{d}.
\end{equation}
Plugging (\ref{equ:l_Vt_p1}) and (\ref{equ:l_Vt_p2}) into (\ref{equ:l_Vt}), we have
\begin{equation}
    \mathbb{E}_{\theta^{\star}} \left[ \| \hat \theta_t - \theta^{\star} \|_{V_t} \right] \leq \sqrt{d} + \sqrt{\lambda} \|\theta^{\star}\|.
\end{equation}
Then plug the inequality above together with (\ref{equ:TS_l_Vt}) into (\ref{equ:TS_l2_ub}), we finally arrive at
\begin{equation*}
    \mathbb{E}_{E^{\theta^{\star}}} \left[ \| \tilde \theta_t \| \right] \leq  2 \| \theta^{\star} \| + 2 \sqrt{\frac{d}{\lambda}}.
\end{equation*}
\end{proof}

\subsection{Proof of Corollary \ref{cor:consecutive_diff}}
\label{prf_cor:consecutive_diff}
\begin{proof}
As shown in Lemma \ref{lmm:hp_events}, conditioned on event $E^{\theta^{\star}} \left( \frac{\delta}{2} \right)$, TS estimate $\tilde\theta_t$s are not far away from $\theta^{\star}$ simultaneously:
\begin{equation}
\label{equ:CE}
    \| \tilde\theta_t -\theta^{\star} \|_{V_{t}} \leq \beta^{\theta^{\star}}_{t} \left(\frac{\delta}{2} \right) + \alpha_{t} \left(\frac{\delta}{2} \right), \quad \forall t \in [T].
\end{equation}
Thus for $\forall t \in [T-1]$, $\tilde\theta_t$ should not be far away from $\tilde\theta_{t-1}$ with the same high probability.
From equation (\ref{equ:CE}), we have
\begin{align*}
     \| \tilde\theta_t -\theta^{\star} \|_{V_{t}} &\leq \beta^{\theta^{\star}}_{t} \left(\frac{\delta}{2} \right) + \alpha_{t} \left(\frac{\delta}{2} \right),\\
     \| \tilde\theta_{t+1} -\theta^{\star} \|_{V_{t}} &\leq \| \tilde\theta_{t+1} -\theta^{\star} \|_{V_{t+1}} \leq \beta^{\theta^{\star}}_{t+1} \left(\frac{\delta}{2} \right) + \alpha_{t+1} \left(\frac{\delta}{2} \right)
\end{align*}

By the triangle inequality of norm $\|\cdot\|_{V_{t}}$, it holds that
\begin{align*}
     \| \tilde\theta_t -\tilde\theta_{t+1} \|_{V_{t}} &\leq \| \tilde\theta_t -\theta^{\star} \|_{V_{t}} +  \| \tilde\theta_{t+1} -\theta^{\star} \|_{V_{t}}\\
     &\leq 2\beta^{\theta^{\star}}_{t+1} \left(\frac{\delta}{2} \right) + 2\alpha_{t+1} \left(\frac{\delta}{2} \right),
\end{align*}
where the last inequality holds due to the monotonicity in $\{\beta^{\theta^{\star}}_{t} \left(\frac{\delta}{2} \right)\}_{t=1}^{T}$ and $\{\alpha_{t} \left(\frac{\delta}{2} \right)\}_{t=1}^{T}$.

Therefore, we have built up the confidence ellipsoid for $\tilde\theta_t -\tilde\theta_{t+1}$ as
\begin{equation*}
    \| \tilde\theta_t -\tilde\theta_{t+1} \|_{V_{t}} \leq 2\beta^{\theta^{\star}}_{t+1} \left(\frac{\delta}{2} \right) + 2\alpha_{t+1} \left(\frac{\delta}{2} \right),
\end{equation*}
which fits into the condition of Lemma \ref{lmm:batch_update} and leads to the result that
\begin{equation*}
     \sum_{t = 1}^{T-1} \sum_{i = 1}^{M}  \left|\langle \tilde \theta_{t} -  \tilde \theta_{t+1}, x_{t,i} \rangle \right| \leq \operatorname{RGT} \left( 2\beta^{\theta^{\star}}_{T} \left(\frac{\delta}{2} \right) + 2\alpha_{T} \left(\frac{\delta}{2} \right) \right) .
\end{equation*}
\end{proof}

\subsection{Proof of Claim \ref{clm:RGT_ord}}
\label{prf_clm:RGT_ord}
\begin{proof}
\begin{itemize}
    \item $\mathbb{E}_{\theta^{\star} \sim \pi} \left[ B_f^{\theta^{\star}} \right]$.
    
    Recall that $\theta^{\star}$ is coming from the prior $\mathcal{N} \left(\mathbf{0}, \lambda^{-1} \mathbf{I} \right)$, so $\theta^{ \star(i)} \overset{\text{i.i.d.}}{\sim} \mathcal{N} \left( 0, \lambda^{-\frac{1}{2}} \right)$, then
    \begin{align*}
        &\mathbb{E} \left[ \|\theta^{\star}\| \right] \leq \sqrt{\mathbb{E} \left[ \|\theta^{\star}\|^2 \right]} = \sqrt{\frac{d}{\lambda}},\\
        &\mathbb{E} \left[  B^{\theta^{\star}}_f \right] = 2 L \cdot \mathbb{E} \left[ \|\theta^{\star}\| \right] = O \left( \sqrt{\frac{d}{\lambda}} L\right).
    \end{align*}
    
    \item $\mathbb{E}_{\theta^{\star} \sim \pi} \left[ \operatorname{RGT}  \left( \beta^{\theta^{\star}}_T\left(\frac{1}{2T}\right) + \alpha_T\left(\frac{1}{2T}\right) \right) \right]$.
    
    Recall from (\ref{def:pred_err})the definition of $\operatorname{RGT} \left( \eta_T(\delta) \right)$ as
    \begin{equation*}
        \operatorname{RGT} \left( \eta_T(\delta) \right) = \eta_T(\delta) \sqrt{ \frac{2L^2 + 2\lambda}{\lambda}} \cdot \sqrt{ dMT \log\left(\frac{\sigma^2 d \lambda + M T L^2}{\sigma^2 d \lambda}\right)}  + \eta_T(\delta) \frac{ 2L}{\sqrt{\lambda}} \cdot dM \log\left(\frac{\sigma^2 d \lambda + M T L^2}{\sigma^2 d \lambda}\right)),
    \end{equation*}
    in which only term $\eta_T(\delta)$ is $\theta^{\star}$ dependent.
    
    Also recall the definitions of $\beta^{\theta^{\star}}_T(\delta)$ and $\alpha_T(\delta)$ from (\ref{beta_t}) and (\ref{alpha_t}), we have
    \begin{align}
        &\mathbb{E}_{\theta^{\star} \sim \pi} \left[ \beta^{\theta^{\star}}_T\left(\frac{1}{2T}\right) \right] \leq  \sqrt{2 \log \left( 2T \right) + d \log\left(\frac{\sigma^2 \lambda d + T M L^2}{ \sigma^2 \lambda d}\right)} +  \sqrt{d},\\
        &\mathbb{E}_{\theta^{\star} \sim \pi} \left[ \alpha_T\left(\frac{1}{2T}\right) \right] =  2\sqrt{2 d \log \left( 2T\right) } + \sqrt{d}.
    \end{align}
    Plugging into $\eta_T(\delta) = \beta^{\theta^{\star}}_T(\delta) + \alpha_T(\delta)$, then
    \begin{align}
        \nonumber\mathbb{E}_{\theta^{\star} \sim \pi} \left[ \eta_T\left(\frac{1}{2T}\right) \right] &= \mathbb{E}_{\theta^{\star} \sim \pi} \left[ \beta^{\theta^{\star}}_T\left(\frac{1}{2T}\right) \right] + \mathbb{E}_{\theta^{\star} \sim \pi} \left[ \alpha_T\left(\frac{1}{2T}\right) \right]\\
        \nonumber&\leq \sqrt{2 \log \left( 2T \right) + d \log\left(\frac{\sigma^2 \lambda d + T M L^2}{ \sigma^2 \lambda d}\right)} + \sqrt{d} +  2\sqrt{2 d \log \left( 2T\right) } + \sqrt{d}\\
        \label{equ:eta_exp}&= O\left( \sqrt{d \log\left(\frac{\sigma^2 \lambda d + T M L^2}{ \sigma^2 \lambda d}\right)} \right).
    \end{align}
    Therefore, we bound the order of $\mathbb{E}_{\theta^{\star} \sim \pi} \left[ \operatorname{RGT}  \left( \beta^{\theta^{\star}}_T\left(\frac{1}{2T}\right) + \alpha_T\left(\frac{1}{2T}\right) \right) \right]$ by
    \begin{align*}
        &\mathbb{E}_{\theta^{\star} \sim \pi} \left[ \operatorname{RGT}  \left( \beta^{\theta^{\star}}_T\left(\frac{1}{2T}\right) + \alpha_T\left(\frac{1}{2T}\right) \right) \right] \\
        =& O\left( \frac{ L}{\sqrt{\lambda}} d \sqrt{ M T} \log\left(\frac{\sigma^2 \lambda d + T M L^2}{ \sigma^2 \lambda d}\right) \right) + O\left(\frac{L}{\sqrt{\lambda}} d^{\frac{3}{2}} M \log\left(\frac{\sigma^2 \lambda d + T M L^2}{ \sigma^2 \lambda d}\right)\right).
    \end{align*}
    \end{itemize}
 \end{proof}

\end{document}